\def\eqref#1{equation~\ref{#1}}
\def\1{\bm{1}}
\def\eps{{\epsilon}}
\DeclareMathAlphabet{\mathsfit}{\encodingdefault}{\sfdefault}{m}{sl}
\SetMathAlphabet{\mathsfit}{bold}{\encodingdefault}{\sfdefault}{bx}{n}
\providecommand{\one}{\mathbf{1}}
\newcommand{\nn}{\notag}
\newcommand{\Dc}{\mathcal{D}}
\newcommand{\poly}{\mathrm{poly}}
\newcommand{\mup}{\mu_{+}^k}
\newcommand{\mum}{\mu_{-}^k}
\newcommand*{\rom}[1]{\expandafter\@slowromancap\romannumeral #1@}
\newcommand{\mathleft}{\@fleqntrue\@mathmargin0pt}
\newcommand{\mathcenter}{\@fleqnfalse}
\newcommand{\ssymbol}[1]{^{\@fnsymbol{#1}}}
\newcommand{\E}{\mathbb{E}}
\newcommand{\R}{\mathbb{R}}
\DeclareMathOperator{\sign}{sign}
\newcommand{\footremember}[2]{%
    \footnote{#2}
    \newcounter{#1}
    \setcounter{#1}{\value{footnote}}%
}
\pgfplotsset{compat=newest}
\definecolor{darkred}{RGB}{230,0,0}
\definecolor{darkgreen}{RGB}{0,150,0}
\definecolor{darkblue}{RGB}{0,0,150}
\theoremstyle{plain}
\newtheorem{theorem}{Theorem}[section]
\newtheorem{proposition}[theorem]{Proposition}
\newtheorem{lemma}[theorem]{Lemma}
\theoremstyle{definition}
\theoremstyle{remark}
\newtheorem{remark}[theorem]{Remark}
\title{On the Theory of Continual Learning with Gradient Descent for Neural Networks}
\author{Hossein Taheri\footremember{ht}{Department of Computer Science and Engineering, University of California, San Diego. Email: \href{mailto:htaheri@ucsd.edu}{htaheri@ucsd.edu}}, \;\; Avishek Ghosh\footremember{ct}{Department of Computer Science and Engineering, IIT, Bombay. Email: \href{mailto:avishek_ghosh@iitb.ac.in}{avishek\_ghosh@iitb.ac.in}},\;\; and\;\; Arya Mazumdar\footremember{am}{Halicio\u glu Data Science Institute, University of California, San Diego. Email: \href{mailto:arya@ucsd.edu}{arya@ucsd.edu}}}
\begin{document}
\maketitle
\begin{abstract}
Continual learning, the ability of a model to adapt to an ongoing sequence of tasks without forgetting earlier ones, is a central goal of artificial intelligence. To better understand its underlying mechanisms, we study the limitations of continual learning in a tractable yet representative setting. Specifically, we analyze one-hidden-layer quadratic neural networks trained by gradient descent on a sequence of XOR-cluster datasets with Gaussian noise, where different tasks correspond to clusters with orthogonal means. Our analysis is based on a tight characterization of gradient descent dynamics for the training loss, which yields explicit bounds on the rate of train-time forgetting as functions of the number of iterations, sample size, number of tasks, and hidden-layer width. We then leverage an algorithmic stability framework to bound the generalization gap, leading to corresponding guarantees on test-time forgetting. Together, our results provide the first closed-form  guarantees for forgetting in continual learning with neural networks and show how key problem parameters jointly govern forgetting dynamics. Numerical experiments corroborate our theoretical results.
\end{abstract}
\section{Introduction}
\subsection{Motivation}
Gradient-based methods are the dominant paradigm for training neural networks, and recent advances in learning theory have shown that such models can efficiently learn a wide range of data distributions through empirical risk minimization (ERM). However, in many real-world applications, data are not presented all at once but arrive sequentially in a non-stationary fashion, requiring the learner to maintain performance on past tasks while acquiring new capabilities. In such cases, a learning model must be continually learnable, meaning it should retain previously acquired knowledge when trained on new tasks. On the other hand, various learning systems, including deep learning architectures, can be prone to \emph{catastrophic forgetting}, that is, updating a model on new data causes a dramatic drop in performance on previously learned tasks \cite{catastrophic_MCCLOSKEY,goodfellow2013empirical}. The goal of continual (lifelong) learning is to experience minimal forgetting when incorporating new information, even without retraining on old data. 
\par
Despite the importance of continual learning for modern AI systems, the theoretical works studying the mechanisms behind forgetting are limited. A growing body of recent works have developed guarantees for continual learning in linear models and generalized linear models, often under realizability conditions. In these settings, a single linear predictor can interpolate data from all tasks, and forgetting can be controlled either through implicit bias or exact closed-form ERM solutions \cite{evron2023continual,evron2022catastrophic, banayeeanzadetheoretical, litheory}. While these results provide valuable insight into the mechanisms of forgetting, they rely on linear structure crucially - in particular on the fact that statistical and optimization properties of linear models are already very well-understood. 
\par
Our work goes beyond  linear models by providing the first closed-form forgetting guarantees for neural networks trained by gradient descent. Our analysis reveals that for neural networks, forgetting is governed by the interplay between sample size of subsequent tasks, hidden-layer width, and early stopping, and that none of these factors alone is sufficient to eliminate forgetting, which does not have an analog in previous works. Our results explicitly track the evolution of the weights across tasks and show that, despite tasks being independent and orthogonal, \emph{learning later tasks can suppress forgetting} depending on how their sample sizes and training horizons scale. This is verified by our empirical observations in several setups that increasing the dataset size of later tasks can significantly reduce or vanish forgetting of earlier ones. Moreover, unlike prior works, our framework yields guarantees for both \emph{train-time} and \emph{test-time} forgetting. We decompose test-time forgetting into a training-loss component and a delayed generalization gap induced by intermediate tasks. By combining a fine-grained analysis of gradient descent dynamics with an algorithmic stability argument tailored to continual learning, we derive explicit conditions under which both terms vanish. This decomposition reveals that forgetting may persist unless the network width, number of samples, and training horizon scale appropriately with the key problem parameters such as the number of tasks. 

While several recent works study the convergence and sample-complexity of gradient methods on neural networks under stylized data distributions, they are largely restricted to single-task settings, leaving the role of optimization dynamics in continual learning poorly understood (for some examples see \cite{du2019gradient,Bartlett2021DeepLA,damian2022neural,abbe2022merged}). In this work, we present several results on the performance of gradient descent in neural networks in the kernel regime for scenarios where there is a stream of tasks on which the model is sequentially trained. Focusing on unregularized empirical risk minimization, we identify regimes in which gradient descent, without explicit continual-learning regularization, achieves arbitrarily small train-time and test-time forgetting. To this end, \emph{we study a representative data model based on XOR clusters, a canonical and non-linearly separable model that  underlies multi-index and parity learning problems, and characterize the sample, iteration, and model-complexity requirements for successful continual learning}. Although our focus is on the XOR cluster distribution, our approach is applicable to various data distributions, and we expect similar insights can be derived for various single- or multi-index models. 
We further show that in the kernel regime, the regularized continual learning algorithm does not mitigate forgetting, as it is equivalent to unregularized ERM with a rescaled step size. Consequently, our theoretical predictions continue to hold in the presence of regularization.

To establish these results, we develop a principled analysis that integrates optimization dynamics with generalization arguments, allowing us to explicitly characterize how training, data, and model parameters jointly govern forgetting in continual learning.

\paragraph{Techniques and Contributions.}
Our method is based on the decomposition of the test-time forgetting error into two terms based on forgetting in training loss and the delayed generalization gap caused by intermediate learning tasks. First, we bound the generalization gap by an argument based on algorithmic stability \cite{bousquet2002stability,lei2020fine,richards2021learning,ttm} tailored to our set-up of continual learning with neural nets, which leads to conditions on the network width and the number of iterations and samples to achieve a small generalization error after learning independent intermediate tasks from distinct distributions. Our results reveal that generalization gap for continual learning with neural networks is impacted by the training loss of later tasks (as in Thm \ref{thm:gen_gap}) or number of tasks (as in Thm \ref{cor:gen_gap}) which is new compared to single-task analyses. We then use a data-specific argument to formulate the evolution of the learned weights throughout the gradient descent steps to bound the training loss and forgetting  in Theorems \ref{thm:train_forgetting}-\ref{thm:train_error}. In particular, we first consider an asymptotic regime where for both the sample size and network size $m,n\rightarrow\infty$. The critical observation here is that in this regime, for every task, the gradient at initialization is in the correct direction, and with sufficient number of GD steps, the train loss and the amount of forgetting(i.e., the increase in training loss caused by learning later tasks) are asymptotically zero. As a result of this and with concentration bounds for finite $n$ and $m$, we are able to characterize the rate of forgetting based on these parameters. This differs from the existing analyses of neural nets for single-task classification setups in the lazy regime which are mainly based on class margin \cite{nitanda2019gradient,ji2019polylogarithmic,tt24}. 
To the best of our knowledge, our results are the first closed-form guarantees for the train and test performance of continual learning methods when using neural networks and they predict several of the empirical observations on the role of training-set size and over-parameterization. In summary, our contributions are the following:
\begin{itemize}[left=2pt]
\item We study the role of key problem parameters such as sample complexity and over-parameterization for continual learning with neural networks. Specifically, for the $d$-dimensional XOR-cluster dataset, we derive explicit bounds on train-time forgetting after learning $K$ subsequent tasks, showing it scales as $\widetilde O(\eta T \frac{\sqrt{K}}{d\sqrt{n}} + \eta T \frac{\sqrt{K}}{d^2\mathrm{polylog}(d)} + \eta^2 T^2 \frac{K^2}{\sqrt m}),$
where $T$ and $n$ denote the number of gradient descent iterations and samples per task, respectively, and $m$ is the hidden-layer width.
\item We characterize the sample and computational requirements for successful continual learning. In particular, we show that choosing
$n=\widetilde\Theta(d^2 K), m =\Theta(d^8K^4), T=\widetilde\Theta(d^2)$ is sufficient to ensure uniformly small training loss across all $K$ tasks, yielding vanishing train-time forgetting.
\item By decomposing test-time forgetting into a train-time forgetting term and a delayed generalization gap, and bounding the latter via an algorithmic-stability analysis tailored to continual learning, we show that the above scaling of $n,m$, and $T$ also leads to vanishing test-time forgetting.
\item We validate our theoretical predictions empirically. Numerical experiments on different losses, activation functions, datasets, and architectures corroborate our analysis and demonstrate that the identified roles of sample size, over-parameterization, and early stopping persist in different settings.
\end{itemize}
Overall, our results identify a positive regime for continual learning with neural nets and provide first known explicit conditions for successful continual learning with plain gradient methods, going beyond the linear models considered in the literature. 

\subsection{Related Works}
Despite extensive empirical and algorithmic progress in continual learning, a principled theoretical understanding of catastrophic forgetting in neural networks trained by gradient descent is still largely missing. The main algorithms for continual learning are based on functional (or architectural) regularization \cite{li2017learning,kirkpatrick,sharif2014cnn} or experience replaying \cite{Schaul2015PrioritizedER,rolnick2019experience}. In order to mitigate forgetting, regularization-based methods enforce the new solutions to remain close to solutions to previous tasks. On the other hand, it has been hypothesized that the network width has a similar impact \cite{graldi2024to}, since increasing the width enables the network to operate in the lazy/kernel regime where it is known that the network's weights do not travel a significant distance from their initialization point and the features remain constant during training \cite{jacot2018neural,ghorbani2019limitations}. It is therefore natural to ask to what extent the width helps continual learning. Few works have dealt with this question. In particular, the impact of the width of the network on continual learning was studied in \cite{guha2024diminishing,graldi2024to,mirzadeh2022wide,mirzadeh2022architecture,wenger2023disconnect}. For instance, \cite{mirzadeh2022wide,mirzadeh2022architecture} empirically observed the impact of width in improving catastrophic forgetting and noticed that increasing the width always mitigates forgetting. However, \cite{wenger2023disconnect} claimed that such improvements vanish when the network is trained for a sufficiently large number of iterations until convergence. More recently, \cite{graldi2024to} attempted to resolve the issue claiming that improvements only happen in the kernel regime, where there is early stopping to avoid weights moving a significant distance from their initialization. Our theoretical and empirical results on the impact width also verify the benefits of width in the kernel regime with early stopping. 
\par
\cite{guha2024diminishing} showed analytically through a general argument that increasing the width helps continual learning, although the improvements shrink as width grows. The dependence on width in their bound is not explicitly determined, and moreover, the bound does not depend on the underlying algorithm or number of samples. In contrast, our analysis is algorithm-dependent and yields closed-form bounds, explicitly highlighting the roles of different problem parameters such as over-parameterization in test-time forgetting.
\par
Perhaps the closest works to ours are \cite{doan2021theoretical,bennani2020generalisation,lee2021continual}; see also \cite{Karakida2021LearningCF}, which derived general expressions to characterize forgetting in neural networks in the lazy regime. { A recent work by \cite{LiWangLiu25} focuses specifically on CNNs in a multi‐view data model and characterizes forgetting. \cite{BenjaminPehleDaruwalla24} approach uses an “ensemble/NTK” perspective treating networks in the lazy regime and gives a reinterpretation of continual learning. \cite{CaoLiuVempala22}  derived sample complexity of continually learning linear models and GLMs. \cite{AndleYasaeiSekeh22} focus on layer‐wise information flow and develop a probabilistic theory for CL performance across layers. However, the prior works discussed above do \emph{not} lead to closed-form bounds and are applicable for different models such as CNNs, while our results yield the first bounds for a multi-index model learned by neural nets.}
 
\par
Another related line of work has focused on linear classification/regression in the realizable regime, where a single linear solution can interpolate data from all tasks \cite{goldfarb2023analysis,lin2023theory,evron2023continual,banayeeanzadetheoretical}. In particular, \cite{evron2023continual} analyzed catastrophic forgetting through the lens of implicit bias in linear classification across various setups, including cyclic and random task orderings. Since these approaches are distribution-independent, they do not reveal the role of sample complexity or over-parameterization. In contrast, we adopt a more practical perspective by examining sample complexity, early stopping, and the effects of over-parameterization in a stylized neural network setting.

\subsection*{Notation} We use the standard complexity notation $\lesssim,o(\cdot),O(\cdot
),\Theta(\cdot),\Omega(\cdot
)$ and denote $\widetilde{o}(\cdot), \widetilde{O}(\cdot
),\widetilde\Theta(\cdot),\widetilde\Omega(\cdot
)$ to hide poly-logarithmic factors in $d$. The subscripts in $O_d(\cdot),o_d(\cdot)$ denote the dependence on the parameter $d.$ We use $\|\cdot\|$ for the $\ell_2$ norm of vectors. We denote $[n]:=\{1,2,\cdots,n\}.$ The expectation and probability with respect to the randomness in $\mathcal{D}$ are denoted by $\E_\mathcal{D}[\cdot],\Pr_\mathcal{D}(\cdot)$. The gradient of the model $\Phi:\R^{p\times d}\rightarrow \R$ with respect to the first input (weights) is denoted by $\nabla \Phi$. 
\section{Main results}
\subsection{Problem Setup}\label{sec:prst}
\subsubsection{Gradient-Based Continual Learning with Neural Networks}

We consider the problem of sequentially learning $K$ independent tasks, where each task is trained in isolation. Specifically, for the $k$-th task, we perform $T$ iterations of gradient descent using a dataset of $n$ training samples. The objective of task $k$ is defined as  
\[
\widehat{F}(w, \mathcal{D}_k) = \frac{1}{n} \sum_{i=1}^n f\big( y_i \, \Phi(w, x_i) \big),
\]  
where $\mathcal{D}_k = \{(x_i, y_i)\}_{i=1}^n$ denotes the set of training examples for task $k$, and the mapping $\Phi$ represents a two-layer neural network with $m$ hidden neurons and activation $\phi$, given by 
$\Phi(w, x) = \frac{1}{\sqrt{m}} \sum_{i=1}^m a_i \, \phi(x^\top w_i).$
Throughout the paper, we assume that the output layer coefficients $a_i \in \{\pm 1\}$ are fixed, { let $f$ be the hinge-loss} and we focus on the case of quadratic activation where $\phi(t) = t^2/2$. For convenience, we denote the empirical loss for task $k$ by $\widehat{F}_k(w) := \widehat{F}(w, \mathcal{D}_k),$
and the corresponding population (test) loss by  
$
F_k(w) := F(w, \mathcal{P}_k) = \mathbb{E}_{(x,y) \sim \mathcal{P}_k} \big[ f\big( y \, \Phi(w, x) \big) \big],$ 
where the expectation is taken over the test-set distribution $\mathcal{P}_k$.

The complete continual learning procedure is summarized in Algorithm~\ref{alg:con}. We initialize the parameter vector $w_1^{(0)}$ from a standard Gaussian distribution,  
$
w_1^{(0)} \sim \mathcal{N}(0, I_p),
$  
where $p = m d$ is the total number of trainable parameters in the first layer. For each task $k \in \{1, \dots, K\}$, we train the network starting from  initialization $w_{k-1}^{(T)}$ for $T$ gradient descent updates on $\widehat{F}_k$. 
The resulting vector after finishing the training on task $k$ is denoted by $w_k := w_k^{(T)} := w_{k+1}^{(0)}$, and it serves as the initialization for the subsequent task $k+1$. After processing all $K$ tasks, the algorithm outputs the final parameter vector $w_K$, which contains the accumulated knowledge obtained from the entire sequence of tasks.


\begin{algorithm}[t]\label{alg:con}
\caption{Continual Learning with Gradient Descent}
\KwIn{Number of tasks $K$, number of steps per task $T$, learning rate $\eta$}
\KwOut{Final model parameters $w_K$}
Initialize model parameters $w_1^{(0)}\sim \mathcal{N}(0,I_p)$\;
\For{$k = 1$ \KwTo $K$}{
    Load task-specific dataset $\mathcal{D}_k$\;
    \For{$t = 0$ \KwTo $T-1$}{
        Sample mini-batch(or full-batch) $\mathcal{B}_t \subseteq \mathcal{D}_k$\;
        $w_{k}^{(t+1)} \leftarrow w_{k}^{(t)} - \eta \nabla \widehat{F}(w_{k}^{(t)}; \mathcal{B}_t)$\;
    }
    Set $w_{k+1}^{(0)} \leftarrow w_k:=w_{k}^{(T)}$\;
}
\Return{$w_K:= w_{K}^{(T)}$}
\end{algorithm}

\subsubsection{XOR cluster Dataset}\label{sec:data}
Consider data according to the XOR cluster distribution  with Gaussian noise { where $x\in \R^d,y\in\{\pm 1\}$ and}
\begin{align}\label{eq:xorsingle}
    x\sim 
    \begin{cases}
         \frac{1}{2}\mathcal{N}(\mu_+,\sigma^2 I_d) + \frac{1}{2}\mathcal{N}(-\mu_+,\sigma^2 I_d)\text{\;\; if\;\;} y=1,\\[2pt]
    \frac{1}{2}\mathcal{N}(\mu_-,\sigma^2 I_d) + \frac{1}{2}\mathcal{N}(-\mu_-,\sigma^2 I_d) \text{\;\; if\;\;} y=-1,
    \end{cases}
\end{align}
where $\mu_+ \perp \mu_-,$ and $\Pr[y=1]=\Pr[y=-1]=1/2$. 
The XOR cluster and its Boolean variant (known as parities) have been extensively studied in the deep learning theory literature \cite{wei2019regularization, refinetti2021classifying, Xu2023BenignOA, telgarsky2023feature, tt24, glasgowsgd}. In particular, the XOR model is a representative instance of multi-index models, which have recently been used to investigate the sample complexity of neural network learning \cite{damian2022neural,ba2022high,abbe2022merged}. For this distribution, we show that $d^2$ samples and $d^4$ neurons are sufficient to achieve near zero train and test loss (see Prop. \ref{prop:xor} in Appendix \ref{app:signlexor}). 
 \par
 For the continual learning setup we consider a stream of $K$ tasks, where each task is generated according to the XOR cluster dataset, that is, for task $k$:
\begin{align}\label{eq:clusterxor}
x\sim 
\begin{cases}
         \frac{1}{2}\mathcal{N}(\mu_+^k,\sigma^2 I_d) + \frac{1}{2}\mathcal{N}(-\mu_+^k,\sigma^2 I_d)\text{\;\; if\;\;} y=1,\\[2pt]
    \frac{1}{2}\mathcal{N}(\mu_-^k,\sigma^2 I_d) + \frac{1}{2}\mathcal{N}(-\mu_-^k,\sigma^2 I_d) \text{\;\; if\;\;} y=-1.
    \end{cases}
\end{align}
This dataset serves as a representative example of a realizable problem that is well-suited for analyzing neural networks, specially for continual learning where different tasks correspond to different clusters of Gaussian data.
We assume that $\mup$ and $\mum$ are mutually orthogonal for all $k \in [K]$, with $\|\mup\|=\|\mum\| = \Theta(\tfrac{1}{\sqrt{d}})$, balanced labels $\Pr[y=1]=\Pr[y=-1]=1/2$, and noise level $\sigma = \Theta(\tfrac{1}{\log^c(d)\sqrt{d}})$ for some universal positive constant $c$. The orthogonality assumption reflects the fact that tasks are uncorrelated. We note that our analysis can be extended to the more general case where the mean vectors are not orthogonal between tasks, by introducing cross-task interactions that significantly complicate the bound. 
We further assume that the number of tasks grows at most poly-logarithmically with the data dimension, i.e., $K=\widetilde{O}_d(1)$.


\subsubsection{A decomposition of Forgetting Error}

Let $w_k$ denote the weights after training with data from task $k$ for some $k\in[K]$. \emph{Test-time forgetting} is measured by the increase in test loss for the $k$th task after training on $K-k$ subsequent tasks: 
\begin{align*}
  \text{Test-time Forgetting:~~}  \mathcal{F}_{k,K}^{\mathrm{ts}}:=F_k(w_{K})-F_k(w_k).
\end{align*}
We can decompose the  test-time forgetting 
as follows:
\begin{align*}
     \mathcal{F}_{k,K}^{\mathrm{ts}}=[F_k(w_{K}) - \widehat F_k(w_{K})] + [\widehat F_k(w_{K}) - \widehat F_k(w_k)] + [\widehat F_k(w_k)-F_k(w_k)].
\end{align*}
In the interpolating regime where the network can achieve zero training loss, we can drop the last term and bound the test-time forgetting based on \emph{delayed generalization gap} and \emph{training loss}:   
\begin{align}\label{eq:decompostion}
     \mathcal{F}_{k,K}^{\mathrm{ts}}\le  \underbrace{\left[\widehat F_k(w_{K}) - \widehat F_k(w_k)\right]}_{\text{Train-time forgetting }\mathcal{F}_{k,K}^{\mathrm{tr}}}~+~\underbrace{\left[F_k(w_{K}) - \widehat F_k(w_{K})\right]}_{\text{Delayed generalization gap $\mathcal{F}_{k,K}^{\mathrm{gen}}$}} .
\end{align}

In the following section, we discuss each term separately. When combined, these will give an upper bound on the expected test-time forgetting. 
\subsection{Train and test-time forgetting bounds}



The following theorem provides closed-form bounds on the train-time forgetting of task $k$ after learning the subsequent $K-k$ tasks (for a total of $K$ tasks). We assume the hinge loss, $f(u) = \max\{1 - u, 0\},$ and adopt the data distribution specified in Eq. \ref{eq:clusterxor}. The proofs for the theorems in this section are deferred to the appendix. 
\begin{theorem}[Train-time forgetting]\label{thm:train_forgetting}
    Consider the $d$-dimensional XOR cluster dataset with $K$ tasks and assume gradient descent with $\eta T =\Theta (d^2)$ iterations and $n=\widetilde\Theta(d^2K)$ samples for each subsequent task trained by a neural net with $m=\widetilde\Omega(d^8K^4)$ hidden neurons. Then, with high probability, the train-time forgetting is $\mathcal{F}_{k,K}^{\mathrm{tr}}=o_d(1)$. In particular, with probability $1-\delta$, we have:
    \begin{align}\label{eq:forgethm1}
         |\mathcal{F}_{k,K}^{\mathrm{tr}}|:=|\widehat F_k(w_K)-\widehat F_k(w_k)| = \widetilde O\left(\eta T\frac{\sqrt{K-k}}{d\sqrt{n}} + \eta T \frac{\sqrt{K-k}}{d^2\, \poly\log(d)}+\eta^2 T^2\frac{K^2}{\sqrt{m}}\right),
    \end{align}
    where $\widetilde O(\cdot)$ hides logarithmic factors in $n,T$ and $1/\delta.$
\end{theorem}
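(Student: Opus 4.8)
The plan is to leverage the special structure of the quadratic network. Since $\phi(t)=t^2/2$, the output depends on the weights only through the \emph{effective matrix} $M(w):=\tfrac1{\sqrt m}\sum_{i=1}^m a_i w_iw_i^\top$, namely $\Phi(w,x)=\tfrac12 x^\top M(w) x$, and since $f$ is $1$-Lipschitz,
\[
\big|\mathcal{F}_{k,K}^{\mathrm{tr}}\big|\ \le\ \frac1n\sum_{x\in\mathcal{D}_k}\big|\Phi(w_K,x)-\Phi(w_k,x)\big|\ \le\ \tfrac12\max_{x\in\mathcal{D}_k}\big|x^\top(M_K-M_k)x\big|,
\]
so it suffices to control how far the GD updates of the later tasks $k+1,\dots,K$ move the effective matrix \emph{along the task-$k$ data directions}. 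Writing out one GD step, $w_i\leftarrow (I+\tfrac{\eta a_i}{\sqrt m}\widehat G)w_i$ with $\widehat G:=\tfrac1n\sum_\ell|f'(y_\ell\Phi(w,x_\ell))|\,y_\ell\,x_\ell x_\ell^\top$, gives the exact recursion
\[
M^{(t+1)}=M^{(t)}+\eta\big(\widehat G^{(t)}Q^{(t)}+Q^{(t)}\widehat G^{(t)}\big)+\tfrac{\eta^2}{m}\widehat G^{(t)}M^{(t)}\widehat G^{(t)},\qquad Q^{(t)}:=\tfrac1m\textstyle\sum_i w_i^{(t)}w_i^{(t)\top}.
\]
Note $M_0$ cancels in $M_K-M_k$, so only the updates of tasks $k+1,\dots,K$ matter. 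This is where the double-asymptotic scheme enters: first treat $m=\infty$, so $Q^{(t)}\equiv I$ and the last term drops, leaving the clean finite-sample recursion $M^{(t+1)}=M^{(t)}+2\eta\widehat G^{(t)}$ on $\mathbb{R}^{d\times d}$; then treat $n=\infty$, so $\widehat G^{(t)}$ concentrates on its population limit; finally reinstate finite $n$, $m$ through concentration.

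Next I would record the single-task picture, building on the analysis behind Proposition~\ref{prop:xor}. On the population dynamics, as long as the margins at the cluster centers stay below $1$ one has $f'\equiv-1$, so $\widehat G^{(t)}\to\mathbb{E}[y\,xx^\top]\propto \mu_+^k(\mu_+^k)^\top-\mu_-^k(\mu_-^k)^\top$ — the isotropic noise contributes $\sigma^2 I$ to both class-conditional second moments and cancels — and $M$ grows purely along the rank-two signal $\mu_+^k(\mu_+^k)^\top-\mu_-^k(\mu_-^k)^\top$ at rate $\asymp\eta$ per step. Since $\|\mu_\pm^k\|=\Theta(1/\sqrt d)$, the margin at $\pm\mu_\pm^k$ after $t$ steps is $\asymp\eta t\,\|\mu\|^4=\Theta(\eta t/d^2)$, hence it reaches $1$ — and the hinge loss vanishes — exactly when $\eta t=\Theta(d^2)$, matching $\eta T=\Theta(d^2)$. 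I would turn this into an inductive invariant maintained over all $K$ tasks and all $T$ steps each: the iterates stay near initialization; $Q^{(t)}$ stays within $\widetilde{O}(\sqrt{d/m}+\poly(\eta T,K)/m)$ of $I$; the effective matrix stays close to $M_0+\sum_{j\le\text{current}}(A_j\mu_+^j(\mu_+^j)^\top-B_j\mu_-^j(\mu_-^j)^\top)$ with coefficients $A_j,B_j\in[0,O(\eta T)]$ equal to $\Theta(\eta T)$ once task $j$ is done; and the per-sample $f'$-pattern on each task's data matches the population one up to controllable error.

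With the invariant in hand, the bound follows by expanding $M_K-M_k=\sum_{j=k+1}^K\sum_{t=0}^{T-1}\eta(\widehat G_j^{(t)}Q^{(t)}+Q^{(t)}\widehat G_j^{(t)})+(\text{finite-}m)$ and decomposing each $\widehat G_j^{(t)}$ through $x_\ell x_\ell^\top=\mu^j_{y_\ell}(\mu^j_{y_\ell})^\top+\sigma\epsilon_\ell(\mu^j_{y_\ell}g_\ell^\top+g_\ell(\mu^j_{y_\ell})^\top)+\sigma^2 g_\ell g_\ell^\top$ into a signal, a cross, and a noise part. For a fixed $x=\pm\mu_\pm^k+\sigma g\in\mathcal{D}_k$, orthogonality of the task means forces $x^\top\mu^j_\pm=\sigma\,g^\top\mu^j_\pm$, so the signal part reaches $x$ only through its noise component, contributing per task $\asymp\eta T\,\sigma^2\|\mu\|^2=\widetilde{\Theta}(\eta T/(d^2\poly\log(d)))$; the noise part contributes $\asymp\eta T\,\sigma^2\|x\|^2/\sqrt n=\widetilde{\Theta}(\eta T/(d\sqrt n))$ via a Bernstein bound over the $n$ samples (using $\|x\|^2\asymp\sigma^2 d$ and $\mathbb{E}[y_\ell(x^\top g_\ell)^2]=0$, with the weak dependence of the per-step $f'$-coefficients on $g_\ell$ handled by a leave-one-out stability argument); the cross part is lower order. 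Summing over the $K-k$ later tasks, one uses that these contributions are essentially independent and centered across $j$ — the signal term because the two clusters are symmetric and $\{(g^\top\mu_\pm^j)^2\}_j$ are independent by orthogonality, the noise term because $\mathcal{D}_{k+1},\dots,\mathcal{D}_K$ are independent of $\mathcal{D}_k$ and of each other — so a martingale Bernstein inequality turns the trivial factor $K-k$ into $\sqrt{K-k}$. The finite-width contribution gathers the errors from $Q^{(t)}\ne I$ and from the $\tfrac{\eta^2}{m}\widehat G M\widehat G$ term; accumulated over $T$ steps and $K$ tasks it is $\widetilde{O}(\poly(\eta T,K)/\sqrt m)$, and one checks that the cruder form $\eta^2T^2K^2/\sqrt m$ appearing in the theorem is a valid (if not tight) bound under $m=\widetilde{\Omega}(d^8K^4)$. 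A union bound over $x\in\mathcal{D}_k$ and the $\widetilde{O}(TK)$ iterates yields the high-probability statement; plugging in $\eta T=\Theta(d^2)$, $n=\widetilde{\Theta}(d^2K)$, $m=\widetilde{\Omega}(d^8K^4)$ makes each term $o_d(1)$.

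The main obstacle is sustaining the inductive invariant along the whole stream. When task $j$ is trained, its gradient matrix is built from the current effective matrix, which already carries the accumulated signal of tasks $1,\dots,j-1$ plus finite-$n$ and finite-$m$ debris; one must show this debris is small enough, on the subspaces that matter, that the $f'$-pattern on $\mathcal{D}_j$ still agrees with the population picture — so that task $j$'s signal is built in the right direction and earlier tasks' margins are not eroded — and this has to hold simultaneously for all $j$, which is exactly what consumes the assumed scalings of $n$, $m$, $T$ and the restriction $K=\widetilde{O}_d(1)$. The accompanying population-dynamics analysis — tracking the clipped-ramp evolution of the per-sample hinge derivatives (including the discontinuity of $f'$ at the margin, which is absorbed by the flatness of the loss there) and bounding the deviation of the finite-sample trajectory from it — is the other technically heavy ingredient, though it largely overlaps with the single-task argument underlying Proposition~\ref{prop:xor}.
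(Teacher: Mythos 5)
Your proposal shares the paper's overall architecture: a double-asymptotic argument (first $m\to\infty$ to eliminate curvature in the effective dynamics, then concentration in $n$), a decomposition of $M_K-M_k$ into per-task increments, the orthogonality-of-means observation that makes later tasks' signal contributions vanish at leading order on task-$k$ data, Bernstein-type concentration in $n$ and across tasks to get the $\sqrt{K-k}$ and $1/\sqrt{n}$ factors, and a finite-width error term of order $\eta^2T^2K^2/\sqrt m$ from the accumulated deviation. Your effective-matrix formulation $\Phi(w,x)=\tfrac12 x^\top M(w)x$ and the exact recursion for $M^{(t)}$ is actually cleaner than the paper's Taylor expansion around $w_0$ (which is exact for a quadratic activation anyway but is phrased as a linearization with remainder), and your martingale-Bernstein route to the $\sqrt{K-k}$ factor is a valid alternative to the paper's direct sub-exponential concentration for each $x^\top A_j x$.

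The gap is in what you identify as the ``main obstacle,'' namely maintaining an inductive invariant that the per-sample $f'$-pattern on each task matches the population ``clipped-ramp'' picture, with samples exiting the linear region of the hinge as their margins reach $1$. This machinery is unnecessary and, if you actually needed it, would be genuinely hard to close. The paper's proof hinges on a much simpler observation that you should have extracted: under the stated scalings $\eta T=\Theta(d^2)$, $n=\widetilde\Theta(d^2K)$, $m=\widetilde\Omega(d^8K^4)$, the network output satisfies $|\Phi_m(w_t,x)|\le 1$ for \emph{all} $t\le KT$ and all training points $x$ with high probability --- the infinite-width signal is $\eta T\cdot y/(2d^2)=\Theta(1)$ with a small enough constant, the cross-task and sampling noise are $o_d(1)$, and the finite-width error is $\widetilde O(\eta^2T^2K^2/\sqrt m)=o_d(1)$ --- so $y\Phi$ never crosses the kink and $f'\equiv -1$ identically throughout training. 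This collapses $\widehat G^{(t)}$ to the constant matrix $A_j=\tfrac1n\sum_v y_v^j x_v^j x_v^{j\top}$ for the duration of task $j$, so the $m\to\infty$ recursion you wrote becomes the trivially explicit $M^{(t+1)}=M^{(t)}+2\eta A_j$, and no per-step invariant on the derivative pattern is needed; the claim $|\Phi_m|\le1$ is checked once, after the concentration bounds are in hand. Your plan as written does not establish this, and attempting to run the inductive $f'$-tracking argument instead would require controlling how finite-$n$ and finite-$m$ errors interact with the discontinuity of $f'$ across $KT$ steps, which is precisely the complexity the paper's choice of constant in $\eta T=\Theta(d^2)$ is engineered to avoid.
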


The first and third terms in Eq.~\ref{eq:forgethm1} capture the effects of sample size and hidden-layer width. Importantly, neither factor alone is sufficient to eliminate train-time forgetting. However, with sufficiently large $n$ and $m$, as stated in the theorem, we obtain a forgetting rate $\mathcal{F}_{k,K}^{\mathrm{tr}}=O(1/\poly\log(d))=o_d(1)$. Here, $n$ denotes the sample size of datasets learned after task $k$. Although these subsequent tasks are independent of and orthogonal to task $k$ (tasks are IID with orthogonal means), their larger training sets nevertheless reduce noise due to sampling and enhance the overall continual learning process. Our experiments in Section~\ref{sec:exp}, conducted across different activation functions, loss functions, and datasets under various problem settings, empirically confirm the theoretical roles of network width, sample size, and the number of tasks. 

We note that the early-stopping choice $\eta T=\widetilde\Theta(n)$ is standard in the deep learning literature, particularly in the interpolation regime for single-task settings \cite{ji2019polylogarithmic,lei2020fine}. As the following theorem demonstrates, under this choice the training loss remains uniformly small across all tasks.
\begin{theorem}[Train error in continual learning]\label{thm:train_error}
     Let the assumptions of Theorem \ref{thm:train_forgetting} hold. Then, after $K T$ iterations of GD, with high probability, the misclassification train error and train loss are $o_d(1)$ uniformly for all $K$ tasks. 
\end{theorem}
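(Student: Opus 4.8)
The plan is to bound the training loss $\widehat F_k(w_K)$ for an arbitrary task $k\in[K]$ by splitting it, as in the decomposition that precedes Theorem \ref{thm:train_forgetting}, into the loss right after task $k$ was trained plus the train-time forgetting accumulated over the subsequent tasks:
\[
\widehat F_k(w_K)\;\le\;\widehat F_k(w_k)\;+\;\bigl|\widehat F_k(w_K)-\widehat F_k(w_k)\bigr|\;=\;\widehat F_k(w_k)+\bigl|\mathcal{F}_{k,K}^{\mathrm{tr}}\bigr|.
\]
The second term is exactly what Theorem \ref{thm:train_forgetting} controls: under the stated choices $\eta T=\Theta(d^2)$, $n=\widetilde\Theta(d^2K)$, $m=\widetilde\Omega(d^8K^4)$ it is $o_d(1)$ with high probability, uniformly in $k$ (the bound in Eq.~\eqref{eq:forgethm1} is increasing in $K-k$, so the worst case is $k=1$ and is still $o_d(1)$). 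So the real content is to show the \emph{first} term, $\widehat F_k(w_k)$, the training loss immediately after $T$ steps of GD on task $k$ itself, is $o_d(1)$ with high probability — and crucially that this holds simultaneously for all $K$ tasks.

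First I would establish the single-task statement: starting from the warm-started initialization $w_{k-1}=w_k^{(0)}$ and running $T=\Theta(d^2/\eta)$ steps of GD on $\widehat F_k$, the hinge training loss reaches $o_d(1)$. This is essentially Proposition \ref{prop:xor} (the single-XOR-task guarantee already cited in the excerpt), except that here the initialization is not a fresh Gaussian but the output of training on $k-1$ earlier, mutually orthogonal tasks. The key observation — the same "double-asymptotic" mechanism described in the contributions — is that because all task means are orthogonal, the weight movement incurred while learning tasks $1,\dots,k-1$ lies (up to lower-order concentration error) in directions orthogonal to $\mu_+^k,\mu_-^k$, and in the quadratic-activation model the relevant quadratic forms decouple across these orthogonal subspaces. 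Hence the gradient of $\widehat F_k$ at $w_k^{(0)}$ still points in the "correct" direction for task $k$ with the same margin-type lower bound as at pure Gaussian initialization, and the analysis that drives $\widehat F_k$ to $o_d(1)$ in $\Theta(d^2)$ steps goes through with the initialization contributing only an additive $\poly\log(d)$-controlled perturbation (which is why the hidden-layer width must be $\widetilde\Omega(d^8K^4)$ rather than $d^8$: the accumulated drift over $K$ tasks must be absorbed). For the misclassification error, I would note that hinge loss below $1$ already certifies $y\,\Phi(w_k,x)>0$ on every training point, i.e.\ zero misclassification error once $\widehat F_k(w_k)<1$; to get misclassification error $o_d(1)$ one just needs the loss itself to be $o_d(1)$, which the loss bound provides.

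Finally I would union-bound over the $K$ tasks. Since $K=\widetilde O_d(1)$ grows only poly-logarithmically in $d$, replacing the per-task failure probability $\delta$ by $\delta/K$ costs only an extra $\log K=\poly\log\log(d)$ factor inside the $\widetilde O(\cdot)$, which is absorbed; so the "$o_d(1)$ uniformly over all $K$ tasks" conclusion follows. I expect the main obstacle to be the orthogonality/decoupling step for the warm-started initialization: one must verify carefully that the weights $w_k^{(0)}$, which have drifted from $\mathcal{N}(0,I_p)$ through $k-1$ rounds of GD, still satisfy the initialization-regularity conditions (norm bounds, near-orthogonality to the current task's signal directions, and the requisite lower bound on the initial gradient correlation with task $k$) needed to invoke the single-task convergence argument — in other words, showing that training on orthogonal tasks does \emph{not} corrupt the learnability of the current task, with the corruption quantified and shown to be lower-order under the stated scaling of $n,m,T$. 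This is precisely the step where the $K$-dependence in the sample and width requirements is generated, and it is shared with (indeed, is the engine behind) Theorem \ref{thm:train_forgetting}.
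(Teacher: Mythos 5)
Your route is genuinely different from the paper's, and it has a gap in the step you identify as the crux. The paper does not split $\widehat F_k(w_K)$ into a single-task term plus forgetting; it proves Theorems \ref{thm:train_forgetting} and \ref{thm:train_error} \emph{simultaneously} via one direct computation in the kernel regime. It derives the closed-form $w_{KT}^i \sim z + \tfrac{\eta T}{\sqrt m}\sum_{j=1}^K A_j\,\omega z$ (with $z\sim\mathcal N(0,I_d)$, $\omega$ Rademacher, and $A_j$ the label-weighted empirical second moment of task $j$), observes this is \emph{symmetric in the task index}, and shows, for a training point $x_k$ of \emph{any} task $k\le K$, that $\Phi(w_{KT},x_k)=\eta T\,x_k^\top\big(\sum_{j=1}^K A_j\big)x_k + N + \text{(finite-width error)}$: the diagonal term $x_k^\top A_k x_k=\Theta(y_k/d^2)$ is the signal, the cross-task terms $x_k^\top A_j x_k$ for $j\neq k$ are $\widetilde O(1/(d\sqrt n)+1/(d^2\poly\log d))$ by orthogonality of means plus Bernstein, and $N$ and the finite-width error are lower order, so under the stated parameter choices $\Phi$ lands on $y_k$ up to $o_d(1)$ error and the hinge loss is $o_d(1)$ on every training point of every task. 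The task-index symmetry is what makes the uniformity over $k$ costless, and Theorem \ref{thm:train_forgetting} falls out of the same identity by isolating the $j>k$ terms.

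Your decomposition is a valid alternative in structure, but the appeal to Proposition \ref{prop:xor} to bound $\widehat F_k(w_k)$ does not close it. Besides the warm-start issue (the shifted initialization carries $O(1/(d\sqrt n))$ sampling noise in every direction, including $\mu_\pm^k$, so the orthogonality-based decoupling is not free and requires exactly the Bernstein estimates the paper already does), there is a rate mismatch: Proposition \ref{prop:xor} yields $\widehat F(w_t)=O(\|w^\star-w_0\|^2/t)=O(d^2\log^2 t\,/\,t)$ for the \emph{logistic} loss, which at $\eta T=\Theta(d^2)$ evaluates to $O(\log^2 d)$, not $o_d(1)$. The paper's $o_d(1)$ \emph{hinge} loss is not a $1/t$ convergence rate; it comes from the fact that while the hinge loss stays in its linear regime the GD trajectory is a straight line, so $y_k\Phi(w_{KT},x_k)$ grows linearly in $\eta T$ and reaches $1$ exactly at $\eta T=\Theta(d^2)$, leaving only the concentration residual --- a mechanism Proposition \ref{prop:xor} does not provide. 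Your Markov step from loss to misclassification error and the union bound over the $K=\widetilde O_d(1)$ tasks are both correct.
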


The proofs of Theorems \ref{thm:train_forgetting}-\ref{thm:train_error} are deferred to App. \ref{sec:proof_thm1}. A combination of these theorems yields sufficient conditions for successful continual learning as measured by training performance. We remark that the proof of both theorems (up to calculations related to the model-output's equations in the appendix in Eq.\ref{eq:sum} or forgetting equation in Eq.\ref{eq:16}) holds for a broad family of data distributions. The parts of the analysis that specialize to the XOR-cluster distribution primarily arise when deriving explicit closed-form expressions for the model output and for characterizing closed-form bounds for forgetting. Therefore, although our main theorems so far are stated for orthogonal xor-cluster tasks, the proof strategy is not tied exclusively to this setting. As mentioned in Section \ref{sec:data}, this choice of data is motivated by the literature on single/multi-index models in deep learning theory. Specializing to this class of data also makes it possible to isolate the role of key problem parameters, such as width, sample size, number of tasks, and training horizon, in a transparent way. More generally, the train-time forgetting analysis suggests that for general data (cf. Eqs. \ref{eq:16}-\ref{eq:finitewidtherror} in the appendix):

\begin{align*}
\left|\widehat{F}_k\left(w_K\right)-\widehat{F}_k\left(w_k\right)\right|=\left|\frac{1}{n}\sum_{x_k} \eta T x_k^{\top}\left(\sum_{j=k+1}^K A_j\right) x_k\right|+O\left(\frac{\left\|w_{K}-w_0\right\|^2}{\sqrt{m}}\right),
\end{align*}

where $A_j:=\frac{1}{n} \sum_{v=1}^n y_j^v x_j^v\left(x_j^v\right)^{\top},$ 
and $\left\{\left(x_j^v, y_j^v\right)\right\}_{v \in[n]}$ denotes the training data for task $j$.  Theorem \ref{thm:train_forgetting} is obtained by specializing this characterization to the XOR-cluster model with Gaussian noise, where orthogonality removes cross-task interference terms and yields a clean closed-form dependence on the problem parameters. For more general data distributions, additional correlation terms appear, and the resulting expressions become more cumbersome; nevertheless, we expect the same framework to provide similar qualitative insights into how width, sample size, data noise and task overlap affect forgetting.

Our next result derives the delayed generalization gap (as defined in Eq.\ref{eq:decompostion}) for almost any data distribution. In fact, it also shows that the derived scalings for $m,n$ and $T$ in Thm. \ref{thm:train_forgetting} are also sufficient for good continual \emph{test-time} performance for the XOR cluster dataset. 

\begin{theorem}[Delayed generalization gap]\label{cor:gen_gap}
    Assume the loss function is 1-Lipschitz and 1-smooth. Then, the expected delayed generalization gap satisfies,
    \vspace{-0.1in}
    \begin{align*}
\mathcal{F}_{k,K}^{\mathrm{gen}}:=\E_{\Dc_k}\left[F_k(w_{K})-\widehat F_k(w_{K})\right] \lesssim \frac{\eta T \,e^{\frac{\eta T(K-k+1)}{\sqrt{m}}}}{n},
\end{align*}
where the expectation  above is taken with respect to the randomness in the training examples of task $k$.
\end{theorem}

\begin{remark}[Test-time forgetting]
The bound above holds on expectation over the choice of training set for task $k$ and it holds with probability 1 with respect to the randomness in the training set of subsequent tasks. Note that the bound decays with the rate $1/n$ and given sufficiently large width, it is linearly proportional to the number of iterations. While the bound holds for almost any data distribution, for the XOR-cluster distribution with the training loss guarantees from Thms. \ref{thm:train_forgetting}-\ref{thm:train_error}, we find that with $n=\widetilde\Theta(d^2K)=\widetilde\Theta(\eta T)$ samples and with $m=\widetilde\Omega(d^8K^4)$, it holds $\mathcal{F}_{k,K}^{\mathrm{gen}}=o_d(1)$ resulting in vanishing test-time forgetting in view of Eq. \ref{eq:decompostion}. Finally, we note -- as the proof shows -- training occurs within the linear region of the hinge loss, which allows us to combine the results of the previous theorems despite the smoothness assumption on the loss in Theorem~\ref{cor:gen_gap}. 
\end{remark}
\par
\begin{remark}
The results of Thms \ref{thm:train_forgetting}-\ref{cor:gen_gap} provide the first closed-form characterization of forgetting for neural networks trained sequentially in the kernel regime and identify an explicit regime in which continual learning is possible with plain sequential gradient descent. Particularly, the results so far show that unregularized sequential gradient descent can simultaneously achieve arbitrarily small forgetting and small test error across all tasks under a regime where the data noise is sufficiently small (i.e., $\sigma=O(\frac{1}{\poly\log(d)\sqrt{d}})$) and the tasks are separated (orthogonal clusters' means), providing sufficient conditions for successful continual learning with GD even without regularization or data-replay. 
\end{remark}
\begin{remark}[Improved rates]
The proof of Theorem~\ref{cor:gen_gap} is deferred to Appendix~\ref{app:gen}. In particular, we first prove a specific version of this theorem (stated as Theorem~\ref{thm:gen_gap} in the appendix) under additional conditions on the loss. We note that while Theorem \ref{cor:gen_gap} is sufficient to ensure that test-time forgetting is asymptotically small under the parameter scalings considered above, its dependence on the training horizon $T$ can become a bottleneck in more general settings. In particular, one typically expects the generalization gap to depend only weakly on the number of gradient-descent iterations. To address this limitation, we introduce additional assumptions on the loss function (satisfied, in particular, by the logistic loss) that yield a sharper control of the stability of the gradient-descent trajectory. Under these conditions, we establish an improved generalization bound in Theorem~\ref{thm:gen_gap} in the appendix, where the dependence on $T$ is reduced to \emph{polylogarithmic}. This refinement leads to improved conditions on the network width and yields a tighter bound on the delayed generalization gap based on $T$. We also note that, the qualitative trends predicted by the theory already appear at substantially smaller widths and different training horizons in our experiments in Section \ref{sec:exp}, suggesting that our results are valid beyond the derived scalings in the theorems. 
\end{remark}
\subsection{Regularized continual learning}
It is natural to ask whether regularization can improve the scalings derived in the last section. We consider the regularized continual learning algorithm (e.g., \cite{Aljundi2017MemoryAS,kirkpatrick,Lewkowycz2020OnTT}) with parameter $\lambda$ where for each task $k\ge 2$, the objective is to minimize the following,
\begin{align}\label{eq:reg_cont}
    \min_w~~ \widehat F_k(w)+ \frac{\lambda}{2} \|w-w_{k-1}\|^2.
\end{align}
The regularization parameter $\lambda$ can be chosen to be fixed, time-varying  or data-dependent \cite{evron2023continual,Lewkowycz2020OnTT,kirkpatrick}. 
In the next proposition, we consider the fixed $\lambda$ in order to study the effects of regularization on the GD iterates. Our next result shows that in the linearized regime (i.e., the infinite-width regime) where the network output can be written as a first-order approximation around initialization, the regularized continual learning problem is effectively equivalent to unregularized minimization with a time-varying step-size.  
\begin{proposition}[Regularized continual learning]\label{cor:reg}
Consider the regularized continual learning problem Eq.\ref{eq:reg_cont} in the linearized regime, with the same setup as Theorem \ref{thm:train_forgetting}. The iterates of this algorithm with step-size $\eta$ are equivalent to unregularized continual learning with step-size $\widetilde{\eta}_T$ for any task $k\ge 2$, where we define $\widetilde\eta_T:=\frac{\alpha_T\eta}{T}$ and $\alpha_T := \frac{1-(1-\eta \lambda)^T}{\eta \lambda}$. 
 \end{proposition}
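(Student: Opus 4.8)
\textbf{Proof plan for Proposition \ref{cor:reg}.}

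The plan is to track the gradient descent iterates of the regularized problem in the linearized (infinite-width) regime and show, by an explicit unrolling of the recursion, that they coincide with the iterates of an unregularized run with a rescaled step-size. First I would write out the linearized model: in the infinite-width regime the output is $\Phit(w,x) = \Phi(w_{k-1},x) + \langle \nabla\Phi(w_{k-1},x), w - w_{k-1}\rangle$, so that $\widehat F_k$ becomes a convex (in fact, for the hinge loss, piecewise-linear) function of $w$ whose gradient $\nabla \widehat F_k(w)$ is an affine function of $w$ on each linear piece; since (as the surrounding text notes) training occurs within the linear region of the hinge loss, $\nabla\widehat F_k$ is affine throughout the trajectory. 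Writing $\delta_t := w_k^{(t)} - w_{k-1}$, the regularized GD update $w_k^{(t+1)} = w_k^{(t)} - \eta\big(\nabla\widehat F_k(w_k^{(t)}) + \lambda(w_k^{(t)} - w_{k-1})\big)$ becomes $\delta_{t+1} = (1-\eta\lambda)\delta_t - \eta\, g_t$, where $g_t := \nabla\widehat F_k(w_{k-1} + \delta_t)$ is the unregularized gradient at the current point.

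Next I would solve this linear recursion. Because the linearized loss is quadratic (its Hessian $H_k := \nabla^2\widehat F_k$ is constant), $g_t = g_0 + H_k \delta_t$, so $\delta_{t+1} = (1 - \eta\lambda - \eta H_k)\delta_t - \eta g_0$, and the closed form is a geometric sum. The key observation is to compare this to the unregularized iteration with step-size $\widetilde\eta$: there $\delta'_{t+1} = (1 - \widetilde\eta H_k)\delta'_t - \widetilde\eta g_0$. Matching the accumulated effect after $T$ steps — i.e. requiring $\sum_{s=0}^{T-1}(1-\eta\lambda)^s$ worth of effective gradient steps to equal $T$ steps of size $\widetilde\eta$ — gives exactly $T\,\widetilde\eta_T = \eta\sum_{s=0}^{T-1}(1-\eta\lambda)^s = \eta\cdot\frac{1-(1-\eta\lambda)^T}{\eta\lambda} = \eta\,\alpha_T$, hence $\widetilde\eta_T = \alpha_T\eta/T$ as claimed. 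The cleanest way to present this is to note that in the linearized regime the displacement $w_k^{(T)} - w_{k-1}$ depends on the schedule only through the total "gradient budget" applied (because, conditioned on staying in the linear region of the hinge, the map from cumulative step-size to final displacement is the same up to reparametrization of time), and the regularization simply discounts each past gradient contribution by a factor $(1-\eta\lambda)^{\text{age}}$, whose total is $\alpha_T$.

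The main obstacle I expect is making the equivalence statement precise: the regularized and unregularized trajectories are \emph{not} identical step-by-step, only their endpoints (the initializer $w_k := w_k^{(T)}$ handed to task $k+1$) agree, and even that requires care about what "equivalent with step-size $\widetilde\eta_T$" means — presumably $T$ steps of constant size $\widetilde\eta_T$, or a single effective step, rather than a matching intermediate path. I would therefore state the equivalence at the level of the final iterate $w_k^{(T)}$ (which is what propagates through Algorithm \ref{alg:con}), and verify that the linear-region assumption from Theorem \ref{thm:train_forgetting}'s analysis — which guarantees $\langle w_k^{(t)}, x_i\rangle$ keeps the hinge active with the correct sign for all $t\le T$ and all later tasks — holds for the rescaled step-size $\widetilde\eta_T \le \eta$ as well, which is immediate since $\alpha_T/T \le 1$ (as $0 < 1-\eta\lambda < 1$). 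A secondary point is handling the $\eta\lambda \to 0$ and $\eta\lambda \ge 1$ edge cases in the formula for $\alpha_T$, but the regime of interest has $\eta\lambda \in (0,1)$ so $\alpha_T \in (0,T)$ and the reparametrization is well-defined.
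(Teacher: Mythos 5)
Your overall route --- track the linearized GD recursion for each task, unroll it, and match the cumulative displacement against $T$ unregularized steps of size $\widetilde\eta_T$ --- is the same one the paper takes, and your observation that the equivalence holds only at the terminal iterate $w_k^{(T)}$ rather than step-by-step is a correct and useful clarification of the proposition's somewhat loose phrasing. The problem is that the middle of your argument is internally inconsistent and would not survive if the Hessian $H_k := \nabla^2\widehat F_k$ you introduce were actually nonzero. You derive the recursion $\delta_{t+1} = (1-\eta\lambda - \eta H_k)\delta_t - \eta g_0$, so the geometric factor carried forward is $(1-\eta\lambda-\eta H_k)$; yet in the ``matching'' step you identify the accumulated regularized gradient budget as $\eta\sum_{s=0}^{T-1}(1-\eta\lambda)^s = \eta\alpha_T$, in which $H_k$ has silently vanished. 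These two statements do not cohere. If $H_k\neq 0$, the regularized endpoint is $\delta_T = -g_0\,\bigl(1-(1-\eta(\lambda+H_k))^T\bigr)/(\lambda+H_k)$ while the unregularized endpoint with step $\widetilde\eta$ is $-g_0\,\bigl(1-(1-\widetilde\eta H_k)^T\bigr)/H_k$, and setting $\widetilde\eta = \alpha_T\eta/T$ does \emph{not} equate them: they agree to second order in $\eta$ but diverge at third order, and a direct check with $T=3$, $\lambda = H_k$, confirms a mismatch. So the claimed equivalence is simply false in the generality you set up.

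What saves the result --- and what the paper's proof uses explicitly --- is that $H_k = 0$. In the linearized regime the network output is linear in $w$, and because training stays in the linear region of the hinge, $f(u)=1-u$ with $f'\equiv -1$, so $\widehat F_k$ is \emph{linear} in $w$, not ``quadratic with constant Hessian'' as you wrote. Consequently $\nabla\widehat F_k$ is a single fixed vector; the paper writes it out concretely as $-\frac{1}{\sqrt{m}\,n}\sum_j a_i\phi'(\langle w_1^{(0)},x_j^k\rangle)\,x_j^k y_j^k$, which manifestly does not depend on the current iterate (and note it linearizes around the global initialization $w_1^{(0)}$ rather than $w_{k-1}$ as you do, though in the $m\to\infty$ limit this distinction disappears). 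With $H_k = 0$ your recursion is exactly $\delta_{t+1} = (1-\eta\lambda)\delta_t - \eta g_0$, the geometric factor is genuinely $(1-\eta\lambda)$, and the matching $T\widetilde\eta_T = \eta\alpha_T$ goes through as you wrote, as does your secondary point that $\widetilde\eta_T\le\eta$ keeps the linear-region guarantee intact. The fix is simply to delete the nonzero-Hessian machinery and argue directly that the gradient is a constant vector on the trajectory; as it stands, the matching step you perform does not follow from the recursion you wrote immediately before it.
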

Hence, as $T$ increases, the effective step-size decreases, preventing iterations from moving a significant distance from the solution of previous task. In particular, Prop. \ref{cor:reg} shows that the considered $L_2$-type regularization around the previous-task solution is equivalent to unregularized continual learning with a different effective step size. Consequently, within the kernel regime studied in the last section, such regularization does not fundamentally improve the forgetting and generalization scalings derived in our analysis. At the same time, this does not preclude the possibility that other mechanisms (such as architectural modifications, or regularization under regimes with stronger feature learning) may significantly improve continual learning performance.


\section{Experiments}\label{sec:exp}

We demonstrate the impact of sample size, number of tasks, and network width on the performance of continual learning for different loss functions, activations functions, data distributions, architectures, step-sizes and training horizons. We include the implementation details for each figure and additional experiments, including experiments on the transformer architecture in Appendix \ref{app:add_exp}. The code for reproducing the results is publicly available online.\footnote{\url{https://github.com/hosseinta2/continual-learning-with-neural-nets.git}} 
\vspace{-0.15in}
\paragraph{Impact of sample-size, training horizon and number of Tasks.} The first data model we consider is the XOR cluster with orthogonal mean vectors. Fig. \ref{fig:1} shows how sample-size affects the train-loss forgetting for $K=3$ tasks using quadratic activation and linear loss. Here, we increase the sample size for each task from $n=2500$ to $n=5000$, showing how the increase can diminish test-error forgetting. Fig. \ref{fig:2} in the appendix repeats this experiment for different problem parameters. The observations from both plots are in-line with our theoretical insights on the role of sample-size on train and test time forgetting. 
\par 

In order to verify the role of sample size of later tasks on train-time forgetting, we consider an experiment where the sample-size for task 1 is fixed, and for later tasks we increase the sample-size. The resulting training loss curves for different loss functions and activations are shown in Figs. \ref{fig:4},\ref{fig:5}, and \ref{fig:6} (in the appendix).  In accordance with Theorem \ref{thm:train_forgetting}, it can be observed that increasing the sample-size on tasks 2,3 has a positive influence on the forgetting of task 1. This implies that increasing the sample-size not only stabilizes the per-task training loss, but also reduces the amount of forgetting for previous tasks. While we use linear loss with quadratic activation for Fig. \ref{fig:4}, Figs. \ref{fig:5}, \ref{fig:6} indicate these observations extend to different losses and activations including the commonly used logistic loss and the ReLU and GELU activations. 
\par

In Fig. \ref{fig:8}, we consider $K=6$ tasks of the XOR cluster dataset and increase $T$ from $T=2000$ to $T=4000$ for each task with $n=200,800,2000$ samples per each task. Note that increasing $T$, deteriorates the training loss for task 1 as training progresses. While increasing $T$ helps with training loss for task 1 at the end of training of task 1, (the dashed lines are below the solid lines at $k=1$ for any value of $n$), the amount of increase in the training loss for $T=4000$ is larger than $T=2000$, eventually leading to larger training loss for task $1$ as $K$ increases. The right panel in Fig. \ref{fig:8} shows the training loss for each task during learning these 6 tasks, illustrating that the train loss achieves near zero training loss for each task. On the other hand, increasing $n$ for each task, helps with diminishing the training loss. To better see this impact, in Fig. \ref{fig:7} in the appendix, we increase the number of tasks and consider learning $K=15$ and $K=20$ tasks of the XOR cluster dataset. These plots again verify our insights on the role of training-set size. The impact of increasing tasks is also visible in the Left figure while using GELU activation and the logistic loss.
\paragraph{Impact of over-parameterization.} In Fig. \ref{fig:3} we consider the XOR cluster dataset for $K=3$ tasks with Quadratic activation and gradually increase $m$ from $m=10^2$ to $m=10^4$. We find that increasing the width is generally beneficial for continual learning. 
However the benefits shrink as $m$ increases, where increasing the width from $m=10^3$ to $m=10^4$ has almost non-tangible impact on the overall performance of continual learning. Note that this is in line with Theorem \ref{thm:train_forgetting}, as we discussed the impact of width showing that width alone cannot reduce the train time forgetting to zero. We remark these insights also align with the \emph{diminishing returns of width} phenomenon observed in previous works \cite{guha2024diminishing,graldi2024to} where the benefits of width decline as $m$ grows. In Fig. \ref{fig:9} in the appendix, we consider learning $K=6$ tasks with the GELU activation and logistic loss for different choices of over-parameterization. The observations in this figure again verify our previous insights as increasing the width helps with continual learning, although it alone cannot lead to forget-less continual learning.
\begin{figure}
    \centering
    \includegraphics[width=0.4\linewidth]{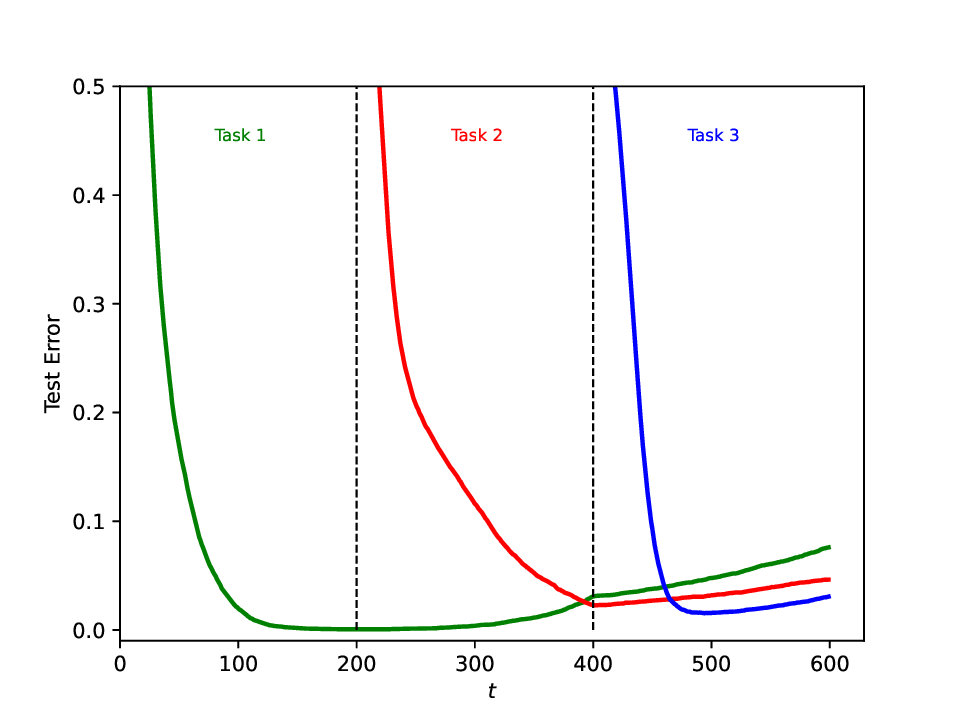}
    ~~~~
    \includegraphics[width=0.4\linewidth]{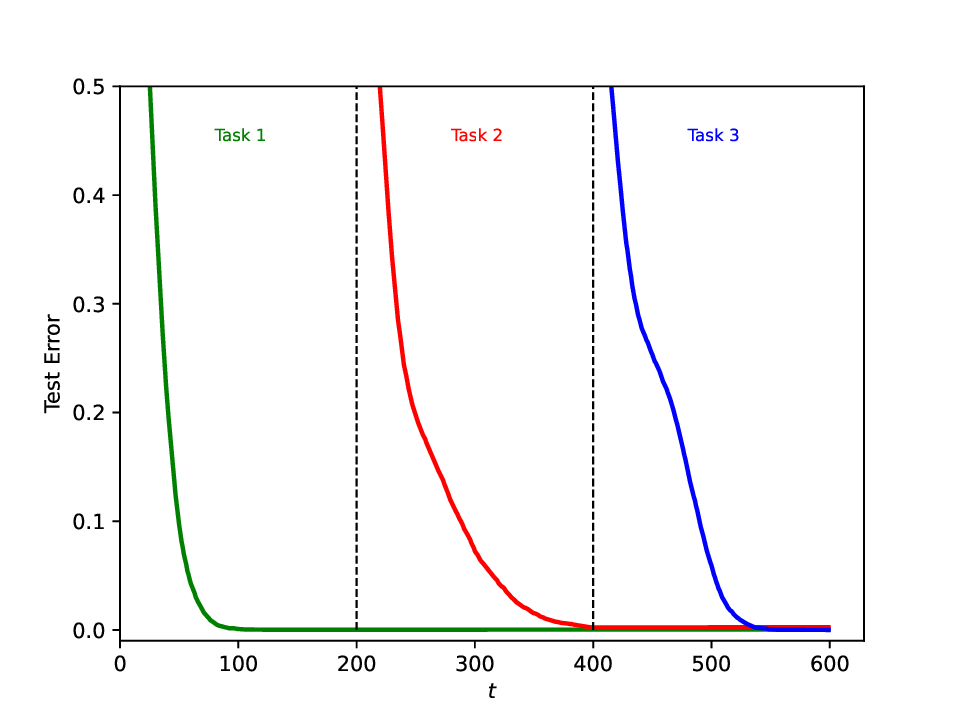}
    \caption{Classification test error for each task vs.\ iterations for the XOR cluster with 3 tasks trained on a quadratic network with $n=2500$ (left) and $n=5000$ (right) training samples per task. Increasing sample-size can lead to \emph{vanishing forgetting and test error} on all tasks.}
    \label{fig:1}
\end{figure}

\begin{figure*}
    \centering
    \includegraphics[width=0.32\linewidth]{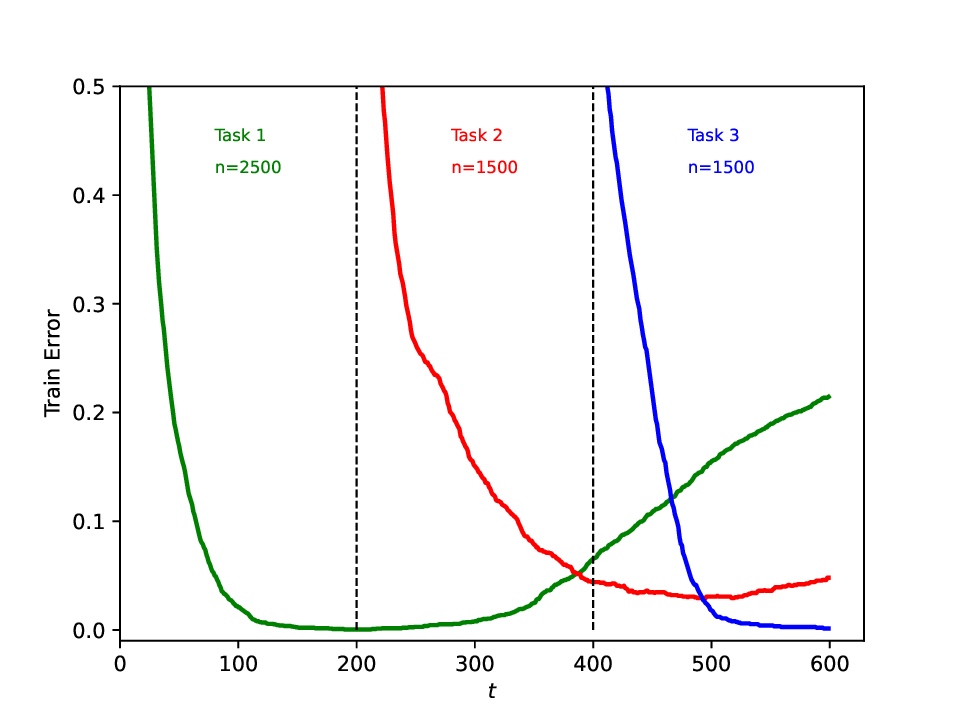}
    \includegraphics[width=0.32\linewidth]{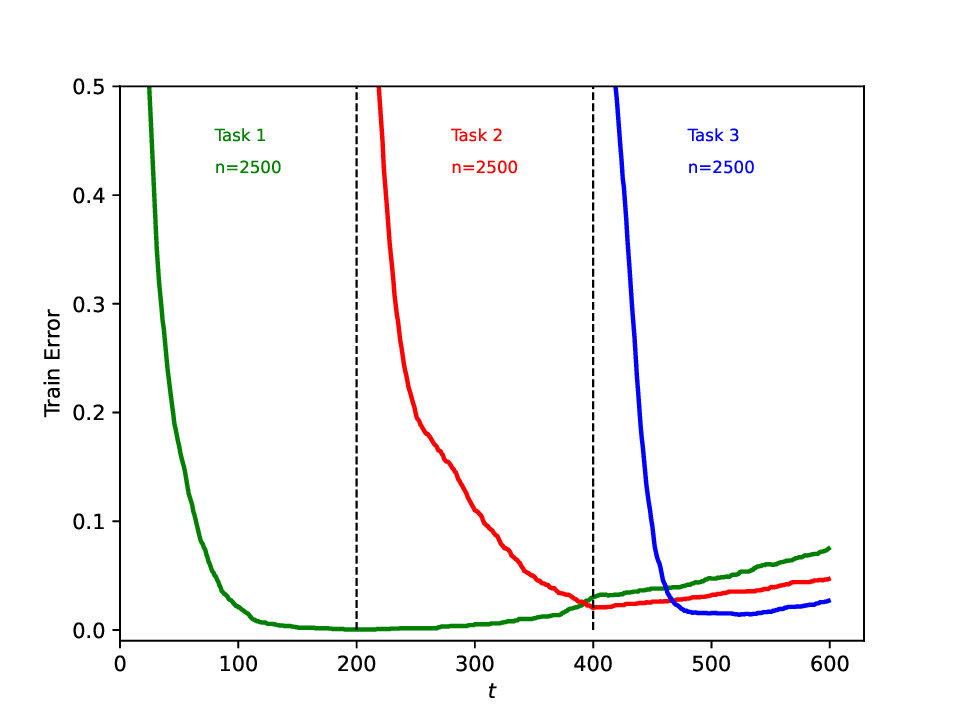}
    \includegraphics[width=0.32\linewidth]{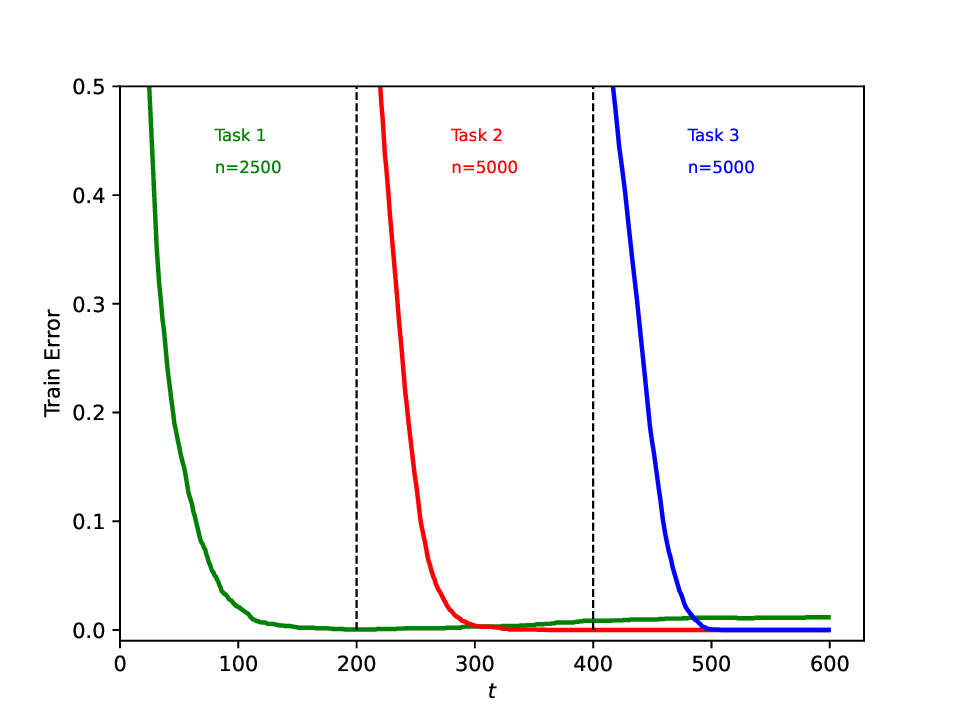}
    \caption{Classification train error for each task vs.\ iterations for the XOR cluster with $K=3$ tasks trained on a quadratic network. We fix $n=2500$ for the first task and increase the sample size of the second and third tasks across figures. Increasing the sample size stabilizes per-task training and \emph{decreases forgetting} for \emph{previous tasks}.}
    \label{fig:4}
\end{figure*}
\begin{figure}
    \centering
    \includegraphics[width=0.19\linewidth]{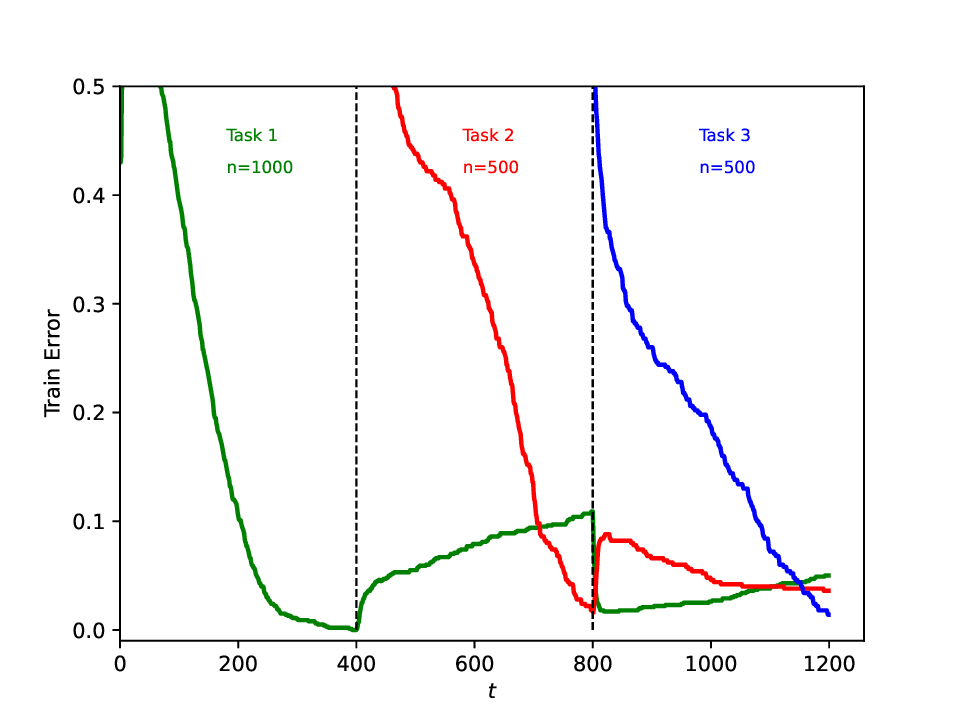}
    \includegraphics[width=0.19\linewidth]{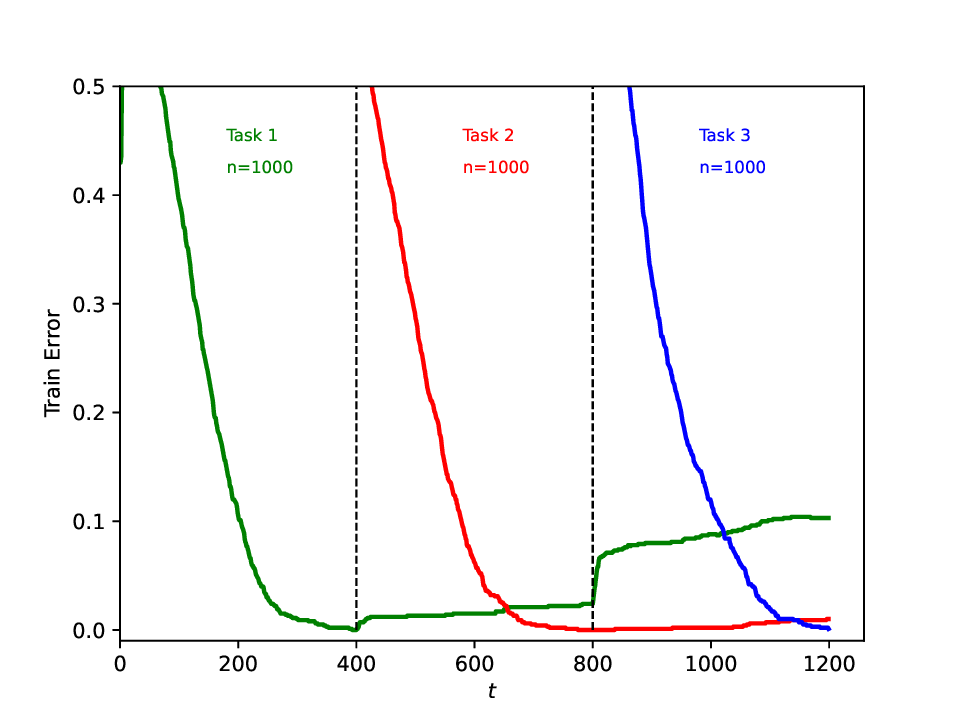}
    \includegraphics[width=0.19\linewidth]{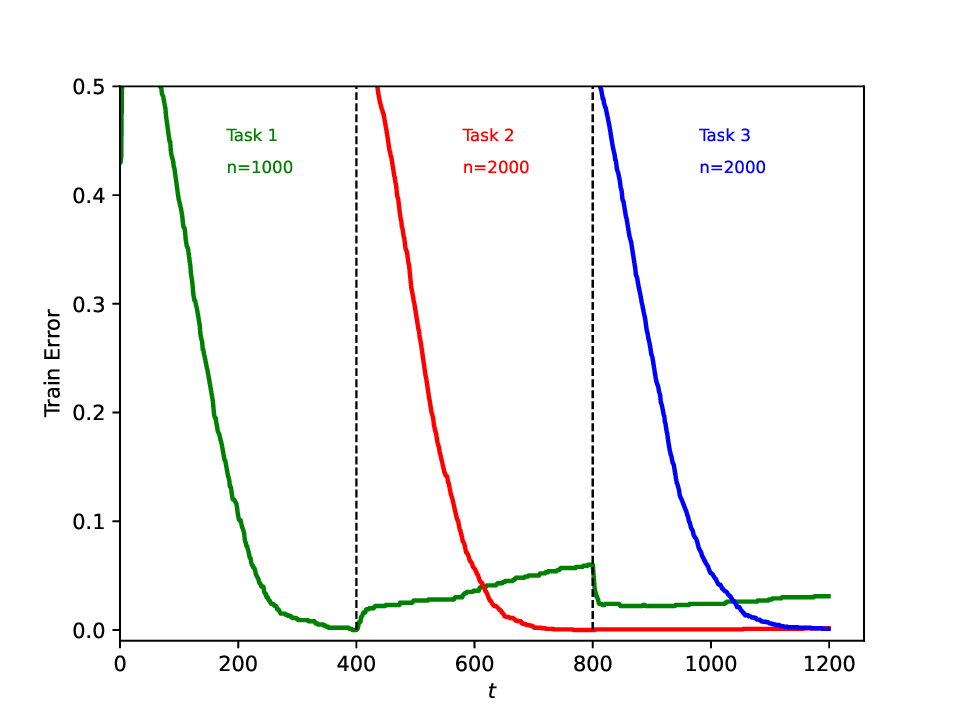}
    \includegraphics[width=0.19\linewidth]{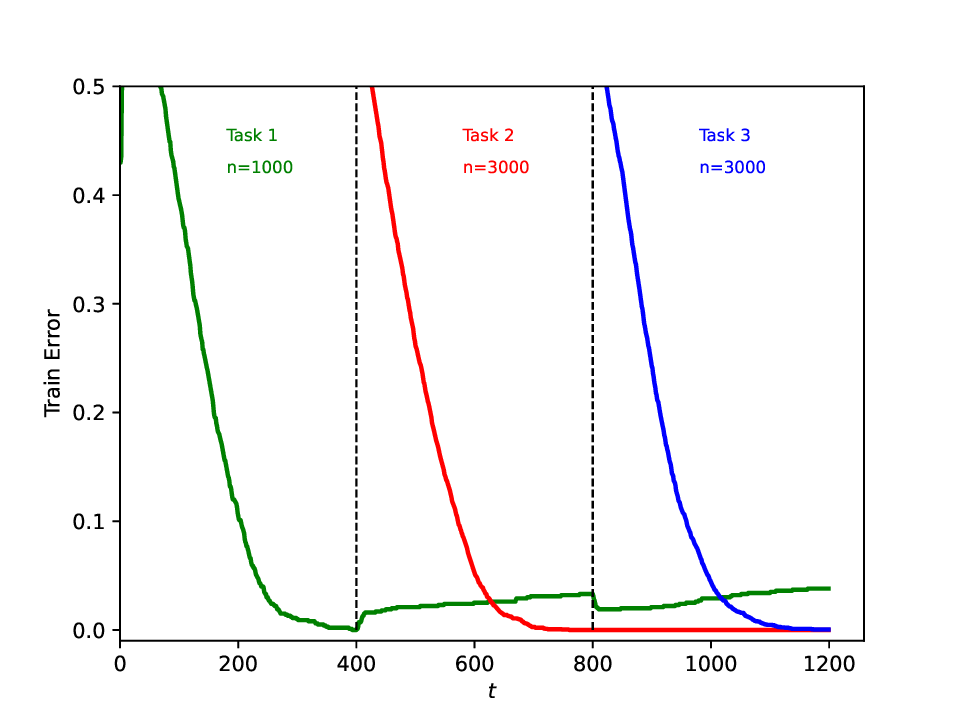}
    \includegraphics[width=0.19\linewidth]{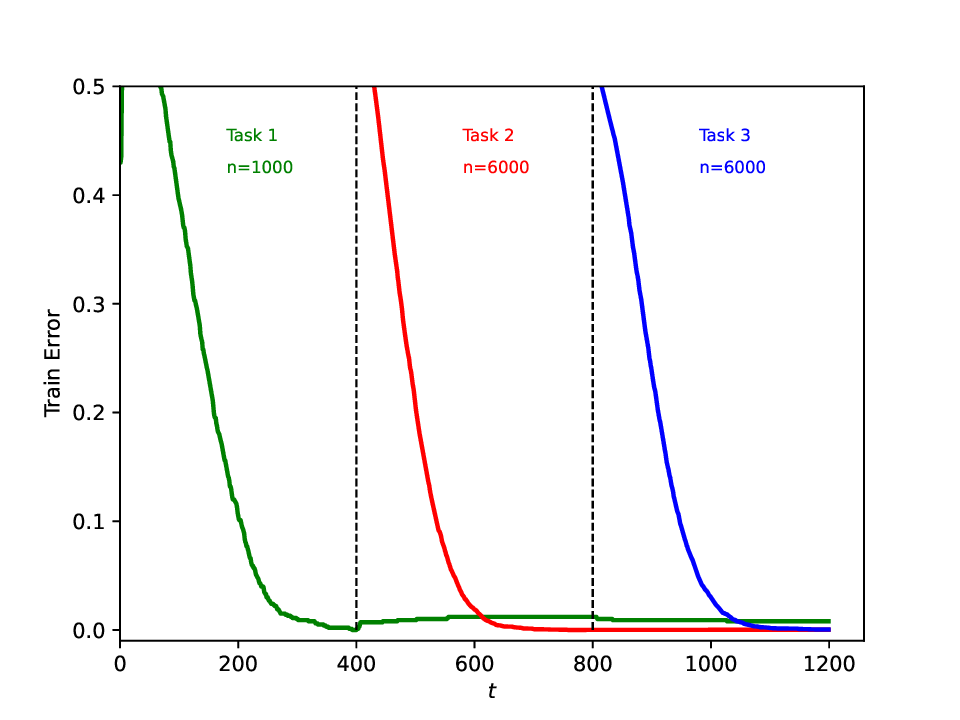}
        \vspace{-0.2in}

    \caption{We repeat the experiment from Figure~\ref{fig:4}, this time using \emph{GELU activation} and the \emph{logistic loss function}, demonstrating that our findings remain valid across different settings.}
    \label{fig:5}
\end{figure}
\begin{figure}[h]
    \centering
    
    \includegraphics[width=0.35\linewidth]{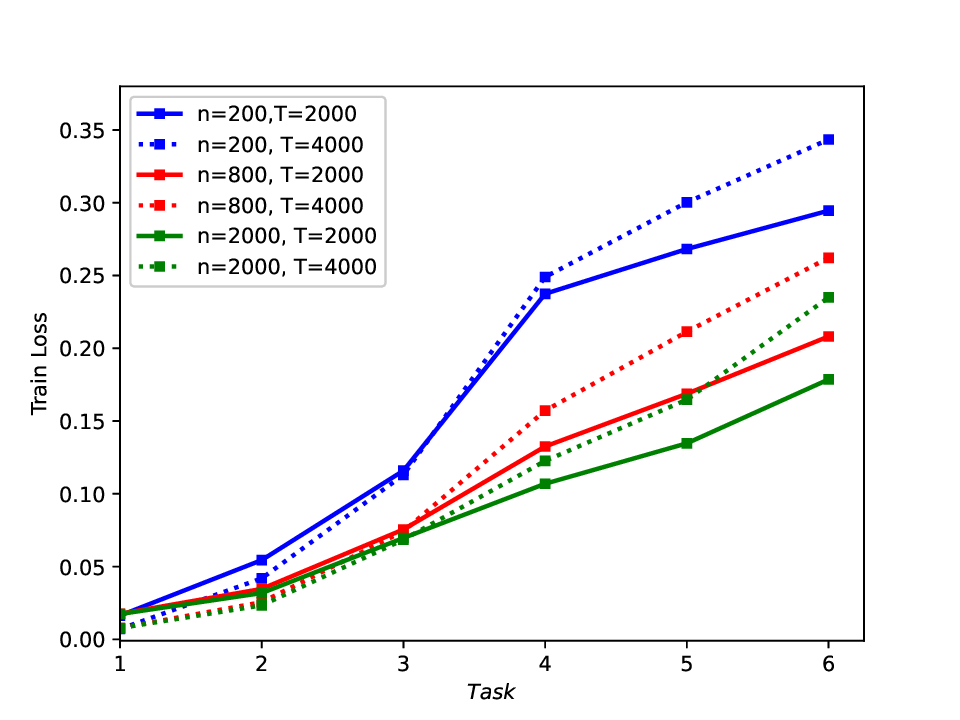}~~~
    \includegraphics[width=0.35\linewidth]{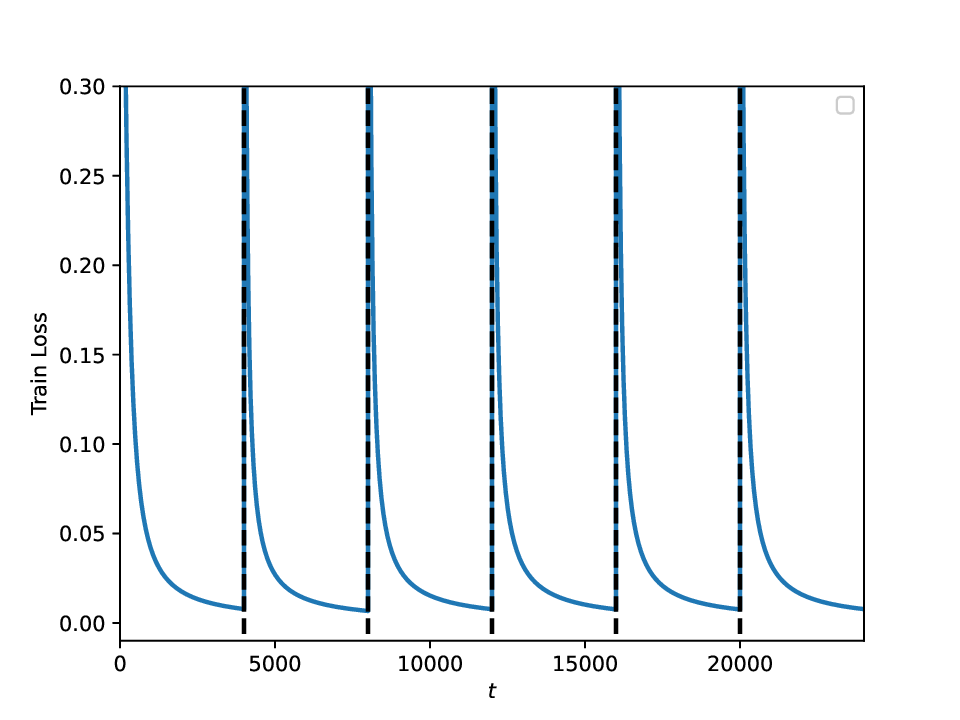}
      \vspace{-0.2in}

    \caption{Left: Training loss of Task 1 versus task index (i.e., $\widehat F_1(w_k)$ as a function of $k$) for $K=6$ tasks for different sample-sizes and training horizons per task.
Right: Training loss per task ($\widehat F_k(w_k^{(t)})$) versus iteration when $n=2000,T=4000$ for each task. We use GELU activations with logistic loss. Although each task individually reaches near-zero training loss, the training loss on the first task \emph{increases} with both the \emph{number of tasks} ($K$) and the \emph{training horizon} ($T$), and \emph{decreases} as the sample size ($n$) grows.} 
    \label{fig:8}
\end{figure}
\begin{figure*}[!t]
    \centering
    \includegraphics[width=0.16\textwidth]{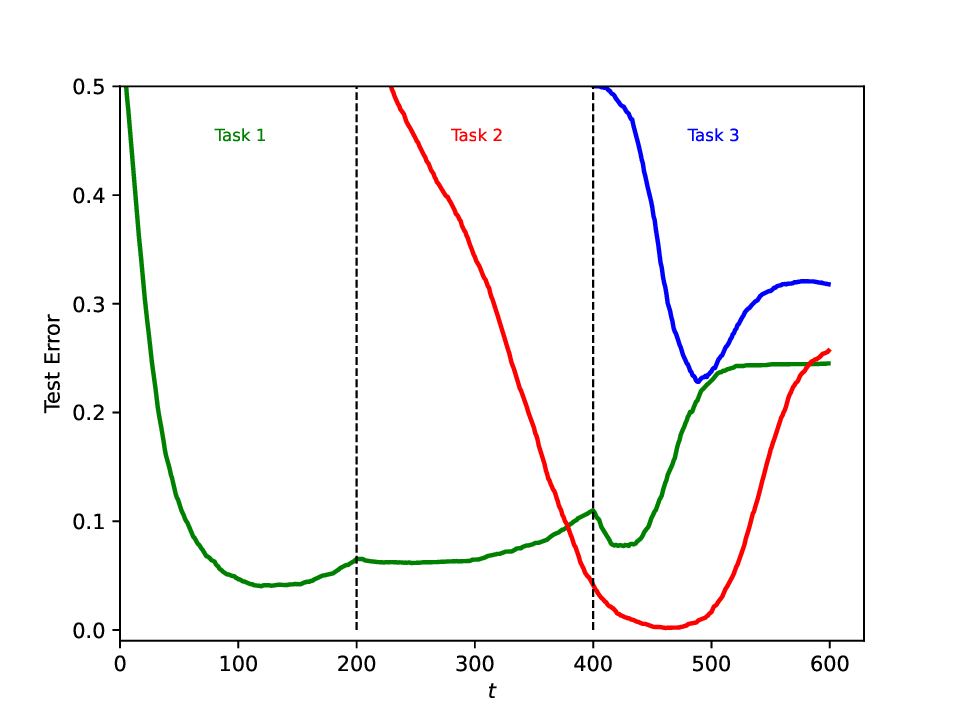}
    \includegraphics[width=0.16\textwidth]{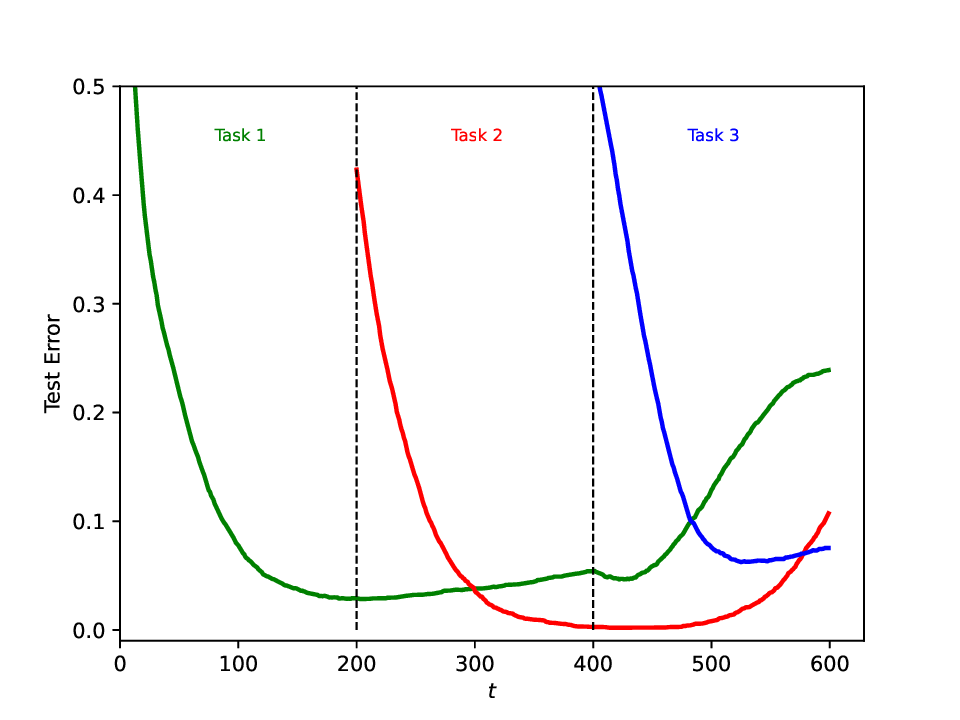}
    \includegraphics[width=0.16\textwidth]{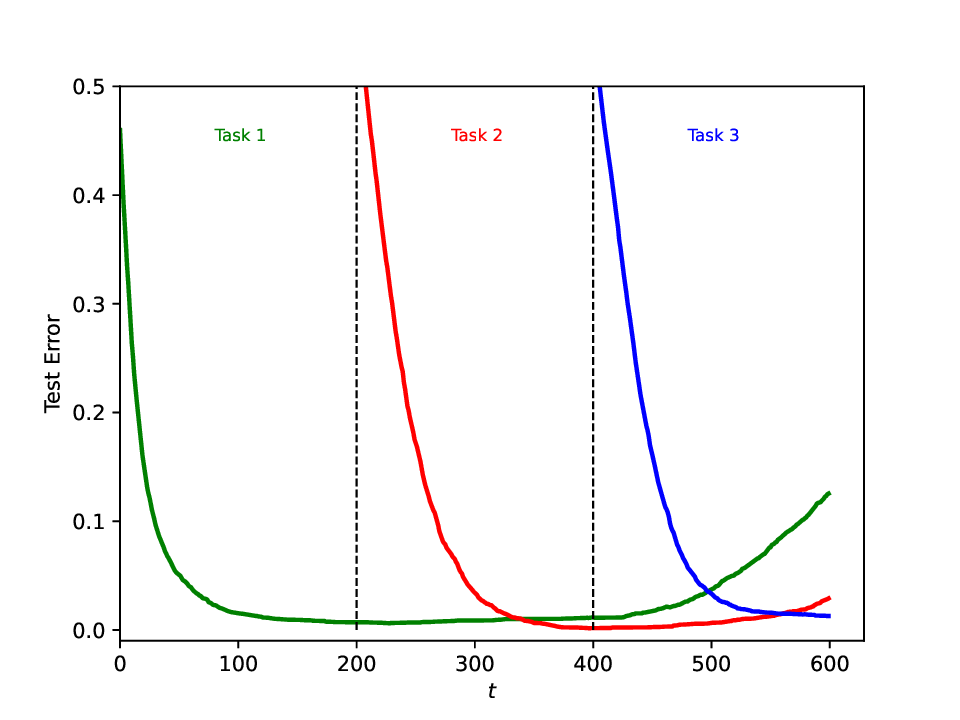}
    \includegraphics[width=0.16\textwidth]{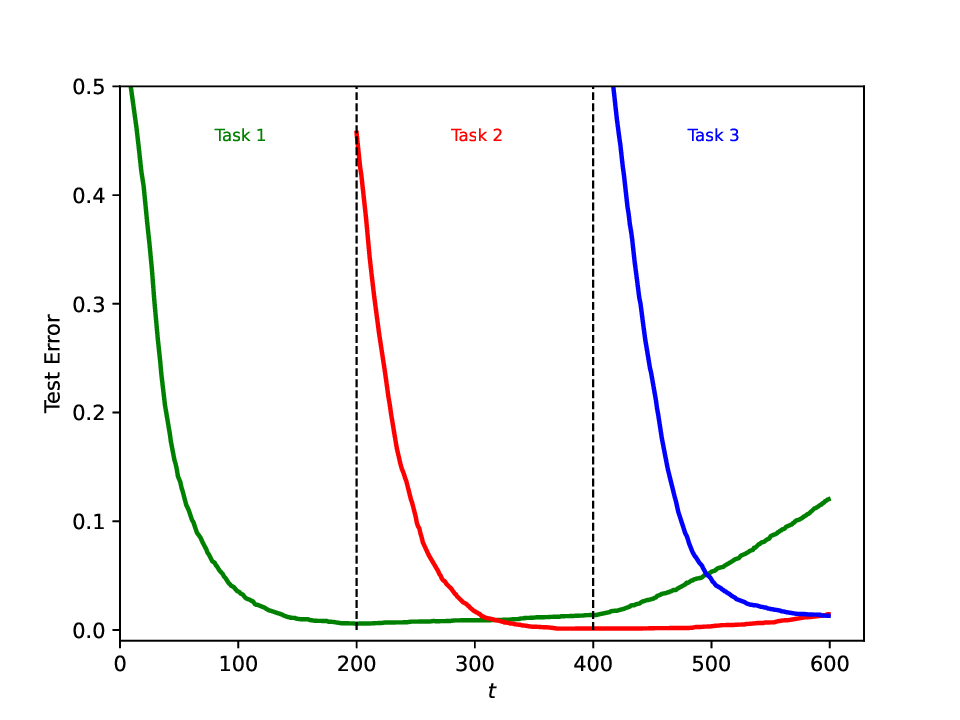}
    \includegraphics[width=0.16\textwidth]{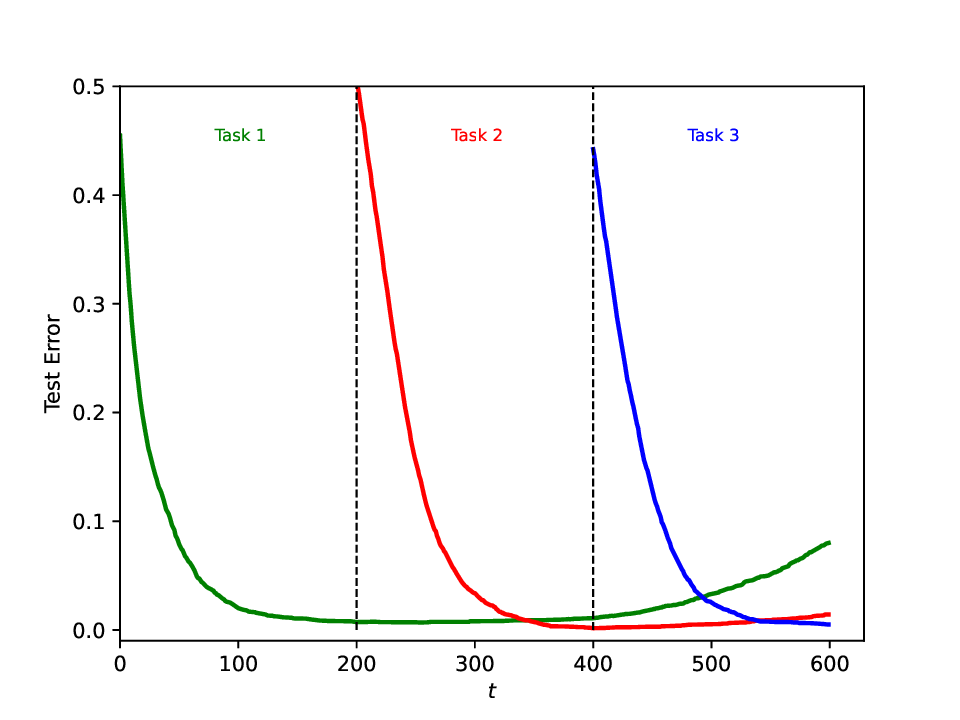}
    \includegraphics[width=0.16\textwidth]{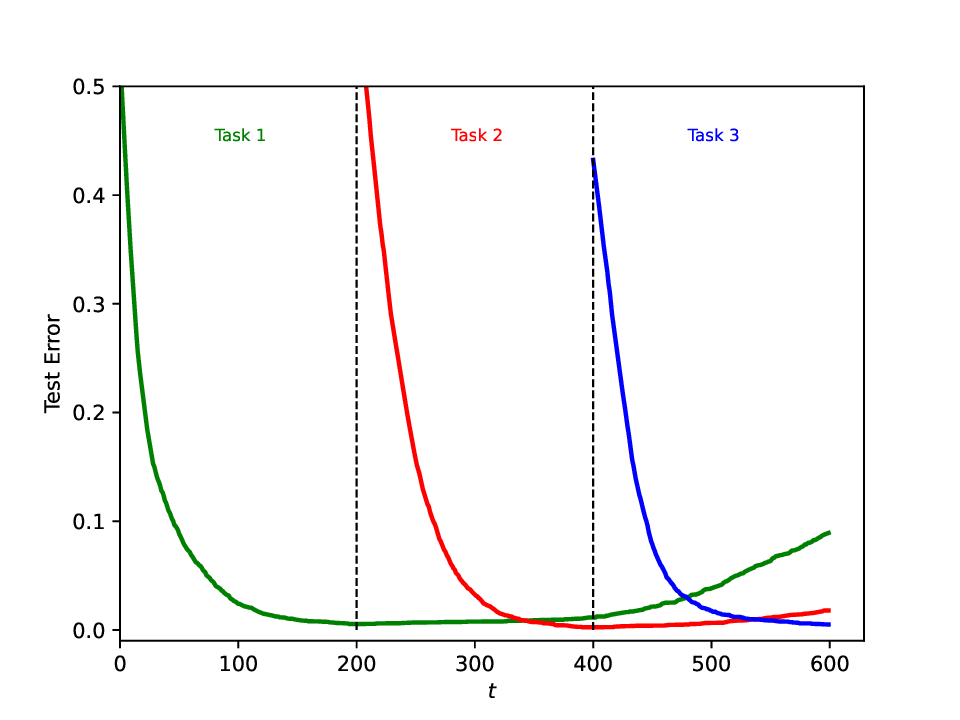}

    {\tiny
    \hspace{-0.1in}
    $m=100$\hspace{0.7in}
    $m=300$\hspace{0.8in}
    $m=1000$\hspace{0.6in}
    $m=3000$\hspace{0.6in}~~~
    $m=6000$\hspace{0.6in}~~~~
    $m=10000$
    }
    \caption{Impact of network width ($m$) on the test error for learning the XOR cluster distribution with $3$ tasks using quadratic networks. Increasing width helps with continual learning; however, the benefits diminish as $m$ grows, indicating that over-parameterization alone \emph{cannot} eliminate forgetting.}
    \label{fig:3}
\end{figure*}
\begin{figure}[H]
    \centering
    \includegraphics[width=.3\linewidth]{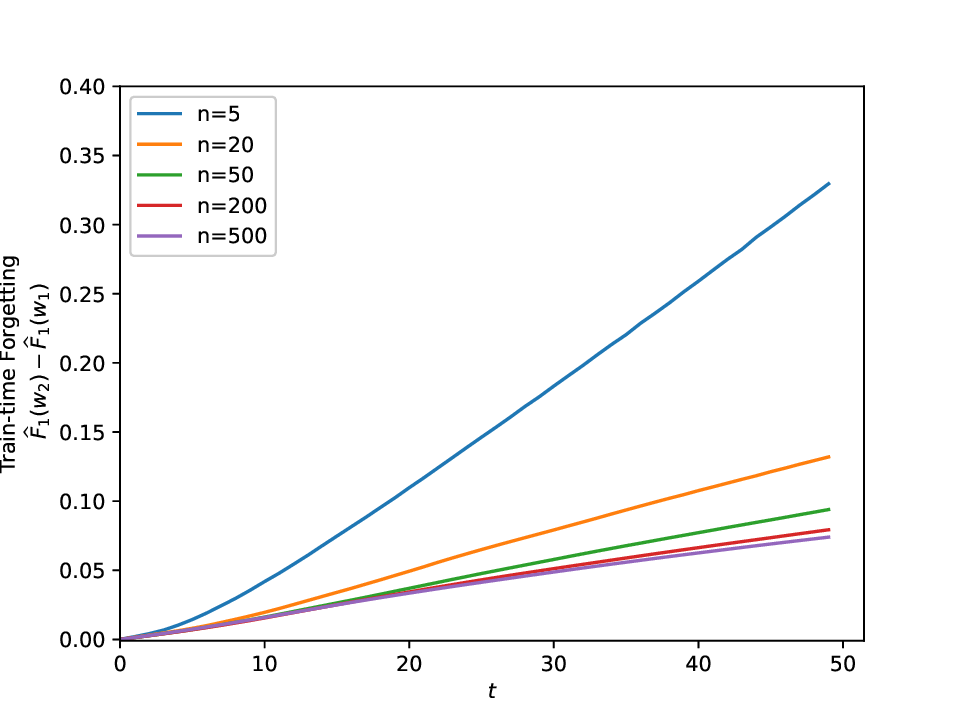}\hspace{-0.1in}
     \includegraphics[width=0.3\linewidth]{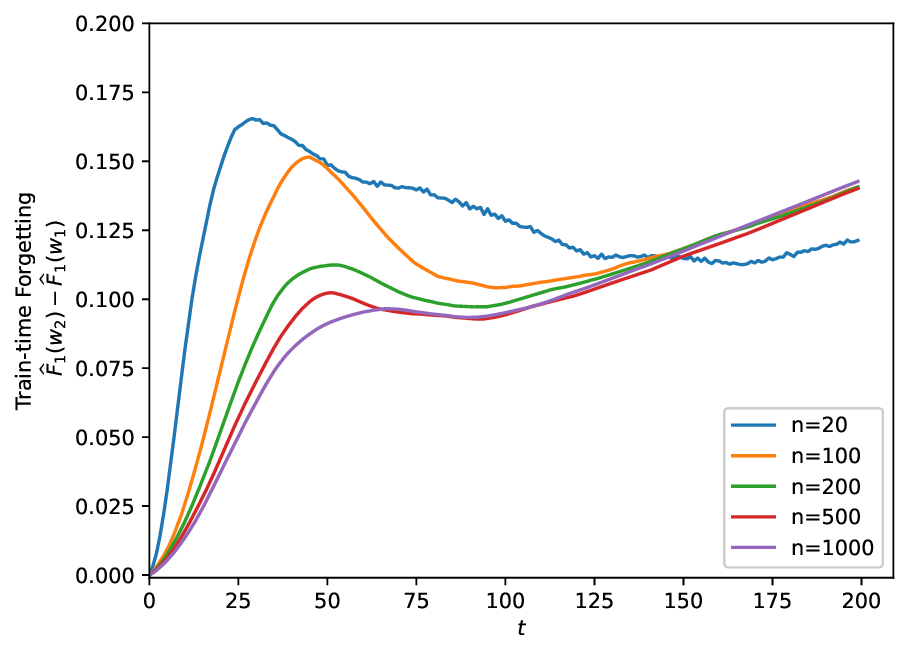}
     \includegraphics[width=.4\linewidth, height=1.5in]{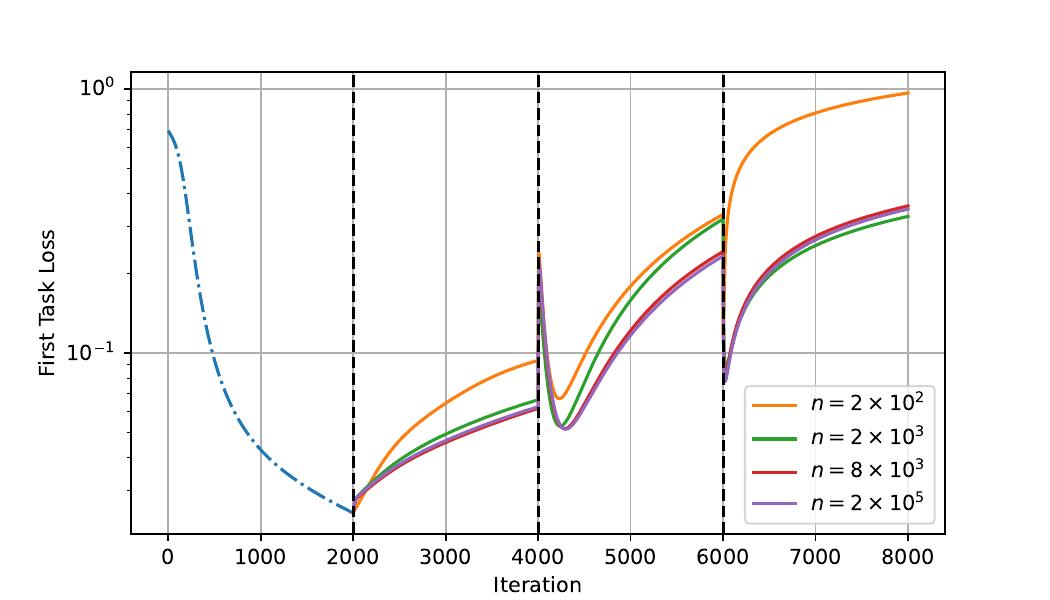}
         \vspace{-0.2in}

    \caption{Left and middle panels: Train-time forgetting for task 1 while learning the second task versus iteration for the split-MNIST dataset, classifying labels '0'/'1' for task 1 and labels '2'/'3' for task 2 (left) and labels '4'/'5' for task 1 and labels '6'/'7' for task 2 (middle). We fix $n=50$ samples for the first task, and change $n$ for the second task. { Right panel: First task's training loss ($\widehat F_1(w^{(t)})$) vs $t$ for learning 4 binary tasks from the split FashionMNIST dataset. We fix $n=200$ for Task 1 and plot the training curves while increasing $n$ for subsequent tasks. Similar to our theoretical insights increasing $n$ reduces forgetting for previous tasks.}}
    \label{fig:MNIST}
\end{figure}

\paragraph{Experiments with MNIST and FashionMNIST.} In Fig. \ref{fig:MNIST} (Left \& Middle), we consider continual binary classification of digits from the MNIST dataset with $K=2$ tasks. The plots show the amount of increase in training loss of task 1, during learning task 2 versus iteration number. The results are averages over 15 independent experiments. For the left plot tasks are determined according to digits $0$ to $3$ and for the right plot the tasks are determined according to the data distribution formed by digits $4$ to $7$. The sample-size for the first task is fixed to $n=50$ in all curves and different curves correspond to different sample sizes for the second task.  The results of previous figures on the role of sample-size continue to hold for this distribution as well, since increasing the sample-size for the second task generally improves the continual learning of the first task. { In Fig. \ref{fig:MNIST} (Right), we consider a similar experiment but with the FashionMNIST dataset, choose logistic loss and ReLU activation, and set the total number of tasks to $K=4$, where different tasks correspond to data from different labels. Similar to the last experiment, we observe that increasing the sample-size for subsequent tasks generally has a positive impact on the first task's training loss.}
\section{Conclusions and Future Work}
We studied gradient-based continual learning in a neural network setup, highlighting how different problem parameters affect catastrophic forgetting. Our analysis provides the first closed-form bounds on train and test time forgetting in this setting and clarifies the roles of sample size, width, number of tasks, and training horizon. There are several promising directions for future work. An immediate next step is to analyze other training methodologies, such as (mini-batch) stochastic gradient descent, where additional noise may interact with forgetting. Another important direction is to move beyond the quadratic two-layer setting and explore whether analogous guarantees can be obtained for richer architectures, including transformers. Our preliminary experiments in Fig. \ref{fig:LLMs} in the appendix show that some aspects of our results are observed, particularly for small transformers with Gaussian-Mixture data. Finally, our current analysis is limited to the lazy regime. Extending the theory to the feature-learning regime, where step-sizes are large, early stopping is avoided, and weights move significantly from their initialization, remains a challenging and exciting problem. While a recent work \cite{graldi2024to} provides preliminary results on the drawbacks of feature learning for continual learning, more exploration in this regime could provide a more complete picture of continual learning in modern machine learning.

\paragraph{Acknowledgment.} This work was supported by NSF awards 2217058 and 2112665.
\bibliographystyle{apalike}

\bibliography{bib}


\clearpage

\appendix
\section*{Appendix}
\section{Single-task XOR cluster}\label{app:signlexor}
The next result derives the class margin for the single-task XOR cluster dataset and combined with standard results from the NTK literature it bounds the train and test loss for learning this dataset (as described by Eq. \ref{eq:xorsingle}) with GD.
\begin{proposition}[Single-task XOR]\label{prop:xor}
    For the XOR cluster dataset for a given $T$, there exists target vector $w^\star\in\R^{dm}$ such that $\|w^\star-w_0\|=\Theta(d\cdot\log(T))$ and $\widehat F(w^\star)<1/T$ and gradient descent with logistic loss and on a network with quadratic activation with width $m=\Omega(\|w^\star-w_0\|^4),$ achieves the training loss $\widehat F(w_t)= O(\frac{\|w^\star-w_0\|^2}{t})$ and the expected test loss $\E_{\mathcal{D}}[F(w_t)]= O(\frac{\|w^\star-w_0\|^2}{n})$ after $t=n$ GD iterations.
\end{proposition}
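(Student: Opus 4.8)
The plan is to follow the neural-tangent-kernel template for the two-layer quadratic network: (i) exhibit a target $w^\star$ in the tangent space at initialization that fits the data; (ii) show that gradient descent on the nonlinear network stays within a small ball of $w_0$ and inherits the convex-optimization rate; and (iii) convert this into a test-loss bound through algorithmic stability. For the construction, note that $\phi(t)=t^2/2$ gives $\Phi(w,x)=\tfrac{1}{2} x^\top M(w)x$ with $M(w)=\tfrac{1}{\sqrt m}\sum_i a_i w_iw_i^\top$, that the linearization at $w_0$ is $\langle\nabla\Phi(w_0,x),v\rangle=\tfrac{1}{\sqrt m}\sum_i a_i(x^\top w_{0,i})(x^\top v_i)$, and that the associated limiting tangent kernel is $\langle x,x'\rangle^2$. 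I would take $M^\star=\kappa\big(\widehat\mu_+\widehat\mu_+^\top-\widehat\mu_-\widehat\mu_-^\top\big)$ with $\widehat\mu_\pm=\mu_\pm/\|\mu_\pm\|$ and $\kappa=\Theta(d\log T)$, and set $w^\star_i=w_{0,i}+\tfrac{a_i}{\sqrt m}M^\star w_{0,i}$, so that $\langle\nabla\Phi(w_0,x),w^\star-w_0\rangle=\tfrac{1}{m}\,x^\top\big(\sum_i w_{0,i}w_{0,i}^\top\big)M^\star x\approx x^\top M^\star x$ and $\|w^\star-w_0\|^2\approx\operatorname{tr}((M^\star)^2)=2\kappa^2$ by a $\chi^2$-type concentration of $\tfrac{1}{m}\sum_i w_{0,i}w_{0,i}^\top$ around $I_d$ (valid since $m\gg d$); this yields $\|w^\star-w_0\|=\Theta(d\log T)$. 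Using $\mu_+\perp\mu_-$, $\|\mu_\pm\|=\Theta(1/\sqrt d)$, $\sigma=\Theta(\log^{-c}d/\sqrt d)$, and sub-exponential tail bounds on the Gaussian component of $x$ (union-bounded over the $n$ samples), one checks $y_i\,x_i^\top M^\star x_i=\kappa\cdot\Theta(1/d)\ge 2\log T$ for all $i$ with high probability; moreover the second-order Taylor expansion of the quadratic activation is exact, so $|\Phi(w^\star,x)-\Phi(w_0,x)-\langle\nabla\Phi(w_0,x),w^\star-w_0\rangle|=|\tfrac{1}{2\sqrt m}\sum_i a_i(x^\top v^\star_i)^2|\le\tfrac{\|x\|^2\|w^\star-w_0\|^2}{2\sqrt m}=o_d(1)$ since $\|x\|^2=\Theta(\log^{-2c}d)$ and $m=\Omega(\|w^\star-w_0\|^4)$. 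Together with $|\Phi(w_0,x)|=O(\|x\|^2)=o_d(1)$, this gives $y_i\Phi(w^\star,x_i)\ge\log T$, hence $\widehat F(w^\star)<1/T$.

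For the optimization step, since $f$ is convex, $1$-Lipschitz and $1$-smooth and $\|\nabla^2_w\Phi(w,x)\|=\|x\|^2/\sqrt m$ for the quadratic activation, the empirical objective $w\mapsto\widehat F(w)$ is $\varepsilon$-weakly-convex with $\varepsilon=O(\|x\|^2/\sqrt m)=o_d(1)$. A bootstrap along the trajectory shows that the iterates stay in a ball of radius $O(\|w^\star-w_0\|)$ about $w_0$, so the standard lazy-training / almost-convex gradient-descent argument \cite{ji2019polylogarithmic,nitanda2019gradient} gives the regret estimate $\sum_{s=0}^{t-1}\big(\widehat F(w_s)-\widehat F(w^\star)\big)\le(1+o_d(1))\tfrac{\|w^\star-w_0\|^2}{2\eta}$. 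Because $\widehat F(w_s)$ is non-increasing (smoothness with $\eta=\Theta(1)$) and $\widehat F(w^\star)<1/T=O(\|w^\star-w_0\|^2/t)$ for $T\ge t$, this implies $\widehat F(w_t)\le\tfrac{1}{t}\sum_{s<t}\widehat F(w_s)=O(\|w^\star-w_0\|^2/t)$.

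For the generalization step, $f$ is self-bounding (being $1$-smooth and nonnegative, $|f'|^2\lesssim f$); combined with the weak convexity above, the stability analysis of \cite{hardt2016train,lei2020fine,richards2021stability} adapted to the lazy regime yields uniform stability $\beta\lesssim\tfrac{\eta}{n}\sum_{s=0}^{t-1}\widehat F(w_s)+o_d(1/n)$. Plugging in $\sum_{s<t}\widehat F(w_s)\le t\,\widehat F(w^\star)+\tfrac{\|w^\star-w_0\|^2}{2\eta}=O(\|w^\star-w_0\|^2/\eta)$ (for $T\ge t=n$) gives $\beta=O(\|w^\star-w_0\|^2/n)$, and combining with the optimization bound at $t=n$ yields $\E_{\mathcal D}[F(w_n)]\le\E_{\mathcal D}[\widehat F(w_n)]+\beta=O(\|w^\star-w_0\|^2/n)$.

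The hard part is the construction step: establishing the uniform margin $y_i\,x_i^\top M^\star x_i\gtrsim\log T$ over all $n$ noisy samples while keeping $\|w^\star-w_0\|=\Theta(d\log T)$ tight requires the finite-width tangent kernel to track $\langle x,x'\rangle^2$ closely and careful bookkeeping of the noise terms $\sigma\langle\widehat\mu_\pm,g\rangle$ and $\sigma^2(\|g\|^2-d)$ that perturb the clean margin $\Theta(1/d)$. Once $w^\star$ is in hand, the optimization and stability steps are routine instances of the existing lazy-training machinery.
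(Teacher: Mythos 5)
Your construction is essentially the same strategy as the paper's (an NTK/margin argument followed by a hand-off to lazy-training optimization and stability machinery), but the specific target you exhibit is a genuinely different one. The paper computes the dual NTK margin
$M(x,y)=y\int\phi'(\langle z,x\rangle)\langle w_z,x\rangle\,d\mu_{\mathcal N}(z)$
using a \emph{piecewise-constant} selector $w_z$: the space of initializations is partitioned into four quadrants by the signs of $z^\top(\mu_++\mu_-)$ and $z^\top(\mu_+-\mu_-)$, and $w_z$ is assigned one of $\pm\mu_+/\|\mu_+\|,\ \pm\mu_-/\|\mu_-\|$ on each quadrant. This yields margin $\gamma=\Omega(1/d)$, and then the paper simply cites the margin-to-rate conversions of the reference (Cor.~C.1.1 and Prop.~C.1 of the cited work) to obtain the norm $\Theta(d\log T)$, the $O(\|w^\star-w_0\|^2/t)$ training rate, and the $O(\|w^\star-w_0\|^2/n)$ generalization bound. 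Your approach instead takes a \emph{linear-in-initialization} target $v_i=\tfrac{a_i}{\sqrt m}M^\star w_{0,i}$ with $M^\star=\kappa(\hat\mu_+\hat\mu_+^\top-\hat\mu_-\hat\mu_-^\top)$, which makes the linearized output collapse cleanly to $x^\top M^\star x$ (up to $O(\|M^\star\|\,\|x\|^2\sqrt{d/m})$ concentration error) and makes the norm computation a trace identity, $\|w^\star-w_0\|^2\approx\operatorname{tr}((M^\star)^2)=2\kappa^2$. The margin $\kappa\|\mu_+\|^2=\Theta(\log T)$ and the $\sigma$-noise corrections reproduce the paper's $\Omega(1/d)$ calculation in slightly different coordinates. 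Both constructions achieve the optimal $1/d$ margin scaling; the spectral/linear one you chose is arguably cleaner for tracking the norm, whereas the paper's quadrant construction enforces the unit-norm-per-neuron constraint $\|w_z\|\le1$ exactly, which is the form in which the cited margin machinery expects the certificate.

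Two places where your write-up is thinner than it should be, though neither is fatal. First, the step ``a bootstrap along the trajectory shows that the iterates stay in a ball of radius $O(\|w^\star-w_0\|)$'' is asserted rather than argued; this confinement is precisely what makes the almost-convexity parameter $\varepsilon$ remain $o_d(1)$ throughout training and is the crux of the lazy-training regret bound, so for a self-contained proof you would need the induction (or a pointer to the exact lemma in \cite{ji2019polylogarithmic,tt24} that does it). Second, the high-probability margin claim $y_i x_i^\top M^\star x_i\ge 2\log T$ \emph{uniformly over $i\in[n]$} hides a union bound: the noise terms $\kappa\sigma\|\mu_+\|\,g$ and $\kappa\sigma^2(g_+^2-g_-^2)$ are of order $(\log T/\log^c d)\sqrt{\log n}$, so you need $\sqrt{\log n}=o(\log^c d)$, which is satisfied for polynomially many samples but should be stated since the proposition is used with $n$ as large as $d^2$.
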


\begin{proof}
     Define four regions $R_1,R_2,R_3,R_4\in \R^d$ such that 
    \begin{align*}  
    R_1&= \{x\in\R^d:x^\top(\mu_++\mu_-)>0,x^\top(\mu_+-\mu_-)>0  \},\\
    R_2&= \{x\in\R^d:x^\top(\mu_++\mu_-)>0,x^\top(\mu_+-\mu_-)<0  \},\\
    R_3&= \{x\in\R^d:x^\top(\mu_++\mu_-)<0,x^\top(\mu_+-\mu_-)>0  \},\\
    R_4&= \{x\in\R^d:x^\top(\mu_++\mu_-)<0,x^\top(\mu_+-\mu_-)<0  \}.\\
    \end{align*}
Without loss of generality, assume $\mu_+= [1/\sqrt{d},1/\sqrt{d}, 0,\cdots,0]$ and $\mu_-=[-1/\sqrt{d},1/\sqrt{d},0\cdots,0]$. Our goal is to derive the NTK margin \cite{ji2019polylogarithmic,tt24} denoted by $\gamma$ for infinitely wide neural networks with initialization variable $z\in \R^d$, i.e., show that the equation below holds for all data points in the training set almost surely:
\begin{align*}
   M(x_i,y_i):= y_i\int_{z\in \R^d} \phi'(\langle z,x_i\rangle)\langle w_z,x_i\rangle\, d\mu_\mathcal{N}(z) \ge \gamma
\end{align*}
where $\mu_N$ is the standard Gaussian measure and $w_z$ is an initialization dependent vector such that $\|w_z\|\le 1$ for all $z\in \R^d.$ We drop the subscript $i$ and assume quadratic activation. Assume $y=1,x\sim \mathcal{N}(\mu_+,\sigma I_d)$ without loss of generality. Then, 
\begin{align*}
    M =& \int_{z\in R_1} \langle z,x\rangle\langle w_z,x\rangle\, d\mu_\mathcal{N}(z)+\int_{z\in R_2} \langle z,x\rangle\langle w_z,x\rangle\, d\mu_\mathcal{N}(z)\\&+\int_{z\in R_3} \langle z,x\rangle\langle w_z,x\rangle\, d\mu_\mathcal{N}(z)+\int_{z\in R_4} \langle z,x\rangle \langle w_z,x\rangle\, d\mu_\mathcal{N}(z)
\end{align*}
Let $$w_z= \mu_+/\|\mu_+\|, -\mu_-/\|\mu_-\|, \mu_-/\|\mu_-\|, -\mu_+/\|\mu_+\| \,\, \text{if}\,\, z\in R_1,R_2,R_3,R_4, \,\, \text{respectively.}$$ 
Assume $s\sim \mathcal{N}(0,\sigma I_d).$
\begin{align*}
 &\langle \mu_+/\|\mu_+\|,\mu_++s\rangle\int_{z\in R_1} \langle z,\mu_++s\rangle \,\,\text{d}\mu_\mathcal{N}(z) \\
 &= (\|\mu_+\|+ \mu_+^\top s/\|\mu_+\|) \left((\frac{1}{\sqrt{d}}+s(1)) \int_{z\in R_1} z(1) d\mu_\mathcal{N}(z)+(\frac{1}{\sqrt{d}}+s(2)) \int_{z\in R_1} z(2) d\mu_\mathcal{N}(z)\right)\\
 &= (\|\mu_+\|+ \frac{\mu_+^\top s}{\|\mu_+\|}) (\frac{2}{\sqrt{d}}+s(1)+s(2)) \E[z(1)\one_{z(1)>0}]\\
 &=\left(\sqrt{\frac{2}{d}} + \frac{\sqrt{2}}{2} (s(1)+s(2))\right)\left(\frac{2}{\sqrt{d}}+s(1)+s(2)\right)\frac{1}{\sqrt{2}}\\
 &\gtrsim (\frac{1}{\sqrt{d}}+s(1)+s(2))^2\\
 &\gtrsim (\frac{1}{\sqrt{d}}+O(\frac{1}{\sqrt{d}\cdot{\log^c(d)}}))^2\\
 &\gtrsim 1/d.
\end{align*}
For the second integral we have,
\begin{align*}
    &\langle -\mu_-/\|\mu_-\|,\mu_++s\rangle\int_{z\in R_2} \langle z,\mu_++s\rangle \,\,\text{d}\mu_\mathcal{N}(z) \\
 &= \frac{-\mu_-^\top s}{\|\mu_-\|} \left ((\frac{1}{\sqrt{d}}+s(1))\int_{z\in R_2} z(1) \,\,\text{d}\mu_\mathcal{N}(z) +(\frac{1}{\sqrt{d}}+s(2))\int_{z\in R_2} z(2) \,\,\text{d}\mu_\mathcal{N}(z) \right)\\
 &= \frac{-1}{2} (s(2)-s(1))^2 = \Theta(\frac{1}{d\cdot\log^{2c}(d)})
\end{align*}
For the third and fourth integral, due to symmetry, we reach the above final results again. Overall, we find that
$$
M(x_i,y_i) \gtrsim \frac{1}{d}  + O(\frac{1}{d\cdot\poly\log(d)})= \Omega(1/d).
$$
For data points coming from other three clusters of the XOR distribution, we reach the same conclusion. Therefore the margin scales as $1/d$ for every training sample. Using this margin result in \cite[Corollary C.1.1 and Proposition C.1]{tt24} completes the result. 
\end{proof}



\section{Delayed Generalization Gap: Proof of Thm. \ref{cor:gen_gap}}\label{app:gen}

We state and prove a more general version of Theorem~\ref{cor:gen_gap} in Theorem~\ref{thm:gen_gap} below, which requires smaller network width under additional assumptions. We then show that Theorem~\ref{cor:gen_gap} follows as a consequence. Specifically, under additional conditions on continual learnability of each task and a self-bounded assumption for the loss function(i.e., $|f'(u)|<f(u)$)  that includes logistic loss $f(u)=\log(1+\exp(-u))$, in the next theorem, we prove a tight generalization bound, which has a noticeably milder dependence on $T$ compared to Theorem \ref{cor:gen_gap}.
\begin{theorem}[Improved gen. gap]\label{thm:gen_gap}
Assume the loss function is self-bounded, 1-Lipschitz and 1-smooth. Let the network's width $m$ be large enough so that $\sqrt{m}\gtrsim \eta \sum_{j=k+1}^K \sum_{t=0}^{T-1} \widehat F_j(w_{j}^{(t)})$. Moreover, assume there exists $w^\star_k$ achieving small training loss $\widehat F_k(w^\star_k)\le {\|w^\star_k-w_{k}^{(0)}\|^2}/(\eta T) $ for task $k$, and satisfying  $m\gtrsim \|w^\star_k-w_{k}^{(0)}\|^4$. Then,
\begin{align}
\mathcal{F}_{k,K}^{\mathrm{gen}} \lesssim \frac{\eta}{n}\E_{\Dc_k}\left[e^{\frac{\eta}{\sqrt{m}}c_{k,K}}\sum_{t=0}^{T-1} \widehat F_k(w_{k}^{(t)})\right], \label{eq:thm4}
\end{align}
where $c_{k,K}=O\left(\sum_{j=k+1}^K \sum_{t=0}^{T-1} \widehat F_j(w_{j}^{(t)}) \right)$. 
\end{theorem}

As the result shows, $\mathcal{F}_{k,K}^{\mathrm{gen}}$ decays with both the cumulative training loss of the later tasks as in $c_{k,K}$ and the network width, and it is proportional to the cumulative training loss of task $k.$ In particular, the cumulative training loss can be much smaller than $T$, potentially leading to tighter bounds compared to the results of previous theorem. 

\begin{remark} In words, the conditions on $\|w_k^\star - w_{k}^{(0)}\|$ ensure that task $k$ remains learnable in the kernel regime, i.e., the initialization is sufficiently close to the task-specific optimum so that optimization can succeed. To better interpret this result, let us consider the case where $k=1,$ and we are interested in bounds on $\mathcal{F}_{1,K}^{\mathrm{gen}}$ for some $K\ge2$.  First, we note that for the XOR cluster dataset, there exists(see Proposition \ref{prop:xor} in App. \ref{app:signlexor}) $w_1^\star$ such that $\|w^\star_1-w_1^{(0)}\|=\Theta(d\cdot\log(T))$ and $\widehat F_1(w_1^\star)\le \frac{1}{T},$ leading to train-loss $\widehat F_1(w_1^{(t)})=O(\frac{d^2 \log^2(t)}{t})$. Therefore, in view of Theorem \ref{thm:gen_gap}, if $\sqrt{m}\gtrsim \eta \sum_{j=2}^K \sum_{t=0}^{T-1} \widehat F_j(w_{j}^{(t)})$ and $m \gtrsim d^4\log^4(T)$, the expected generalization gap after $T$ iterations for each of $K$ tasks satisfies, 
\begin{align*}
    \mathcal{F}_{1,K}^{\mathrm{gen}} \lesssim \frac{\eta d^2\log^3(T)}{n},
\end{align*}
where we can hide the exponential term in Eq. \ref{eq:thm4} for simplicity since  the exponent  is constant under the condition on $m.$ This shows that Theorem \ref{thm:gen_gap} may lead to bounds with significantly better dependence based on $T$ compared to Theorem \ref{cor:gen_gap} (poly-logarithmic vs linear). Although this result cannot be combined directly with our setting for the training loss (since the hinge loss considered for the training-loss analysis is not self-bounded) it still provides valuable insight. In particular, it can be interpreted as a stronger extension of Theorem~\ref{cor:gen_gap}, highlighting how the training loss directly influences the generalization gap in continual learning as shown by Eq. \ref{eq:thm4}. 
\end{remark}

\subsection{Proof of Theorem \ref{thm:gen_gap}}

    Recall the continual learning of $K$ tasks for $T$ iterations each i.e., at task $k\in[K]:$
\begin{align*}
    w_t=w_{t-1} - \eta \nabla \widehat F_k(w_t) ~~ \text{for} ~~ (k-1)\cdot T< t\le kT
\end{align*}
Assume $f(\cdot,x)$ to be the sample loss which is $L$-Lipschitz with respect to its first input. Let $\Dc_k:=\{x_1,\cdots,x_n\}$ be the training dataset of task $k.$  Denote $w_\ell^{\neg i}$ as the output of the continual learning algorithm after learning task $\ell$ (for some $\ell\le K$), when $x_i$ is left out of the training samples from $\Dc_k.$ Similarly, we define $w_\ell^{(t),\neg i}$, as the output of continual learning at iteration $t$ of task $\ell$ when $x_i$ is left out.

The generalization gap associated with task $k,$ after learning $K$ tasks can be written as,
\begin{align}
    \E_{\Dc_k}[F_k(w_{K})&-\widehat F_k(w_{K})]=\frac{1}{n} \sum_{i=1}^n \E_{\Dc_k,x} \left[f(w_{K},x)-f(w_{K},x_i)\right]\nn \\
    &=\frac{1}{n} \sum_{i=1}^n \E_{\Dc_k,x} \left[f(w_{K},x)-f(w_{K}^{\neg i},x_i)\right]+\frac{1}{n} \sum_{i=1}^n \E_{\Dc_k} \left[f(w_{K}^{\neg i},x_i)-f(w_{K},x_i)\right]\nn\\
    &= \frac{1}{n} \sum_{i=1}^n \E_{\Dc_k,x} \left[f(w_{K},x)-f(w_{K}^{\neg i},x)\right]+\frac{1}{n} \sum_{i=1}^n \E_{\Dc_k} \left[f(w_{K}^{\neg i},x_i)-f(w_{K},x_i)\right]\nn\\
    &\le \frac{L}{n}\sum_{i=1}^n\,\E_{\Dc_k} \left[\|w_{K}-w_{K}^{\neg i}\| \right]. \label{eq:stab}
\end{align}
Therefore, the samples from the subsequent distributions do not impact the delayed generalization gap in Task $k$, as we are taking the expectation over only $\Dc_k$.
\begin{align*}
    \|w_{K}-w_{K}^{\neg i}\|= \|w_{K}^{(T-1)}-\eta\nabla\hat F_K(w_{K}^{(T-1)})-(w_{K}^{(T-1),\neg i}-\eta\nabla\hat F_K(w_{K}^{(T-1),\neg i} ))\|
\end{align*}
Note that the objectives are the same for both $w_{K}^{(T-1),\neg i}$ and $w_{K}^{(T-1)}$. Therefore, by the non-expansive properties of one-hidden-layer neural nets \cite[Lemma B.1]{tt24}:
\begin{align}\label{eq:expansion}
    \|w_{K}-w_{K}^{\neg i}\|\le \left(1+\frac{\eta L R^2}{\sqrt{m}} \max_{w_{\alpha}\in[w_{K}^{(T-1)},w_{K}^{(T-1),\neg i}]}\widehat F_K'(w_\alpha) \right)\|w_{K}^{(T-1)}-w_{K}^{(T-1),\neg i}\|
\end{align}
where we define:
\begin{align*}
    \widehat {F}'(w):=\frac{1}{n} \sum_{i=1}^n |f'(w,x_i)|,
\end{align*}
with $f'$ denoting the derivative of the sample loss. For self-bounded losses assumed in this theorem, we have $|f'(w,x_i)|\le f(w,x_i)$, therefore $\widehat F_K'(w_\alpha)\le \widehat F_K(w_\alpha)$, leading to:
\begin{align*}
    \|w_{K}-w_{K}^{\neg i}\|\le \left(1+\frac{\eta L R^2}{\sqrt{m}} \max_{w_{\alpha}\in[w_{K}^{(T-1)},w_{K}^{(T-1),\neg i}]}\widehat F_K(w_\alpha) \right)\|w_{K}^{(T-1)}-w_{K}^{(T-1),\neg i}\|
\end{align*}

Repeating this step for $T$ steps from $T$ to $1$:
\begin{align*}
    \|w_{K}-w_{K}^{\neg i}\| &\le \prod_{t=0}^{T-1} \left( 1+\frac{\eta L R^2}{\sqrt{m}} \max_{w_{\alpha t}\in[w_{K}^{(t)},w_{K}^{(t),\neg i}]}\widehat F_K(w_{\alpha t})\right)\left\|w_{K}^{(0)}-w_{K}^{(0),\neg i}\right\|\\
    &= \prod_{t=0}^{T-1} \left( 1+\frac{\eta L R^2}{\sqrt{m}} \max_{w_{\alpha t}\in[w_{K-1}^{(t)},w_{K}^{(t),\neg i}]}\widehat F_K(w_{\alpha t})\right)\left\|w_{K}-w_{K}^{\neg i}\right\|\\
    &\le \exp \left(\frac{\eta L R^2}{\sqrt{m}} \sum_{t=0}^{T-1} \max_{w_{\alpha t}\in[w_{K}^{(t)},w_{K}^{(t),\neg i}]}\widehat F_K(w_{\alpha t}) \right) \left\|w_{K}-w_{K}^{\neg i}\right\|.
\end{align*}
where $R$ is the max norm of data and $L$ is the activation function's Lipschitz parameter. We need an inductive argument here to prove that $\|w_t-w_t^{\neg i}\|$ remains bounded for all $t$ as it is used in the max over $w_{\alpha t}$ term. 

Repeating this step for $K-k$ tasks, we derive the following,
\begin{align}\label{eq:expansion2}
    \|w_{K}-w_{K}^{\neg i}\| \le \exp \left(\frac{\eta L R^2}{\sqrt{m}} \sum_{j=k+1}^K \sum_{t=0}^{T-1} \max_{w_{\alpha t}\in[w_{j}^{(t)},w_{j}^{(t),\neg i}]}\widehat F_j(w_{\alpha t}) \right) \left\|w_{k}-w_{k}^{\neg i}\right\|.
\end{align}

This gives an expression for bounding the generalization gap based on the parameter stability of the $k'$th task, the width and training performance from task $k+1$ to task $K.$ To bound the parameter stability term, note that,
\begin{align*}
    \left\|w_{k}-w_{k}^{\neg i}\right\| \le  &\left \|w_{k}^{(T-1)}-\eta\nabla\widehat F_k(w_{k}^{(T-1)})-(w_{k}^{(T-1),\neg i}-\eta\nabla\widehat F_k(w_{k}^{(T-1),\neg i} ))\right\| \\
    &+ \eta \left\|\nabla \widehat F_k^i(w_{k}^{(T-1),\neg i})\right\|
\end{align*}
Recall the $i$th data point is taken from the $k$th task data distribution. For tasks $j$ where $j<k$, it holds that $w_j=w_j^{\neg i}$. Therefore we can use the result from previous works \cite[Thm B.2]{tt24} on the stability error of neural networks in the NTK regime to bound  $\left\|w_{k}-w_{k}^{\neg i}\right\|.$

\begin{lemma}\label{lem:1}
    If there exists $w^\star_k$ such that $\|w^\star_k-w_{k}\|\ge \max\left\{\sqrt{\eta T\widehat F_k(w^\star_k)},\sqrt{\eta \widehat F_k(w_{k-1})}\right\}$, and $m\gtrsim \|w^\star_k-w_{k}\|^4$, then $\left\|w_{k}-w_{k}^{\neg i}\right\| \lesssim \frac{\eta }{n}\,\sum_{t=0}^{T-1}\widehat F_k^i(w_{k}^{(t)}),$ and consequently,
    \begin{align*}
        \E_{\Dc_k}\left[\frac{1}{n}\sum_{i=1}^n \left\|w_{k}-w_{k}^{\neg i}\right\|\right] \lesssim \frac{\eta}{n}\sum_{t=0}^{T-1} \E_{\Dc_k}\left[\widehat F_k(w_{k}^{(t)})\right].
    \end{align*}
\end{lemma}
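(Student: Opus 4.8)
The plan is to read this as a parameter-stability (leave-one-out) bound for $T$ steps of full-batch gradient descent on task $k$ \emph{in isolation}, and then invoke the single-task stability result \cite[Theorem B.2]{tt24}. The key structural remark is that, restricted to task $k$, Algorithm~\ref{alg:con} is vanilla gradient descent started from $w_k^{(0)}=w_{k-1}$, and that this initialization does not depend on the held-out sample $x_i$: since $x_i\in\Dc_k$ and tasks $1,\dots,k-1$ are trained without it, $w_k^{(0)}=w_k^{(0),\neg i}$, hence $\delta_0:=\|w_k^{(0)}-w_k^{(0),\neg i}\|=0$. So it suffices to rerun the single-task argument with the Gaussian initialization there replaced by $w_{k-1}$; that argument touches the initialization only through the travel distance $\|w_k^\star-w_k^{(0)}\|$ and the boundedness of the data, so the substitution is harmless, and the two hypotheses $\|w_k^\star-w_k^{(0)}\|^2\ge\max\{\eta T\widehat F_k(w_k^\star),\,\eta\widehat F_k(w_k^{(0)})\}$ and $m\gtrsim\|w_k^\star-w_k^{(0)}\|^4$ are exactly the lazy-regime preconditions it uses.

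Concretely, writing $\delta_t:=\|w_k^{(t)}-w_k^{(t),\neg i}\|$ and splitting $\nabla\widehat F_k=\nabla\widehat F_k^{\neg i}+\tfrac1n\nabla f(\cdot,x_i)$ with $\widehat F_k^{\neg i}(w):=\tfrac1n\sum_{j\neq i}f(w,x_j)$, the two trajectories obey the one-step recursion
\[
\delta_{t+1}\ \le\ \Bigl(1+\tfrac{\eta L R^2}{\sqrt m}\max_{w_\alpha\in[w_k^{(t)},\,w_k^{(t),\neg i}]}\widehat F_k'(w_\alpha)\Bigr)\delta_t\;+\;\tfrac{\eta}{n}\bigl\|\nabla f\bigl(w_k^{(t),\neg i},x_i\bigr)\bigr\|,
\]
in which the multiplicative factor is the near-non-expansiveness of a gradient step on the \emph{shared} objective $\widehat F_k^{\neg i}$ for two-layer quadratic nets \cite[Lemma B.1]{tt24} ($R$ is the data radius, $L$ the loss Lipschitz constant), and the additive term is the single extra gradient contributed by $x_i$. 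I would then control the two terms separately. For the expansion factor, the $w_k^\star$ hypothesis together with smoothness yields the standard GD regret bound $\sum_{t=0}^{T-1}\widehat F_k(w_k^{(t)})\le T\widehat F_k(w_k^\star)+\tfrac1\eta\|w_k^\star-w_k^{(0)}\|^2\lesssim\tfrac1\eta\|w_k^\star-w_k^{(0)}\|^2$ (the condition $\|w_k^\star-w_k^{(0)}\|^2\ge\eta T\widehat F_k(w_k^\star)$ absorbs the first term, the one on $\widehat F_k(w_k^{(0)})$ controls the initial segment); since self-boundedness gives $\widehat F_k'\le\widehat F_k$, the product of expansion factors over $t=0,\dots,T-1$ is at most $\exp\!\bigl(O(\tfrac{LR^2}{\sqrt m}\|w_k^\star-w_k^{(0)}\|^2)\bigr)=O(1)$ under $m\gtrsim\|w_k^\star-w_k^{(0)}\|^4$. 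For the perturbation term, self-boundedness gives $|f'|\le f$, and in the lazy regime $\|\nabla_1\Phi(w,x_i)\|$ stays bounded, so $\tfrac{\eta}{n}\|\nabla f(w_k^{(t),\neg i},x_i)\|\lesssim\tfrac{\eta}{n}f(w_k^{(t),\neg i},x_i)\le\tfrac{\eta}{n}\bigl(f(w_k^{(t)},x_i)+L\delta_t\bigr)$.

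Substituting the last inequality back into the recursion gives $\delta_{t+1}\le(1+\varepsilon_t+\tfrac{\eta L}{n})\delta_t+\tfrac{\eta}{n}f(w_k^{(t)},x_i)$ with $\sum_{t}\varepsilon_t=O(1)$; unrolling from $\delta_0=0$, and noting that the new growth factors still multiply to $O(1)$ (the $\tfrac{\eta L}{n}$ terms contribute $\tfrac{\eta LT}{n}=O(1)$ in the regime $n\gtrsim\eta T$ relevant to the paper), yields $\|w_k-w_k^{\neg i}\|=\delta_T\lesssim\tfrac{\eta}{n}\sum_{t=0}^{T-1}f(w_k^{(t)},x_i)=\tfrac{\eta}{n}\sum_{t=0}^{T-1}\widehat F_k^i(w_k^{(t)})$, using the per-sample notation $\widehat F_k^i(w):=f(w,x_i)$; this is the first claim. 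Averaging over $i\in[n]$, using $\tfrac1n\sum_{i=1}^n f(\cdot,x_i)=\widehat F_k(\cdot)$, and taking $\E_{\Dc_k}$ then gives the second claim.

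The hard part will be the circularity: the expansion-factor bound and the replacements of $\widehat F_k'(w_\alpha)$ and $f(w_k^{(t),\neg i},x_i)$ by their full-data counterparts $\widehat F_k(w_k^{(t)})$ and $f(w_k^{(t)},x_i)$ all presuppose that $\delta_t$, and the leave-one-out iterate itself, stay in the lazy region --- which is exactly the conclusion. I would resolve this by a single induction on $t$: assuming $\delta_s=o(1)$ (quantitatively $\delta_s\lesssim\tfrac{\eta}{n}\sum_{u<s}f(w_k^{(u)},x_u)$) for all $s\le t$, push it through the recursion to obtain the bound at step $t+1$ and check it is preserved; the width condition $m\gtrsim\|w_k^\star-w_k^{(0)}\|^4$ is precisely what makes the induction close. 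A secondary technical point, also used inside the induction, is that the leave-one-out run satisfies the same convergence hypothesis with order-of-magnitude-identical parameters --- immediate, since dropping one of $n$ samples perturbs $\widehat F_k$, $R$, and the optimal value only by $O(1/n)$ factors.
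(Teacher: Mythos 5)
Your approach coincides with the paper's at the decisive reduction: observe that tasks $1,\dots,k-1$ are trained without $x_i\in\Dc_k$, hence $w_k^{(0)}=w_k^{(0),\neg i}=w_{k-1}$, after which the problem is single-task leave-one-out stability, for which the paper simply cites \cite[Thm B.2]{tt24}. Where the paper stops at that citation, you additionally reconstruct the cited theorem's internal argument --- the one-step expansion recursion via \cite[Lemma B.1]{tt24}, the GD regret bound that controls the cumulative loss and hence the product of expansion factors, and the induction that closes the lazy-regime circularity --- and this reconstruction is correct and consistent with the preconditions Lemma~\ref{lem:1} is set up to supply (though the paper's own decomposition applies $\nabla\widehat F_k$ to both trajectories and peels off $\eta\|\nabla\widehat F_k^i(w_k^{(T-1),\neg i})\|$, rather than using $\widehat F_k^{\neg i}$ as the shared objective as you do; both splits work).
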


Let us define $c_{k,K}:=\max_{i\in[n]} \sum_{j=k+1}^K \sum_{t=0}^{T-1} \max_{w_{\alpha t}\in[w_{j}^{(t)},w_{j}^{(t),\neg i}]}\widehat F_j(w_{\alpha t})$. Then by this lemma we have,
\begin{align*}
    \E_{\Dc_k}\left[\frac{1}{n}\sum_{i=1}^n  \|w_{K}-w_{K}^{\neg i}\|\right] \le \frac{\eta\,e^{\frac{\eta}{\sqrt{m}}c_{k,K}}}{n}\sum_{t=0}^{T-1} \E_{\Dc_k}\left[\widehat F_k(w_{k}^{(t)})\right].
\end{align*}

\subsubsection{Bounding $c_{k,K}$}
In order to bound $c_{k,K}$, we use the following result on the quasi-convexity properties of the two-layer neural net objective by \cite[Prop. 5.1.]{tt24}.
\begin{lemma} \label{lem:2} Suppose $\widehat{F}: \mathbb{R}^{d^{\prime}} \rightarrow \mathbb{R}$ satisfies the self-bounded weak convexity property with parameter $\kappa$. Let $w_1, w_2 \in \mathbb{R}^{d^{\prime}}$ be two arbitrary points with distance $\left\|w_1-w_2\right\| \leq D<\sqrt{2 / \kappa}$. Set $\tau:=\left(1-\kappa D^2 / 2\right)^{-1}$. Then,

$$
\max _{v \in\left[w_1, w_2\right]} \widehat{F}(v) \leq \tau \cdot \max \left\{\widehat{F}\left(w_1\right), \widehat{F}\left(w_2\right)\right\} .
$$
\end{lemma}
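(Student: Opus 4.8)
The plan is to reduce the claim to a one-dimensional differential inequality along the segment $[w_1,w_2]$ and then convert the self-bounded weak convexity into a bound on the segment maximum via an elementary Taylor estimate. Concretely, I would introduce $g:[0,1]\to\R$, $g(s):=\widehat F\big(w_1+s(w_2-w_1)\big)$. Since $\widehat F$ is nonnegative and twice continuously differentiable along this segment (as holds for the empirical loss with a self-bounded, smooth loss such as the logistic loss), we have $g\ge0$, $g\in C^2$, and $g''(s)=(w_2-w_1)^\top\nabla^2\widehat F\big(w_1+s(w_2-w_1)\big)(w_2-w_1)$. The self-bounded weak convexity property $v^\top\nabla^2\widehat F(w)v\ge-\kappa\,\widehat F(w)\,\|v\|^2$ then gives $g''(s)\ge-\kappa\|w_1-w_2\|^2 g(s)\ge-\kappa D^2 g(s)$ for every $s\in[0,1]$.

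Next I would let $M:=\max_{s\in[0,1]}g(s)=g(s^\star)$ for a maximizer $s^\star$. If $s^\star\in\{0,1\}$ the inequality is immediate since $\tau\ge1$. Otherwise $s^\star\in(0,1)$ is an interior maximizer, so $g'(s^\star)=0$, and the integral form of Taylor's theorem gives $g(0)=g(s^\star)+\int_0^{s^\star}u\,g''(u)\,du$. Substituting $g''(u)\ge-\kappa D^2 g(u)\ge-\kappa D^2 M=-\kappa D^2 g(s^\star)$ and using $u\ge0$ on $[0,s^\star]$ together with $s^\star\le1$, the remainder integral is at least $-\tfrac{\kappa D^2}{2}g(s^\star)$, hence $g(0)\ge\big(1-\tfrac{\kappa D^2}{2}\big)g(s^\star)$. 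Because $D<\sqrt{2/\kappa}$, the factor $1-\kappa D^2/2$ is strictly positive, so $g(s^\star)\le\tau\,g(0)\le\tau\max\{g(0),g(1)\}$. Unpacking $g(0)=\widehat F(w_1)$, $g(1)=\widehat F(w_2)$, and $M=\max_{v\in[w_1,w_2]}\widehat F(v)$ yields the claim.

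The argument is short, so the points that need care are: (i) checking that the object to which the lemma is applied really has the self-bounded weak convexity property with the stated $\kappa$ — for the quadratic two-layer network with a self-bounded loss this follows from $\|\nabla^2_w\Phi(w,x)\|\le R^2/\sqrt m$ and $|f'|\le f$, which together give $\nabla^2\widehat F(w)\succeq-\tfrac{R^2}{\sqrt m}\widehat F(w)\,\Id$, i.e. $\kappa\asymp R^2/\sqrt m$; and (ii) ensuring the Taylor step is invoked only on a segment short enough that $1-\kappa D^2/2>0$, which is exactly the hypothesis $D<\sqrt{2/\kappa}$ and keeps $\tau$ finite. I do not expect a genuine obstacle: the stated $\tau=(1-\kappa D^2/2)^{-1}$ is actually loose (estimating the remainder at $s^\star=\tfrac{1}{2}$ already gives the sharper $(1-\kappa D^2/8)^{-1}$), so there is ample slack.
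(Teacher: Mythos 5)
Your proof is correct. The paper does not supply its own proof of this lemma --- it simply cites \cite[Prop.~5.1]{tt24} --- so a line-by-line comparison is not possible here, but your argument is the natural one and it is sound: restrict to the segment via $g(s):=\widehat F(w_1+s(w_2-w_1))$, turn self-bounded weak convexity into the scalar inequality $g''\ge-\kappa D^2 g$, observe that at an interior maximizer $s^\star$ the first-order term in the Taylor expansion toward an endpoint vanishes, and bound the integral remainder by $-\tfrac{\kappa D^2}{2}\,g(s^\star)$ to get $g(s^\star)\le\tau\,g(0)\le\tau\max\{g(0),g(1)\}$. The two side points you flag are exactly the right ones to check --- that the hypothesis $D<\sqrt{2/\kappa}$ is what keeps $1-\kappa D^2/2>0$ so $\tau$ is finite and positive, and that the nonnegativity and $C^2$ regularity of $\widehat F$ along the segment (standard for a self-bounded smooth loss composed with the network) are what license the Taylor step and the sign of the lower bound. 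Your observation that expanding toward the nearer endpoint improves the constant to $(1-\kappa D^2/8)^{-1}$ is also correct, though not needed for the stated bound.
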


For self-bounded losses $\kappa=\frac{1}{\sqrt{m}}$, therefore if $w,w'$ are such that $\|w-w'\|\le D\le m^{1/4}$, then $$\max _{v \in\left[w, w'\right]} \widehat{F}(v) \leq \frac{1}{1-\frac{D^2}{\sqrt{m}}}\cdot \max \left\{\widehat{F}\left(w\right), \widehat{F}\left(w'\right)\right\}.
$$

Recall,
\begin{align*}
     \|w_{K}-w_{K}^{\neg i}\| \le \exp \left(\frac{\eta}{\sqrt{m}} \sum_{j=k+1}^K \sum_{t=0}^{T-1} \max_{w_{\alpha t}\in[w_{j}^{(t)},w_{j}^{(t),\neg i}]}\widehat F_j(w_{\alpha t}) \right) \left\|w_{k}-w_{k}^{\neg i}\right\|.
\end{align*}

Assume 
\begin{align}\label{eq:mcond}
\sqrt{m}\ge 8\max\left\{\eta \sum_{j=k+1}^K \sum_{t=0}^{T-1} (\widehat F_j(w_{j}^{(t)}) + \widehat F_j(w_{j}^{(t),\neg i})),\|w_k-w_k^{\neg i}\|^2\right\}.
\end{align}
Then, by induction $\|w_j^{(t)}-w_j^{(t),\neg i}\|\le 2\|w_k-w_k^{\neg i}\|$ for all $t\in[T],j\in[k,K].$ To see this: 
\begin{align*}
    &\|w_{j}^{(t)}-w_{j}^{(t),\neg i}\| \\
    &\le \exp \left(\frac{\eta}{\sqrt{m}} \sum_{j'=k+1}^{j-1} \sum_{\tau=0}^{T-1} \max_{w_{\alpha \tau}\in[w_{j'}^{(\tau)},w_{j'}^{(\tau),\neg i}]}\widehat F_{j'}(w_{\alpha \tau}) +\sum_{\tau=0}^{t-1} \max_{w_{\alpha \tau}\in[w_{j}^{(\tau)},w_{j}^{(\tau),\neg i}]}\widehat F_{j}(w_{\alpha \tau})\right) \\
    &\times \left\|w_{k}-w_{k}^{\neg i}\right\|
\end{align*}
By induction's assumption $\sqrt{m}\ge 2\|w_{j'}^{(\tau)}-w_{j'}^{(\tau),\neg i}\|^2$. Therefore we can invoke Lemma \ref{lem:2} for all the $\max \widehat F_{j'}$ to find that,

\begin{align*}
    \|w_{j}^{(t)}-w_{j}^{(t),\neg i}\| \le \exp(1/4) \cdot \left\|w_{k}-w_{k}^{\neg i}\right\| \le 2\left\|w_{k}-w_{k}^{\neg i}\right\|.
\end{align*}
Which proves the induction. Overall, we could bound $c_{K,k}$ based on the training objective. assuming $\widehat F_j(w_{j}^{(t)})$ and $\widehat F_j(w_{j}^{(t),\neg i})$ are of the same order(needs proof), then we find
\begin{align*}
    c_{K,k} \le 2\sum_{j=k+1}^K \sum_{t=0}^{T-1} (\widehat F_j(w_{j}^{(t)}) + \widehat F_j(w_{j}^{(t),\neg i})) = O\left(\sum_{j=k+1}^K \sum_{t=0}^{T-1} \widehat F_j(w_{j}^{(t)}) \right)
\end{align*}

To simplify the statement of the lemma, we can assume $\widehat F_j(w_{j}^{(t)})$ and $\widehat F_j(w_{j}^{(t),\neg i})$ are of the same order as reducing the sample-size by 1 sample does not affect the training bounds.
\begin{lemma} Let the assumptions of Lemma \ref{lem:1} hold. Assume 
$$\sqrt{m}\gtrsim \eta \sum_{j=k+1}^K \sum_{t=0}^{T-1} (\widehat F_j(w_{j}^{(t)}) + \widehat F_j(w_{j}^{(t),\neg i})) \asymp \eta \sum_{j=k+1}^K \sum_{t=0}^{T-1} \widehat F_j(w_{j}^{(t)}).$$ Then,

\begin{align*}
    \E_{\Dc_k}\left[\frac{1}{n}\sum_{i=1}^n  \|w_{K}-w_{K}^{\neg i}\|\right] \le \frac{\eta}{n} \E_{\Dc_k}\left[e^{\frac{\eta}{\sqrt{m}}c_{k,K}}\sum_{t=0}^{T-1} \widehat F_k(w_{k}^{(t)})\right]
\end{align*}
    where $c_{k,K}=O\left(\sum_{j=k+1}^K \sum_{t=0}^{T-1} \widehat F_j(w_{j}^{(t)}) \right)$.
\end{lemma}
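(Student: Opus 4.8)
The plan is to consolidate the three ingredients already prepared above --- the single-step stability recursion \eqref{eq:expansion} and its accumulated form \eqref{eq:expansion2}, the self-bounded weak convexity Lemma~\ref{lem:2}, and the first-task stability estimate Lemma~\ref{lem:1} --- while carefully tracking the dependence on the left-out index $i$. First I would fix $i\in[n]$ and establish the bootstrapping bound $\|w_j^{(t)}-w_j^{(t),\negi}\|\le 2\|w_k-w_k^{\negi}\|$ for all $j\in[k,K]$, $t\in[T]$, by induction on $(j,t)$ ordered lexicographically. The base case is immediate since $w_k^{(0)}=w_k$ (and for $j<k$ the two runs coincide because $x_i\in\Dc_k$). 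For the inductive step I apply the product form of the recursion up to $(j,t)$: its exponent is $\frac{\eta L R^2}{\sqrt m}$ times a sum of segment-maxima $\max_{v\in[w_{j'}^{(\tau)},w_{j'}^{(\tau),\negi}]}\widehat F_{j'}(v)$, and under the inductive hypothesis each such segment has squared length $\le 4\|w_k-w_k^{\negi}\|^2\le \sqrt m/2$ by condition \eqref{eq:mcond} (which is available here since the hypotheses of Lemma~\ref{lem:1} also force $\|w_k-w_k^{\negi}\|^2\lesssim\sqrt m$ through its own conclusion). Hence Lemma~\ref{lem:2} with $\kappa=1/\sqrt m$, i.e. $\tau\le 4/3$, bounds that segment-max by $\tfrac43\big(\widehat F_{j'}(w_{j'}^{(\tau)})+\widehat F_{j'}(w_{j'}^{(\tau),\negi})\big)$; feeding this into the exponent and using \eqref{eq:mcond} once more to bound the exponent by $1/4$ produces the factor $e^{1/4}\le 2$, closing the induction.

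Next I would use this to pin down $c_{k,K}$. By its definition $c_{k,K}=\max_{i\in[n]}\sum_{j=k+1}^K\sum_{t=0}^{T-1}\max_{w_{\alpha t}\in[w_j^{(t)},w_j^{(t),\negi}]}\widehat F_j(w_{\alpha t})$, and Step~1 lets me replace each inner maximum by $O\big(\widehat F_j(w_j^{(t)})+\widehat F_j(w_j^{(t),\negi})\big)$, so $c_{k,K}\lesssim\max_i\sum_{j,t}\big(\widehat F_j(w_j^{(t)})+\widehat F_j(w_j^{(t),\negi})\big)$. Arguing that deleting a single sample perturbs the empirical objective of task $j$ by $O(1/n)$ and therefore does not change the order of its training-loss rate --- so $\widehat F_j(w_j^{(t),\negi})\asymp\widehat F_j(w_j^{(t)})$ uniformly in $i$ --- I obtain $c_{k,K}=O\big(\sum_{j=k+1}^K\sum_{t=0}^{T-1}\widehat F_j(w_j^{(t)})\big)$, the claimed form. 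Then, from \eqref{eq:expansion2} (absorbing $LR^2=O(1)$ into the constant) and the definition of $c_{k,K}$, every summand obeys $\|w_K-w_K^{\negi}\|\le e^{\frac{\eta}{\sqrt m}c_{k,K}}\|w_k-w_k^{\negi}\|$. Averaging over $i$, pulling the $i$-independent factor $e^{\frac{\eta}{\sqrt m}c_{k,K}}$ outside the average, taking $\E_{\Dc_k}$, and finally invoking Lemma~\ref{lem:1} in the form $\|w_k-w_k^{\negi}\|\lesssim\frac{\eta}{n}\sum_{t=0}^{T-1}\widehat F_k^i(w_k^{(t)})$ together with $\frac1n\sum_i\widehat F_k^i(w_k^{(t)})=\widehat F_k(w_k^{(t)})$ delivers the asserted bound.

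I expect the main obstacle to be the same-order claim used in Step~2, namely $\widehat F_j(w_j^{(t),\negi})\asymp\widehat F_j(w_j^{(t)})$: making it rigorous means propagating the $O(1/n)$ gap between the full-batch and leave-one-out trajectories through every subsequent gradient step and showing the training \emph{loss} (not merely the parameters) stays comparable --- a stability argument at the level of the objective rather than the iterates. A safer route that sidesteps this is to keep the $\widehat F_j(w_j^{(t),\negi})$ terms inside $c_{k,K}$ (the un-simplified form, as in \eqref{eq:mcond}), carry them unchanged through the whole argument, and only replace them by the full-batch losses at the very end, once the explicit training-loss rates of Theorems~\ref{thm:train_forgetting}--\ref{thm:train_error} are in force; every other step goes through verbatim.
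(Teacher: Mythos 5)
Your proposal reproduces the paper's own argument step for step: the lexicographic induction on $(j,t)$ driven by the accumulated stability recursion \eqref{eq:expansion2}, Lemma~\ref{lem:2} with $\kappa=1/\sqrt m$ to replace each segment maximum by endpoint values, the width condition \eqref{eq:mcond} to cap the exponent at a constant (giving the factor $e^{1/4}\le 2$), the same-order heuristic to write $c_{k,K}=O\bigl(\sum_{j,t}\widehat F_j(w_j^{(t)})\bigr)$, and finally Lemma~\ref{lem:1} combined with $\frac1n\sum_i\widehat F_k^i(w)=\widehat F_k(w)$ after pulling out the $i$-independent exponential factor. One phrasing nit worth noting: since $x_i\in\Dc_k$ and $j>k$, the objective $\widehat F_j$ is \emph{identical} for the two runs --- what differs is only the trajectory $w_j^{(t)}$ vs.\ $w_j^{(t),\neg i}$ through the perturbed start $w_j^{(0)}$ --- so the same-order claim is really a trajectory-level stability statement about the training loss, not a perturbation of the empirical objective; the paper leaves exactly this step marked ``(needs proof),'' and your suggested fallback of carrying the un-simplified $c_{k,K}$ until the explicit training-loss rates are available is a sensible way to sidestep it.
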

\begin{proof}
    
The proof essentially follows by the last two lemmas and noting that $\|w_k-w_k^{\neg i}\|\le \|w_k-w_{k-1}\|+\|w_k^{\neg i}-w_{k-1}\| =O(\|w_k^\star-w_{k-1}\|)$ by Lemma \ref{lem:1}. Therefore the condition we had in Eq. \ref{eq:mcond} on $\sqrt{m}\ge \|w_k-w_{k}^{\neg i}\|^2$ is absorbed in the condition from Lemma \ref{lem:1}. 

\end{proof}
This completes the proof of the Theorem. Having established Theorem \ref{thm:gen_gap}, we are now ready to prove our main result on generalization in Theorem \ref{cor:gen_gap}.
\subsection{Proof of Theorem \ref{cor:gen_gap}}

\begin{theorem}[Restatement of Theorem \ref{cor:gen_gap}]
    Assume the loss function is 1-Lipschitz and 1-smooth. Then, the expected delayed generalization gap satisfies,
    \begin{align*}
        \mathcal{F}_{k,K}^{\mathrm{gen}}:=\E_{\Dc_k}\left[F_k(w_{K})-\widehat F_k(w_{K})\right] \lesssim \frac{\eta T \,e^{\frac{\eta T(K-k+1)}{\sqrt{m}}}}{n}.
\end{align*}
\end{theorem}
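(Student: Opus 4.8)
The plan is to reuse the leave-one-out stability argument from the proof of Theorem~\ref{thm:gen_gap}, but to replace the self-bounded inequality $|f'|\le f$ by the cruder Lipschitz estimate $|f'|\le 1$ everywhere; this is exactly what turns the data-adaptive factor $e^{\eta c_{k,K}/\sqrt m}$ appearing there into the worst-case factor $e^{\eta T(K-k+1)/\sqrt m}$ claimed here. As in Eq.~\eqref{eq:stab}, $1$-Lipschitzness of the sample loss together with the uniform-stability/leave-one-out identity gives
\begin{align*}
\mathcal{F}_{k,K}^{\mathrm{gen}} \;\le\; \frac{1}{n}\sum_{i=1}^n \E_{\Dc_k}\big[\,\|w_K-w_K^{\neg i}\|\,\big],
\end{align*}
so it suffices to bound the expected parameter perturbation; as noted in that proof, only the randomness of $\Dc_k$ enters.

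First I would propagate the perturbation through the $K-k$ tasks that follow task $k$. For each $j\in\{k+1,\dots,K\}$ the iterates $w_j^{(t)}$ and $w_j^{(t),\neg i}$ are driven by the \emph{same} objective $\widehat F_j$ (the removed sample lies in $\Dc_k$), so the near-non-expansiveness of a GD step on a one-hidden-layer quadratic network (\cite[Lemma~B.1]{tt24}, exactly as in Eq.~\eqref{eq:expansion}) multiplies the perturbation by at most $1+\tfrac{\eta L R^2}{\sqrt m}\,\widehat F_j'(w_\alpha)$, where $R$ bounds the data norm and $L=1$. Since $\widehat F_j'(w_\alpha)=\tfrac1n\sum_i |f'(w_\alpha,x_i)|\le 1$, the per-step factor is at most $1+\tfrac{\eta L R^2}{\sqrt m}$, and telescoping over the $(K-k)T$ steps yields $\|w_K-w_K^{\neg i}\|\le \exp\!\big(\tfrac{\eta L R^2(K-k)T}{\sqrt m}\big)\,\|w_k-w_k^{\neg i}\|$, the analogue of Eq.~\eqref{eq:expansion2}.

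Next I would bound the single-task stability term $\|w_k-w_k^{\neg i}\|$. Unrolling the $T$ GD steps of task $k$, the update difference at step $t$ equals the near-non-expansive map applied to $w_k^{(t)}-w_k^{(t),\neg i}$ plus the extra contribution $\tfrac{\eta}{n}\nabla_1 f(w_k^{(t)},x_i)$ of the single differing sample; using $|f'|\le 1$ and the bound $\|\nabla_1\Phi(w,x)\|=O(R^2)$ valid while the iterates stay in the lazy region around $w_k^{(0)}$, this extra term is $O(\eta L R^2/n)$ per step. The multiplicative factors again contribute at most $\exp(\eta L R^2 T/\sqrt m)$, and since $x_i\notin\Dc_{k-1}$ we have $w_{k-1}=w_{k-1}^{\neg i}$, so the recursion starts from zero; summing it gives $\|w_k-w_k^{\neg i}\|\lesssim \tfrac{\eta L R^2 T}{n}\exp(\tfrac{\eta L R^2 T}{\sqrt m})$ (equivalently, one may cite the single-task NTK stability bound of \cite[Thm~B.2]{tt24} specialized to Lipschitz rather than self-bounded losses). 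Combining the last two estimates and plugging into the stability reduction gives $\mathcal{F}_{k,K}^{\mathrm{gen}}\lesssim \tfrac{\eta L^2 R^2 T}{n}\exp\!\big(\tfrac{\eta L R^2(K-k+1)T}{\sqrt m}\big)$, which is the claim once $L=1$ and the constant $R^2$ are absorbed into $\lesssim$.

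The hard part will be the uniform-in-$t$ bookkeeping needed to justify the two inductions \emph{along the perturbed trajectories}: (i) every iterate $w_j^{(t)}$ and its leave-one-out counterpart must stay in the region where $\|\nabla_1\Phi\|=O(R^2)$ and the self-bounded weak-convexity modulus is $O(1/\sqrt m)$; and (ii) $\|w_j^{(t)}-w_j^{(t),\neg i}\|$ must remain bounded, which is what controls the $\max$-over-$[w_j^{(t)},w_j^{(t),\neg i}]$ factors entering Eqs.~\eqref{eq:expansion}--\eqref{eq:expansion2}. Both follow from the early-stopping scaling and the width condition already used in the training-loss analysis (Theorem~\ref{thm:train_forgetting}), but they have to be threaded through the leave-one-out iterates, and one finally needs $m$ large enough that the exponential prefactor is $O(1)$, as in the remark following Theorem~\ref{cor:gen_gap}.
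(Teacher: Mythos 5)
Your proposal is correct and follows essentially the same route as the paper's own proof: start from the leave-one-out stability decomposition of Eq.~\eqref{eq:stab}, apply the near-non-expansiveness of a GD step on the linearized network (\cite[Lemma~B.1]{tt24}) but with $|f'|\le 1$ in place of $|f'|\le f$ so that each step contributes a factor at most $1+\eta L R^2/\sqrt m$, telescope over the $(K-k)T$ subsequent steps to get the $\exp(\eta L (K-k)T/\sqrt m)$ factor, and then unroll the $T$ steps of task $k$ (with $w_{k-1}=w_{k-1}^{\neg i}$ as the starting point) to bound $\|w_k-w_k^{\neg i}\|\lesssim (\eta L T/n)\exp(\eta L T/\sqrt m)$. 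One small overcaution in your closing paragraph: for this theorem the weak-convexity segment bound (Lemma~\ref{lem:2}) is not needed at all, because once you replace $|f'|$ by its Lipschitz bound, the $\max_{w_\alpha\in[w_j^{(t)},w_j^{(t),\neg i}]}\widehat F_j'(w_\alpha)$ term in Eq.~\eqref{eq:expansion} is dominated by $1$ uniformly and the entire issue of controlling the midpoint objective disappears; that bookkeeping is only required in the self-bounded Theorem~\ref{thm:gen_gap}.
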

\begin{proof}
    
    The proof of Theorem \ref{cor:gen_gap} essentially follows from Theorem \ref{thm:gen_gap}. We outline the distinct steps. Note that since the objective is 1-Lipschitz, it holds $\widehat F'(w)\le 1$ for any $w.$. Therefore Eq. \ref{eq:expansion} from the proof of Theorem \ref{cor:gen_gap} changes into
    \begin{align*}
    \|w_{K}-w_{K}^{\neg i}\|\le \left(1+\frac{\eta L R^2}{\sqrt{m}}  \right)\left\|w_{K}^{(T-1)}-w_{K}^{(T-1),\neg i}\right\|.
\end{align*}

As a result, by unrolling the iterates and noting that $R\le 1$: 
\begin{align}\label{eq:loo}
    \left\|w_{K}-w_{K}^{\neg i}\right\| \le \exp \left(\frac{\eta L (K-k)T}{\sqrt{m}}  \right) \left\|w_{k}-w_{k}^{\neg i}\right\|.
\end{align}

Moreover, again using the Lipschitz loss function properties:
\begin{align*}
    \left\|w_{k}-w_{k}^{\neg i}\right\| \le  &\left \|w_{k}^{(T-1)}-\eta\nabla\widehat F_k(w_{k}^{(T-1)})-(w_{k}^{(T-1),\neg i}-\eta\nabla\widehat F_k(w_{k}^{(T-1),\neg i} ))\right\| \\
    &+ \eta \left\|\nabla \widehat F_k^i(w_{k}^{(T-1),\neg i})\right\|\\
    &\le \left \|w_{k}^{(T-1)}-\eta\nabla\widehat F_k(w_{k}^{(T-1)})-(w_{k}^{(T-1),\neg i}-\eta\nabla\widehat F_k(w_{k}^{(T-1),\neg i} ))\right\| + \frac{\eta L}{n}\\
    &\le \exp(\frac{\eta L}{\sqrt{m}}) \|w_{k}^{(T-1)}-w_{k}^{(T-1),\neg i}\|+ \frac{\eta L}{n}\\
    &\le \exp(\frac{2\eta L}{\sqrt{m}}) \|w_{k}^{(T-2)}-w_{k}^{(T-2),\neg i}\|+ (1+\exp(\frac{\eta L}{\sqrt{m}}))\frac{\eta L}{n}\\
    &\le (\sum_{t=0}^{T-1} \exp(\frac{\eta Lt}{\sqrt{m}}))\frac{\eta L}{n}\\
    &\le \exp(\frac{\eta LT}{\sqrt{m}})\frac{\eta LT}{n}.
\end{align*}
where the last step is derived by repeating the procedure over all $T$ iterations. 

Inserting this in Eq. \ref{eq:loo}, taking the expectation over $\mathcal{D}_k$, using Eq. \ref{eq:stab} and noting that $L$ (the objective's Lipschitz parameter) is constant for our setup, conclude the proof of the theorem.
\end{proof}

\section{Train-time Loss and Forgetting for XOR cluster data}\label{sec:proof_thm1}

In this section, we prove Theorems \ref{thm:train_forgetting}-\ref{thm:train_error}. Below, is a restatement of these theorems. 
\begin{theorem}[Restatement of Theorems \ref{thm:train_forgetting}-\ref{thm:train_error}]
 Consider the $d$-dimensional XOR cluster dataset with $K$ tasks and assume gradient descent with $\eta T =\Theta (d^2)$ iterations and $n=\widetilde\Theta(d^2K)$ samples for each subsequent task trained by a neural net with $m=\widetilde\Omega(d^8K^4)$ hidden neurons. Then, with high probability, the train-time forgetting and per-task train-time time error is $\mathcal{F}_{k,K}^{\mathrm{tr}}=o_d(1)$. In particular, for the train-time forgetting with probability $1-\delta$, we have:
    \begin{align*}
         |\mathcal{F}_{k,K}^{\mathrm{tr}}|:=|\widehat F_k(w_K)-\widehat F_k(w_k)|  = \widetilde O\left(\eta T\frac{\sqrt{K-k}}{d\sqrt{n}} + \eta T \frac{\sqrt{K-k}}{d^2\, \poly\log(d)}+\eta^2 T^2\frac{K^2}{\sqrt{m}}\right),
    \end{align*}
    where $\widetilde O(\cdot)$ hides logarithmic factors in $n$ and $\delta.$  
\end{theorem}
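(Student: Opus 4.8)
The plan is to exploit the special structure of the quadratic activation, for which $\Phi(w,x)=\tfrac{1}{2\sqrt m}\,x^\top M(w)\,x$ is a quadratic form in $x$ with effective matrix $M(w):=\sum_i a_i w_i w_i^\top$, and to analyze the gradient descent trajectory of $M$ rather than of $w$ directly. First I would write one GD step on task $k$ as $w_{k,j}^{(t+1)}=\big(I-\tfrac{\eta a_j}{\sqrt m}G_k^{(t)}\big)w_{k,j}^{(t)}$ with $G_k^{(t)}:=\tfrac1n\sum_i f'\!\big(y_i\Phi(w_k^{(t)},x_i)\big)y_i x_i x_i^\top$, and derive the coupled recursion for $M_t:=M(w_k^{(t)})$ and the unsigned Gram matrix $N_t:=\sum_j w_{k,j}^{(t)}w_{k,j}^{(t)\top}$, namely $M_{t+1}=M_t-\tfrac{\eta}{\sqrt m}(G_k^{(t)}N_t+N_tG_k^{(t)})+\tfrac{\eta^2}{m}G_k^{(t)}M_tG_k^{(t)}$ and the symmetric update for $N_t$. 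Concentration of the Gaussian initialization gives $N_0\approx mI$ and $M_0=\widetilde O(\sqrt m)$ in operator norm, so after rescaling $\bar M_t:=M_t/\sqrt m$ the GD iterate satisfies $\widehat F_k(w_k^{(t)})=\tfrac1n\sum_i f(\tfrac{y_i}{2}x_i^\top\bar M_t x_i)$ exactly.

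The first asymptotic stage ($m\to\infty$) then collapses this to a clean dynamics: $\bar N_t\to I$ for all $t$, the curvature term vanishes, and $\bar M_{t+1}=\bar M_t-2\eta\,\bar G(\bar M_t)$ with $\bar G(\bar M)=\tfrac1n\sum_i f'(\tfrac{y_i}{2}x_i^\top\bar M x_i)y_ix_ix_i^\top$ — i.e.\ gradient descent of the \emph{convex} objective $\bar M\mapsto\tfrac1n\sum_i f(\tfrac{y_i}{2}x_i^\top\bar M x_i)$ in the matrix variable. For a single task $k$ I would invoke the NTK-margin computation of Proposition~\ref{prop:xor} (XOR clusters have margin $\Omega(1/d)$ in the feature map $x\mapsto xx^\top$): standard convex analysis of GD on the hinge loss then drives $\widehat F_k$ below any target $\epsilon$ in $O(d^2/\epsilon)$ steps, matching $\eta T=\Theta(d^2)$, and keeps the per-task increment bounded. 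While $f'\equiv-1$ (which holds until training points reach the margin), the increment equals $2\eta T(S_j+E_j^{(n)})$, where $S_j=\tfrac12(\mu_+^j\mu_+^{j\top}-\mu_-^j\mu_-^{j\top})$ is the task signal and $E_j^{(n)}$ is the finite-sample fluctuation of the empirical second moment around $S_j$.

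For train-time forgetting I would use $1$-Lipschitzness of the hinge: $|\widehat F_k(\bar M_K)-\widehat F_k(\bar M_k)|\le\tfrac1n\sum_i\tfrac12\,|x_i^\top(\bar M_K-\bar M_k)x_i|\approx \tfrac1n\sum_i\eta T\,\big|x_i^\top\!\sum_{j=k+1}^K(S_j+E_j^{(n)})x_i\big|$ for $x_i\sim\mathcal D_k$, and then invoke orthogonality of the task means. In the second asymptotic stage ($n\to\infty$) the $E_j^{(n)}$ vanish and only the signal-leakage survives: $x_i^\top S_j x_i=\tfrac12\big((\mu_+^{j\top}x_i)^2-(\mu_-^{j\top}x_i)^2\big)$ is mean-zero over the noise of $x_i$ with typical size $\Theta(\sigma^2/d)$, so summing $K-k$ near-independent such terms gives the $\eta T\sqrt{K-k}/(d^2\poly\log(d))$ term (using $\sigma=\Theta(1/(\log^c(d)\sqrt d))$). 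Re-introducing finite $n$, $x_i^\top E_j^{(n)}x_i$ is mean-zero over the independent task-$j$ data with variance $O(\sigma^8 d^2/n)$, and a Bernstein/Hanson–Wright bound summed over $j$ yields the $\eta T\sqrt{K-k}/(d\sqrt n)$ term; a union bound over the $n$ points of $\mathcal D_k$ turns this into the high-probability statement.

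Finally I would return to $m=\widetilde\Omega(d^8K^4)$ via a Gronwall/induction argument over all $KT$ iterations. Using the non-expansiveness of the one-hidden-layer map (Lemma~B.1 of \cite{tt24}) together with the concentration of $N_0,M_0$, the deviation $\|M_t/\sqrt m-\bar M_t\|$ grows each step by a factor $1+O(\eta\|G_k^{(t)}\|/\sqrt m)$ plus an additive curvature term $O(\eta\|w_k^{(t)}-w_k^{(0)}\|^2/\sqrt m)$; bounding the displacement by the single-task estimate $\|w_k^{(t)}-w_k^{(0)}\|=\widetilde O(d)$ (so that it accumulates orthogonally to $\widetilde O(d^2K)$ over tasks) and summing over the $KT$ steps gives accumulated error $\widetilde O(\eta^2T^2K^2/\sqrt m)$, which transfers to the loss by Lipschitzness — this is the third term. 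Theorem~\ref{thm:train_error} follows as a corollary: right after training task $k$, $\widehat F_k(w_k)=o_d(1)$ by the convex analysis, Theorem~\ref{thm:train_forgetting} controls the later increase, and the $0$--$1$ error is dominated by the hinge loss. The main obstacle I expect is precisely this long-time finite-width control: unlike single-task NTK analyses, the linearization error in $\bar M_k$ already depends on tasks $1,\dots,k$ and feeds forward, so one must show the quadratic corrections do not compound across task boundaries over the whole $KT$-step run — this is what forces the stringent width requirement. A secondary difficulty is the partial activation of the hinge (once some but not all training points pass the margin, $f'$ vanishes on them, so the increment is only approximately $2\eta T S_j$), which is handled by leaning on convexity of the matrix surrogate rather than on a closed form for the iterates.
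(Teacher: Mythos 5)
Your matrix reparametrization $\Phi(w,x)=\tfrac{1}{2\sqrt m}\,x^\top M(w)\,x$ with $M(w)=\sum_i a_i w_iw_i^\top$, the coupled $(M_t,N_t)$ recursion, and the $m\to\infty$ collapse to GD on the convex matrix objective $\bar M\mapsto\tfrac1n\sum_i f(\tfrac{y_i}2 x_i^\top\bar M x_i)$ are all correct and, once you substitute $f'\equiv-1$, reduce to \emph{exactly} the same quantity the paper analyzes: the paper's infinite-width output $\eta T\,x^\top(\sum_{j\le K}A_j)x+N$ is nothing but $\tfrac12 x^\top\bar M_{KT}x$ with $N=\tfrac12 x^\top\bar M_0 x$. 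The concentration analysis of $x_i^\top(S_j+E_j^{(n)})x_i$ via Bernstein and the $KT$-step induction to control finite-width error are also the same mechanism the paper uses (the paper tracks $\tfrac1m\sum_i\langle w_0^i,x\rangle\langle w_\tau^i,x_j\rangle$ by Hoeffding instead of a Gronwall bound, but the effect is the same). So this is essentially the paper's proof in a cleaner notation.

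Where you genuinely diverge is in two places, and the first is a gap. You plan to invoke convexity of the matrix surrogate and the NTK margin of Proposition~\ref{prop:xor} to control the single-task dynamics, and you flag ``partial activation of the hinge'' as the main difficulty to be handled via that convexity. But the paper does not use the margin result or any convex-optimization convergence argument for Theorems~\ref{thm:train_forgetting}--\ref{thm:train_error} at all; it instead observes (and proves at the end of the argument, by plugging the assumptions into Eqs.~\eqref{eq:forget} and \eqref{eq:finite}) that under $\eta T\le \Theta(d^2)$, $n=\widetilde\Theta(d^2K)$, $m=\widetilde\Omega(d^8K^4)$ the network output never exceeds $1$ in magnitude for \emph{any} training point from \emph{any} task, so the hinge stays in its linear piece and $f'\equiv-1$ for all $KT$ steps. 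This makes the NTK dynamics exactly linear in $t$ and the ``partial activation'' case never arises; your proposal has not yet established this, and your margin-based route would actually be in tension with it (if the margin were reached, the iterates would stop moving and the closed form $\bar M_{KT}=\bar M_0+2\eta T\sum_jA_j$ would fail). The second divergence is that you quote $\|w_k^{(t)}-w_k^{(0)}\|=\widetilde O(d)$ from the margin computation; in the paper's regime the displacement grows linearly because the gradient never shrinks, giving $\|w_t-w_0\|=O(\eta t)=O(d^2)$ per task and $O(d^2K)$ total, which is what actually feeds into the $\eta^2T^2K^2/\sqrt m$ term. Your stated total $\widetilde O(d^2K)$ happens to land in the right place, but the per-task figure is off by a factor of $d$ and the ``orthogonal accumulation'' reasoning does not produce it correctly. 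If you replace the margin/convexity step with the paper's direct verification that $|\Phi|\le 1$ throughout training (so the loss is literally linear), and recompute the displacement bound from the linear dynamics, your argument closes.
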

The proof strategy is as follows. First, we consider the $m\rightarrow\infty$ and derive the weights for arbitraay number of GD steps for each task. We then show that for sufficiently large $T$ and sufficiently large $n$, and by computing the network output via concentration bounds based on $n$ for the considered XOR cluster dataset, the train-loss and forgetting are approximately zero. We then compute the error due to finite-width, showing that under sufficiently small $T$, and sufficiently large $m$, the derivations of the infinite-width regime are approximately correct. This leads to the desired quantities and train-time forgetting bounds based $n,T$ and $m$ as stated in the theorem. We start by considering the infinite width regime. 
\subsection{Training error for an infinitely wide network}
First, we consider the $m\rightarrow\infty$ regime and characterize the distribution of the final weights after $T$ and $2T$ iterations in this regime. We then discuss the general formula for arbitrary number of tasks. Recall, we considered the hinge-loss for training-time analysis. However, as mentioned in the main body of the paper and as it will become clear in the following analysis, we can simplify the arguments by noting that throughout the optimization process for all $K$ tasks, only the linear part of the loss is used. Thus we can assume the loss function as $f(u)=1-u$ without loss of generality. 

\par
Let us simplify the notation by droping the task index from weights and instead denoting the vector entering the $i$th neuron by $w^i\in\R^d$.  Note that by Taylor expansion around the Gaussian initialization $w_9$, we have,
\begin{align*}
    \Phi(w,x) = \Phi(w_0,x)+ \frac{1}{\sqrt{m}}\sum_{i=1}^m a_i\phi'(\langle w_0^i,x\rangle)\langle x,w^i-w_0^i\rangle+ O(\frac{\|w-w_0\|}{\sqrt{m}}).
\end{align*}
For $w$ close to $w_0$, and for large enough $m$ we can use a linearized neural network model. In particular, in the $m\rightarrow\infty$ regime, the updates of the continual learning algorithm are the following for sufficiently small $T$:
\begin{align*}
    w^i_1&= w^i_0 + \eta \frac{1}{\sqrt{m}}\frac{1}{n}\sum_{j=1}^n a_i\phi'(\langle w_0^i,x_j^1\rangle)x_j^1 y_j^1\\
    w_T^i &=w_0^i +\frac{\eta T}{\sqrt{m}}\frac{1}{n}\sum_{j=1}^n a_i\phi'(\langle w_0^i,x_j^1\rangle)x_j^1 y_j^1\\
    w_{2T}^i &= w_0^i +\frac{\eta T}{\sqrt{m}}\frac{1}{n}\sum_{j=1}^n a_i\phi'(\langle w_0^i,x_j^1\rangle)x_j^1 y_j^1+\frac{\eta T}{\sqrt{m}}\frac{1}{n}\sum_{j=1}^n a_i\phi'(\langle w_0^i,x_j^2\rangle)x_j^2 y_j^2
\end{align*}


We consider $x_j^k,y_j^k$ for any $j\in[n]$ and $k\in[K]$ as fixed training points used for training task $k$. We consider randomness only with respect to the initialization $w_0^i$ and characterize the distribution of weights in the infinite width regime.  As $m\rightarrow\infty$ given the IID initialization for $w_0^i$ and the quadratic activation, we deduce the following convergence in distribution,

\begin{align}
    w_T^i &=w_0^i +\frac{\eta T}{\sqrt{m}}\frac{1}{n}\sum_{j=1}^n a_i\phi'(\langle w_0^i,x_j\rangle)x_j^1y_j^1 \rightarrow \omega z+ \frac{\eta t}{\sqrt{m}}\frac{1}{n}\sum_{j=1}^n  z^\top x_j^1\, x_j^1y_j^1
\end{align}

where $\omega\in \{\pm 1\},z\in\R^d$ are Rademacher random variable and standard Gaussian random vector, respectively, and they represent first layer and second layer initialization. 

Let us briefly consider the matrix formulation,
\begin{align*}
    R=\frac{1}{n} \sum_{j=1}^n y_j^1 x_j^1 z^\top x_j^1 =: A z 
\end{align*}
then $R\sim \mathcal{N}(0,A^2)$ as $Cov(R) = \E[A z z^\top A^\top] = A\E[zz^\top]A^\top = AA^\top = A^2$.
In the infinite $n$ asymptotic, $A\rightarrow \E[y_j x_j x_j^\top] = \frac{1}{2}\E[xx^\top|y=1]- \frac{1}{2}\E[xx^\top|y=-1]= \frac{1}{2}(\mu_+^1\mu_+^{1{^\top}}-\mu_-^1\mu_-^{1^{\top}}),$ indicating that the GD updates learn the true vectors in the $n\rightarrow\infty$ regime.  

A similar argument leads to the following update rule for the second task:

\begin{align*}
    w_{2T}^i \sim z+\frac{\eta T}{\sqrt{m}} A_1 \omega z+ \frac{\eta T}{\sqrt{m}}A_2 \omega z,~~ A_1:= \frac{1}{n}\sum_{j=1}^n y_j^1x_j^1 x_j{^1}^\top,~~A_2:= \frac{1}{n}\sum_{j=1}^n y_j^2x_j^2 x_j{^2}^\top
\end{align*}
where again $z\sim \mathcal{N}(0,I_d)$ and $\omega$ is a Rademacher r.v. for representing the binary second layer weights $a_i.$

Similarly, we find that after $K$ tasks with $T$ iterations for each task, the weight $w_{K T}^i$ takes the following form:
\begin{align*}
    w_{K T}^i \sim z +\frac{\eta T}{\sqrt{m}}\sum_{j=1}^K A_j \omega z,~~ A_j:= \frac{1}{n}\sum_{v=1}^n y_v^jx_v^j x_v{^j}^\top 
\end{align*}

Recalling the expression for the neural network output, we can characterize the output of the network with this random variable in the infinitely wide regime:

\begin{align*}
    \Phi(w_{K T},x) &= \frac{1}{\sqrt{m}}\sum_{i=1}^m a_i \left(\left\langle w_{K T}^i,x\right\rangle\right)^2 \sim \frac{1}{\sqrt{m}}\sum_{i=1}^m \omega_i \left(\left\langle z_i +\frac{\eta T}{\sqrt{m}}\sum_{j=1}^K A_j \omega_i z_i, x\right\rangle\right)^2\\
    &= \frac{\eta T}{m}\sum_{i=1}^m \langle z_i,x\rangle\left\langle(\sum_{j=1}^K A_j)z_i, x\right\rangle + \frac{1}{\sqrt{m}}\sum_{i=1}^m \omega_i  (z_i^\top x)^2 \\
    &+ \frac{\eta^2 T^2}{m\sqrt{m}}\sum_{i=1}^m \omega_i \left(\left\langle z_i +\frac{\eta T}{\sqrt{m}}\sum_{j=1}^K A_j \omega_i z_i, x\right\rangle\right)^2
\end{align*}
when $m\rightarrow\infty$:
    \begin{align}
    &\longrightarrow \eta T\,\E_{z} \left[\left\langle z,x\right\rangle\left\langle(\sum_{j=1}^K A_j)z , x\right\rangle\right] +N + 0\nn\\
    &= \eta T \, x^\top (\sum_{j=1}^K A_j)x +N \label{eq:sum1}
\end{align}
where the last step is by the law of large number and $N$ denotes the asymptotic distribution of the second term. The last term vanishes by the law of large numbers. We derive the training loss by calculating the above for $x$ coming from the training distribution. 
\par
We discuss the role of each term in Eq. \ref{eq:sum1}. First, considering the first term above, the training loss for task $K$ w.r.t the first training sample is the following,

\begin{align}\label{eq:sum}
     \frac{\eta T}{n} x_1{^K}^\top\sum_{k=1}^K \sum_{i=1}^n y_i^k x_i^k x_i{^k}^\top x_1^{K}
\end{align}

We split the summation into the relevant task $k=K$ and other tasks when $k\neq K.$
\paragraph{Case I: $k=K$.} Let us drop $K$ in Eq. \ref{eq:sum}. we have
\begin{align*}
     \frac{1}{n} x_1^\top\sum_{i=1}^n y_i x_i x_i^\top x_1 = \frac{1}{n}\sum_{i=1}^n y_i (x_i^\top x_1)^2 = \frac{1}{n} (y_1\|x_1\|^4+\sum_{i=2}^ny _i (x_i^\top x_1)^2) .
\end{align*}

Recall our data model:
\[
x \sim \mathcal{N}(\pm \mu_+^K, \sigma^2 I_d) \quad \text{if } y = +1, \qquad 
x \sim \mathcal{N}(\pm \mu_-^K, \sigma^2 I_d) \quad \text{if } y = -1,
\]
with the following assumptions:
\[
\mu_+^K \perp \mu_-^K, \quad \|\mu_+^K\| = \|\mu_-^K\| = \frac{1}{\sqrt{d}}, \quad \sigma = O\left(\frac{1}{\sqrt{d}\,\poly\log(d)}\right).
\]

Let
\[
U := \frac{1}{n} \sum_{i=1}^n y_i (x_i^\top x_1)^2.
\]

Fix \(x_1, y_1\). For any \(i \neq 1\), we write
\[
x_1 = \mu_{y_1}^K + \varepsilon_1, \qquad x_i = \mu_{y_i}^K + \varepsilon_i,
\]
with \(\varepsilon_1, \varepsilon_i \sim \mathcal{N}(0, \sigma^2 I_d)\), independent. Then:
\[
x_i^\top x_1 = {\mu_{y_i}^K}^\top \mu_{y_1}^K + {\mu_{y_i}^K}^\top \varepsilon_1 + {\mu_{y_1}^K}^\top \varepsilon_i + \varepsilon_i^\top \varepsilon_1.
\]

Note that $({\mu_{y_i}^K}^\top \mu_{y_1}^K)^2=\frac{1}{d^2}$ if $y_i=y_1$ and $0$ otherwise. 

Hence,
\[
\mathbb{E}\left[y_i(x_i^\top x_1)^2 \mid y_i \right] =
\begin{cases}
\displaystyle \E[y_1(\pm\frac{1}{d} \pm {\mu_{y_1}^K}^\top \varepsilon_1 + {\mu_{y_1}^K}^\top \varepsilon_i + \varepsilon_i^\top \varepsilon_1)^2] & \text{if } y_i = y_1, \\[6pt]
\E[-y_1({\pm\mu_{-y_1}^K}^\top \varepsilon_1 + {\mu_{y_1}^K}^\top \varepsilon_i + \varepsilon_i^\top \varepsilon_1)^2] & \text{if } y_i \ne y_1.
\end{cases}
\]

Assuming a balanced distribution, i.e., \(\Pr[y_i = y_1] = \Pr[y_i \ne y_1] = \frac{1}{2}\), we get:
\[
\mathbb{E}\left[y_i (x_i^\top x_1)^2\right] 
=\frac{y_1}{2d^2} +O\left(\frac{1}{d^2\cdot\poly\log(d)}\right)
\]

where in the above, we used ${\mu_{y_1}^K}^\top \varepsilon_1=O(\frac{1}{d\cdot\poly\log(d)})$ w.h.p. over the randomness in $\eps_1.$

Thus, the overall expectation is the following:
\[
\mathbb{E}[U] = \frac{y_1}{2d^2} +O\left(\frac{1}{d^2\cdot\poly\log(d)}\right)
\]
which aligns with the true label $y_1.$

To compute the finite sample guarantees, note that each summand
\[
Z_i = y_i (x_i^\top x_1)^2
\]
is sub-exponential with scale parameter \(O(1/d)\) as $(\eps_i^\top\eps_1)^2$ has standard deviation $\frac{1}{d\poly\log(d)}$ uniformly for all $i>1$. By Bernstein’s inequality, for any \(\delta \in (0, 1)\), with probability at least \(1 - \delta\) over the randomness in $\{x_i,y_i\}_{i\in[n]}$,
\[
\left| U - \mathbb{E}[U] \right|
\leq C \left( \frac{1/d}{\sqrt{n}} \sqrt{\log(1/\delta)} + \frac{1/d}{n} \log(1/\delta) \right)
= O\left( \frac{1}{d \sqrt{n}} \sqrt{\log(1/\delta)} \right),
\]
for some absolute constant \(C > 0\).

Putting together, with probability at least $1 - \delta $
\[
U = \frac{1}{n} \sum_{i=1}^n y_i (x_i^\top x_1)^2 = \frac{y_1}{2 d^2} \pm O\left( \frac{1}{d^2\cdot\poly\log(d)}+\frac{1}{d \sqrt{n}} \sqrt{\log(1/\delta)} \right).
\]

\medskip

 In particular, if $
n \gg d^2 \log(1/\delta),$
then the error term is much smaller than the signal \(\frac{1}{2 d^2}\), and therefore $\operatorname{sign}(T) = y_1.$ with a union bound over all training points which introduces an additional factor $log(n)$ in the above bound, we find that the train error is exactly zero.

\paragraph{Case II: $k\neq K$.} Now we evaluate the other terms in the summation in Eq. \ref{eq:sum}
$$
x^\top A_j x= \frac{1}{n} \sum_{i=1}^n y_i^j(x^\top x_i^j)^2 
$$
for some $j\neq K.$
drop 1 and note that

\[
\begin{aligned}
x_i &\sim \mathcal{N}(\pm \mu_+^j,\; \sigma^2 I_d) \quad \text{if } y_i = +1,
&& x_i \sim \mathcal{N}(\pm \mu_-^j,\; \sigma^2 I_d) \quad \text{if } y_i = -1, \\
x &\sim \mathcal{N}(\pm \mu_+^K,\; \sigma^2 I_d) \quad \text{if } y = +1,
&& x \sim \mathcal{N}(\pm \mu_-^K,\; \sigma^2 I_d) \quad \text{if } y = -1,
\end{aligned}
\]
where
$
\mu_+^j, \mu_-^j, \mu_+^K, \mu_+^K \text{ are mutually orthogonal}, \quad
\|\mu_+^j\| = \|\mu_-^j\| = \|\mu_+^K\| = \|\mu_-^K\| = \frac{1}{\sqrt{d}},$ and $
\sigma = O\left( \frac{1}{\sqrt{d}\,\poly\log(d)} \right).$

Let
\[
U' = \frac{1}{n} \sum_{i=1}^n y_i (x_i^\top x)^2.
\]
let $x=\mu+\eps$, we have
\[
\mathbb{E}\left[y_i(x_i^\top x)^2 \mid y_i \right] =
\begin{cases}
\displaystyle \E[(\pm {\mu_{y_i}^K}^\top \varepsilon + \mu^\top \varepsilon_i + \varepsilon_i^\top \varepsilon)^2] & \text{if } y_i = 1, \\[6pt]
\E[-({\pm\mu_{-y_i}^K}^\top \varepsilon + {\mu}^\top \varepsilon_i + \varepsilon_i^\top \varepsilon)^2] & \text{if } y_i =-1.
\end{cases}
\]

Hence,
\[
\mathbb{E}[U'] = O(\frac{1}{d^2\poly\log(d)}).
\]

Define
\[
Z_i = y_i (x_i^\top x)^2.
\]
By expanding \(x_i = \mu_{y_i}^j + \varepsilon_i\), \(x = \mu_{y}^K + \varepsilon'\), and using \(\sigma = O(1/\sqrt{d})\), one can verify that
\[
\operatorname{Var}(x_i^\top x) = O\left( \frac{1}{d} \right),
\]
and that \((x_i^\top x)^2\) is sub-exponential with scale parameter \(O(1/d)\). Thus each \(Z_i\) is sub-exponential with parameter \(O(1/d)\).

By Bernstein’s inequality for i.i.d.\ sub-exponential random variables, for any \(\delta \in (0,1)\), with probability at least \(1 - \delta\),
\[
\left| U' \right|
= \left| \frac{1}{n} \sum_{i=1}^n \left( Z_i - \mathbb{E}[Z_i] \right) \right|
\le C \left( \frac{1/d}{\sqrt{n}} \sqrt{\log(1/\delta)} + \frac{1/d}{n} \log(1/\delta) \right)
= O\left( \frac{1}{d \sqrt{n}} \sqrt{\log(1/\delta)} \right),
\]
for some absolute constant $C$.

\paragraph{Combining the two cases.} Together with the two results above we find for any training data point $(x,y)$ from task $K$:
\begin{align*}
     x^\top (\sum_{j=1}^K A_j)x = \frac{y}{2d^2}\pm O\left(\frac{\sqrt{K}}{d^2\,\poly\log(d)}+\frac{\sqrt{K}}{d\sqrt{n}}\sqrt{\log(1/\delta)}\right)
\end{align*}

This concludes the calculations of the first term in Eq. \ref{eq:sum1}. 
\par 

Now let us consider the noise term (denoted by N) in Eq. \ref{eq:sum1}:

\begin{align*}
    N= \frac{1}{\sqrt{m}}\sum_{i=1}^m \omega_i (z_i^\top x)^2.
\end{align*}

note that $\omega_i(z_i^\top x)^2$ has variance $O(\frac{1}{\poly\log(d)})$, therefore by CLT
$$\frac{1}{\sqrt{m}}\sum_{i=1}^m \omega_i (z_i^\top x)^2\rightarrow \mathcal{N}(0,\frac{1}{\poly\log(d)}).$$

Overall, in the infinite width limit, for some $x,y$ from the $K$th task's empirical distribution
\begin{align*}
    \Phi(w_{K T},x) &= \eta T \, x^\top (\sum_{j=1}^K A_j)x + \mathcal{N}(0,1) \\
    &= \eta T\left(\frac{y_k}{d^2} \pm O(\frac{\sqrt{K}}{d^2\,\poly\log(d)}+\frac{\sqrt{K}}{d\sqrt{n}}\sqrt{\log(1/\delta)})\right) + O\left(\frac{\sqrt{\log(1/\delta)}}{\poly\log(d)}\right).
\end{align*}

In particular, if $
n =\Omega( d^2 K \log(1/\delta)),$
then the error term is smaller than the signal \(\frac{y_k}{ d^2}\), and if $\eta T =\Theta(d^2)$ then the output aligns with $y$. With a union bound over all training points (which introduces an additional factor $\log(n)$ in the above bound), we find that the train error ($\%$) is exactly zero for all $k\in[n]$, leading to the zero train error.  

\subsection{Characterizing forgetting for infinitely wide nets}

We can directly compute $\widehat{F}_k(w_{K T})$ by computing $\Phi(w_{K T},x_1^k)$ where $x_1^k$ is a sample (first sample w.l.o.g) from the training data for task $k$ where $k<K$. 
Recall,
\begin{align*}
    w_{K T}^i \sim z +\frac{\eta T}{\sqrt{m}}\sum_{j=1}^K A_j \omega z,~~ A_j:= \frac{1}{n}\sum_{v=1}^n y_v^jx_v^j x_v{^j}^\top 
\end{align*}
note that the above is symmetric with respect to the task index therefore $\lim_{m\rightarrow\infty}\Phi(w_{K T},x_1^k)=\lim_{m\rightarrow\infty}\Phi(w_{K T},x_1^K)$ in distribution. and we have in the $m\rightarrow\infty$ limit for $x_k:=x_k^1$:
\begin{align}\label{eq:forget}
    \Phi(w_{K T},x_k) &= \eta T \,\, x_k^\top (\sum_j A_j)x_k + \mathcal{N}(0,\frac{1}{\poly\log(d)}) \\
    &= \eta T\left(\frac{y_k}{d^2} \pm O\left(\frac{\sqrt{K}}{d^2\,\poly\log(d)}+\frac{\sqrt{K}}{d\sqrt{n}}\sqrt{\log(1/\delta)}\right)\right) + O\left(\frac{\sqrt{\log(1/\delta)}}{\poly\log(d)}\right).\nn
\end{align}
Therefore, again if $
n =\Omega( d^2 K \log(1/\delta)),$
then the error term is smaller than the signal \(\frac{y_k}{ d^2}\), and if $\eta T =\Theta(d^2)$ then the output aligns with $y_k$. With a union bound over all training points $k\in[n]$, we find that the training error is exactly zero for all tasks. 

Now to characterize forgetting, recall it is defined as
\begin{align}
    |\widehat F_k(w_K)-\widehat F_k(w_k)| &= |\sum_{x_k} \eta T \,\, x_k^\top (\sum_{j=k+1}^{K} A_j)x_k| \label{eq:16}\\
    &= \eta T\cdot O\left(\frac{\sqrt{K-k}}{d^2\,\poly\log(d)}+\frac{\sqrt{K-k}}{d\sqrt{n}}\sqrt{\log(1/\delta)} \right).\nn
\end{align}
where the calculations are the same as before except that the impact of initialization noise is present in both $\widehat F_k(w_K),\widehat F_k(w_k)$ and thus it is canceled.  

In the above expression, if $n=\widetilde \Omega(d^2(K-k))$ and $\eta T\asymp d^2$, the increase in forgetting is $o_d(1).$
\subsection{Finite-width error}
The calculations above hold for the infinitely-wide network. In this section, we derive the error due to finite width. Recall,
\begin{align*}
    \Phi(w,x) = \Phi(w_0,x)+ \frac{1}{\sqrt{m}}\sum_{i=1}^m a_i\phi'(\langle w_0^i,x\rangle)\langle x,w^i-w_0^i\rangle+ O(\frac{\|w-w_0\|^2}{\sqrt{m}})
\end{align*}
for the infinite width limit we had,
\begin{align*}
    \bar w^i_1&= w^i_0 + \eta \frac{1}{\sqrt{m}}\frac{1}{n}\sum_{j=1}^n a_i\phi'(\langle w_0^i,x_j^1\rangle)x_j^1 y_j^1\\
    \bar w_t^i &=w_0^i +\frac{\eta t}{\sqrt{m}}\frac{1}{n}\sum_{j=1}^n a_i\phi'(\langle w_0^i,x_j^1\rangle)x_j^1 y_j^1\\
    \bar w_{2t}^i &= w_0^i +\frac{\eta t}{\sqrt{m}}\frac{1}{n}\sum_{j=1}^n a_i\phi'(\langle w_0^i,x_j^1\rangle)x_j^1 y_j^1+\frac{\eta t}{\sqrt{m}}\frac{1}{n}\sum_{j=1}^n a_i\phi'(\langle w_0^i,x_j^2\rangle)x_j^2 y_j^2,
\end{align*}
and similarly, all tasks' updates were derived. Let $\Phi(\cdot,\cdot)$ be the infinite-width and $\Phi_m(\cdot,\cdot)$ be the finite-width formulations of the network output. 
Then, we are interested in bounding $|\Phi(\bar w_t,x)-\Phi_m(w_t,x)|$ which can be written as:
\begin{align*}
    |\Phi(\bar w_t,x)-\Phi_m(w_t,x)| &\le |\Phi(z,x)-\Phi_m(w_0,x)| \\
    &+ \Big|\frac{1}{\sqrt{m}}\sum_{i=1}^m a_i\,\langle w_0^i,x\rangle\,\langle x,w_t^i-w_0^i\rangle - \eta t \E_z[\langle z,x\rangle \langle x,A_1 z\rangle] \Big| \\&+ O(\frac{\|w_t-w_0\|^2}{\sqrt{m}})\\
    &\le O(\frac{1}{\sqrt{m}} + \frac{\|w_t-w_0\|^2}{\sqrt{m}}) \\&
    +  \Big|\frac{1}{\sqrt{m}}\sum_{i=1}^m a_i\,\langle w_0^i,x\rangle\,\langle x,w_t^i-w_0^i\rangle - \eta t \E_z[\langle z,x\rangle \langle x,A_1 z\rangle] \Big|
\end{align*}
where we used the fact that by LLN: $\frac{1}{\sqrt{m}}\sum_{i=1}^m a_i\,\langle w_0^i,x\rangle\,\langle x,w_t^i-w_0^i\rangle\rightarrow \eta t \E_z[\langle z,x\rangle \langle x,A_1 z\rangle].$ 

Note that $w_t^i-w_0^i = \frac{\eta}{\sqrt{m}n} \sum_{\tau=0}^{t-1}\sum_{j=1}^n a_i \langle   w_\tau^i,x_j\rangle x_jy_j,$ therefore when $m\rightarrow\infty$:
\begin{align*}
    \frac{1}{\sqrt{m}}\sum_{i=1}^m a_i\,\langle w_0^i,x\rangle\,\langle x,w_t^i-w_0^i\rangle &= \frac{\eta}{n} \sum_{\tau=0}^{t-1}\sum_{j=1}^n \langle x,x_j y_j\rangle \frac{1}{m}\sum_{i=1}^m \langle w_0^i,x\rangle \langle w_\tau^i,x_j\rangle\\
    &\rightarrow \frac{\eta}{n} \sum_{\tau=0}^{t-1}\sum_{j=1}^n \langle x,x_j y_j\rangle \E[\langle w_0^i,x\rangle \langle w_\tau^i,x_j\rangle]
\end{align*}
$\langle w_0^i.x\rangle$ is Gaussian with variance $\|x\|^2$ and $\langle w_\tau^i,x_j\rangle$ is bounded by $D_\tau^i\|x_j\|$ where $D_\tau^i:=\|w_\tau^i-w_0^i\|$, therefore $\langle w_0^i,x\rangle \langle w_\tau^i,x_j\rangle$ is bounded by $\|x\|\|x_j\|D_\tau^i=O (D_\tau^i).$ and by Hoeffding's concentration inequality:
\begin{align*}
    \Big|\frac{1}{m}\sum_{i=1}^m \langle w_0^i,x\rangle \langle w_\tau^i,x_j\rangle-\E[\langle w_0^i,x\rangle \langle w_\tau^i,x_j\rangle]\Big| = O(\frac{D_\tau^i}{\sqrt{m}})
\end{align*}
and hence w.h.p,
\begin{align*}
    \Big|\frac{\eta}{n} \sum_{\tau=0}^{t-1}\sum_{j=1}^n \langle x,x_j y_j\rangle \frac{1}{m}\sum_{i=1}^m \langle w_0^i,x\rangle \langle w_\tau^i,x_j\rangle&- \frac{\eta}{n} \sum_{\tau=0}^{t-1}\sum_{j=1}^n \langle x,x_j y_j\rangle \E[\langle w_0^i,x\rangle \langle w_\tau^i,x_j\rangle]\Big| \\
    &= O(\frac{\eta}{\sqrt{m}}\sum_\tau \max_i D_\tau^i) \\
    &= \widetilde O(\frac{\eta}{\sqrt{m}}\sum_\tau  D_\tau^1)=\widetilde O(\frac{\eta t D_t^1}{\sqrt{m}})
\end{align*}
where we used again $x^\top x_j\lesssim 1$, the fact that due to symmetry we expect $D_\tau^i$ to be of the same order for different $i$\,s and also $D_\tau^i < D_t^i$ for all $\tau\le t$. Putting these back to the inequality in the last page for the finite-width error of the network's output:
\begin{align*}
     |\Phi(\bar w_t,x)-\Phi_m(w_t,x)|  = \widetilde O\left(\frac{1}{\sqrt{m}} + \frac{\|w_t-w_0\|^2}{\sqrt{m}} + \frac{\eta t\|w_t^1-w_0^1\|}{\sqrt{m}}\right).
\end{align*}
similarly 
\begin{align}\label{eq:finitewidtherror}
     |\Phi(\bar w_{K T},x)-\Phi_m(w_{K T},x)|  = \widetilde O\left(\frac{1}{\sqrt{m}} + \frac{\|w_{K T}-w_0\|^2}{\sqrt{m}} + \frac{\eta K T\|w_{K T}^1-w_0^1\|}{\sqrt{m}}\right).
\end{align}
\subsubsection{Bounding the weights distance from initialization}
In order to complete the proof, we need to bound the distance from initialization i.e., $\|w_{t}-w_0\|$ and $\|w_{t}^i-w_0^i\|$ for every $i$ and $t$. We do this by an iterative argument as follows. Note that for the XOR cluster dataset $\|x\|=\Theta_d(1)$. Then, by recalling the updates of GD, we find that,
\begin{align*}
    \|w_1^i-w_0^i\| &\le \frac{\eta}{\sqrt{m}n} \sum_{i=1}^n |\langle w_0^i,x_j^1\rangle| \|x_j^1\|\lesssim \frac{\eta}{\sqrt{m}}\\
    \|w_2^i-w_0^i\| &\le  \frac{\eta}{\sqrt{m}n} \sum_{i=1}^n |\langle w_0^i,x_j^1\rangle| \|x_j^1\| +\frac{\eta}{\sqrt{m}n} \sum_{i=1}^n |\langle w_1^i,x_j^1\rangle| \|x_j^1\|\\
    &\le \frac{2\eta}{\sqrt{m}n} \sum_{i=1}^n |\langle w_0^i,x_j^1\rangle| \|x_j^1\| +\frac{\eta}{\sqrt{m}n} \sum_{i=1}^n |\langle w_1^i-w_0^i,x_j^1\rangle| \|x_j^1\|\\
    &\lesssim  \frac{2\eta}{\sqrt{m}}+\frac{\eta^2}{m} = O(\frac{2\eta}{\sqrt{m}})\\
    \|w_3^i-w_0^i\|&\le \frac{3\eta}{\sqrt{m}n} \sum_{i=1}^n |\langle w_0^i,x_j^1\rangle| \|x_j^1\| +\frac{\eta}{\sqrt{m}n} \sum_{i=1}^n |\langle w_1^i-w_0^i,x_j^1\rangle| \|x_j^1\|\\
    &+\frac{\eta}{\sqrt{m}n} \sum_{i=1}^n |\langle w_2^i-w_0^i,x_j^1\rangle| \|x_j^1\|\\
    &\lesssim  \frac{3\eta}{\sqrt{m}}+(\frac{\eta}{\sqrt{m}})^2 + (\frac{\eta}{\sqrt{m}})^3= O(\frac{3\eta}{\sqrt{m}})
\end{align*}
Therefore, $\|w_t^i-w_0^i\|=O(\frac{t\eta}{\sqrt{m}})$ when $\eta=O_m(1)$. We also have $$\|w_t-w_0\|=O(t\eta).$$
By Eq. \ref{eq:finitewidtherror}:
\begin{align}\label{eq:finite}
     |\Phi(\bar w_{K T},x)-\Phi_m(w_{K T},x)|  = \widetilde O\left(\frac{1}{\sqrt{m}} + \frac{\eta^2 K^2 T^2}{\sqrt{m}} + \frac{\eta^2 K^2 T^2}{m}\right) = \widetilde O(\frac{\eta^2 K^2 T^2}{\sqrt{m}})
\end{align}

Recall that we had chosen $\eta T=\Theta(d^2)$ to guarantee $\sign(\Phi(\bar w_{K T},x_k))=y_k$ and $|\Phi(\bar w_{K T},x_k)|\gtrsim 1$, therefore if 
$$m=\widetilde \Omega(d^8K^4 ),$$ 
the finite width error is small enough to conclude $\sign(\Phi_m(w_{K T},x_K))=y_K$ for any $x_K,y_K$ from the $K$th task data distribution. Similarly, we have $\sign(\Phi_m(w_{K T},x_k))=y_k$ for any $x_k,y_k$ from the $k$th task's data distribution because the error terms defined above are independent of the data distribution. Thus, the characterization of forgetting we derived in Eq. \ref{eq:forget} is accurate for the same width. 

Finally, we note that with the given assumptions on $n,T,m,K$ it holds that $\Phi_m(w_{KT},x)$ is always bounded by 1. To see this, recall by Eq. \ref{eq:forget} and Eq. \ref{eq:finite}, the network output for any training point $x$ is at most hte following:
\begin{align*}
   \Phi_m(w_{K T},x)\le \eta T&\left(\frac{y_k}{d^2} \pm O\left(\frac{\sqrt{K}}{d^2\,\poly\log(d)}+\frac{\sqrt{K}}{d\sqrt{n}}\sqrt{\log(1/\delta)}\right)\right) + O\left(\frac{\sqrt{\log(1/\delta)}}{\poly\log(d)}\right) \\
   &+ \widetilde O\left(\frac{\eta^2 K^2 T^2}{\sqrt{m}}\right)
\end{align*}
Recall $K=\widetilde O_d(1)$, with the choice of $m,n$ in the statement of the theorems, it holds with high probability that $\Phi_m(w_{K T},x)\le 1$. Thus, the network output always lies in the linear part of the hinge-loss for any $\eta T\le d^2$ even at initialization where $T=0.$ Therefore, our assumption on the linearity of loss is valid throughout training.  



\section{Regularized continual learning: Proof of Proposition \ref{cor:reg}}
\begin{proposition}[Restatement of Prop. \ref{cor:reg}]
Consider the regularized continual learning problem Eq.\ref{eq:reg_cont} with same setup as Theorem \ref{thm:train_forgetting} with $m\rightarrow\infty.$ The iterates of this algorithm with step-size $\eta$ are equivalent to unregularized continual learning with step-size $\widetilde{\eta}_T$ where $\widetilde\eta_T=\alpha_T\eta/T$ and $\alpha_T = \frac{1-(1-\eta \lambda)^T}{\eta \lambda}$. 
\end{proposition}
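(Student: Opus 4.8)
The plan is to exploit the fact that in the linearized regime each per-task objective is \emph{affine} in $w$, so that adding the proximal term $\tfrac{\lambda}{2}\|w-w_{k-1}\|^2$ turns the gradient-descent recursion into a plain linear (geometric) recursion whose closed form can be read off directly and compared with the unregularized one.

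First I would record that, exactly as in the analysis behind Theorem~\ref{thm:train_forgetting}, in the $m\to\infty$ regime and within the linear branch of the hinge loss one may take $f(u)=1-u$ and replace $\Phi(w,x)$ by its first-order expansion about $w_0$; then $\widehat{F}_k(w)$ is affine, with $\nabla\widehat{F}_k(w)\equiv -g_k$ for a fixed vector $g_k$ depending only on $\Dc_k$ and the linearization point $w_0$. I would note that the regularized iterates travel no further from $w_0$ than the unregularized ones, so the justification that we stay in the linearized / linear-loss regime carries over unchanged from Theorem~\ref{thm:train_forgetting}.

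Next I would fix $k\ge 2$, set $v^{(t)}:=w_k^{(t)}-w_{k-1}$ with $w_k^{(0)}=w_{k-1}$ so $v^{(0)}=0$, and rewrite the update $w_k^{(t+1)}=w_k^{(t)}-\eta\big(\nabla\widehat{F}_k(w_k^{(t)})+\lambda(w_k^{(t)}-w_{k-1})\big)$ as $v^{(t+1)}=(1-\eta\lambda)v^{(t)}+\eta g_k$. Summing the resulting geometric series gives $v^{(t)}=g_k\cdot\frac{1-(1-\eta\lambda)^t}{\lambda}$, hence $w_k=w_k^{(T)}=w_{k-1}+\eta\alpha_T g_k$ with $\alpha_T=\frac{1-(1-\eta\lambda)^T}{\eta\lambda}$. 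On the other hand, unregularized continual learning with step-size $\widetilde{\eta}_T$ also has constant gradient $-g_k$, so after $T$ steps from $w_{k-1}$ it outputs $w_{k-1}+T\widetilde{\eta}_T g_k$; the two coincide precisely when $\widetilde{\eta}_T=\eta\alpha_T/T$. Since $w_k$ is the initialization for task $k+1$ and $g_k$ depends on the algorithm only through that initialization, an induction on $k\ge 2$ (the first task being unregularized in both procedures) shows the two algorithms produce the same sequence $w_1,\dots,w_K$.

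The computation here is entirely elementary, so there is no serious analytic obstacle; the only point requiring care is the bookkeeping that lets us treat $\nabla\widehat{F}_k$ as an exact constant — i.e.\ that the regularized run stays in the linearized regime and in the linear part of the hinge loss for all $t$ — which I would handle by importing the corresponding estimates from the proof of Theorem~\ref{thm:train_forgetting} and observing that the proximal term only \emph{decreases} $\|w_k^{(t)}-w_0\|$ relative to the unregularized dynamics.
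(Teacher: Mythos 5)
Your proposal is correct and follows essentially the same route as the paper: both arguments hinge on the observation that in the linearized infinite-width regime (with the linear branch of the hinge loss) the per-task gradient is constant, so the regularized update collapses to a geometric recursion whose sum yields $\alpha_T=\frac{1-(1-\eta\lambda)^T}{\eta\lambda}$. Your change of variable $v^{(t)}=w_k^{(t)}-w_{k-1}$ is a slightly cleaner way to extract that geometric series than the paper's term-by-term unrolling, but the underlying computation and conclusion are identical.
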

\begin{proof}
In regularized continual learning, the objective at task $k\ge 2$ is:
\begin{align*}
    \min_w \widehat F_k(w)+ \frac{\lambda}{2} \|w-w_{k-1}\|^2
\end{align*}
The GD update rule is the following:
\begin{align*}
    w_{k}^{(t+1)} &= w_{k}^{(t)} - \eta \nabla \widehat{F}_k(w_{k}^{(t)})-\eta\lambda(w_{k}^{(t)}-w_{k-1})\\
    &= (1-\eta\lambda) w_{k}^{(t)}- \eta\nabla\widehat F_k(w_{k}^t)+ \eta\lambda w_{k-1}.
\end{align*}
For the first task, there is no regularization, therefore for neuron $i$ (we drop $i$ here for ease of notation):
\begin{align*}
    w^{(1)}_1&= w^{(0)}_1 + \eta \frac{1}{\sqrt{m}}\frac{1}{n}\sum_{j=1}^n a_i\phi'(\langle w^{(0)}_1 ,x_j^1\rangle)x_j^1 y_j^1\\
    w_1:= w_2^{(0)} =  w_1^{(T)} &=w_1^{(0)} +\frac{\eta T}{\sqrt{m}}\frac{1}{n}\sum_{j=1}^n a_i\phi'(\langle w_1^{(0)},x_j^1\rangle)x_j^1 y_j^1
\end{align*}

For the second task, due to the regularization term $\lambda\|w-w_1\|^2/2$, the first GD update takes the following shape:
\begin{align*}
    w_2^{(1)} &= (1-\eta\lambda) w_2^{(0 )}+\eta \frac{1}{\sqrt{m}}\frac{1}{n}\sum_{j=1}^n a_i\phi'(\langle w^{(0)}_1 ,x_j^2\rangle)x_j^2 y_j^2 + \eta\lambda w_1\\
    &=w_2^{(0)} +\eta \frac{1}{\sqrt{m}}\frac{1}{n}\sum_{j=1}^n a_i\phi'(\langle w^{(0)}_1 ,x_j^2\rangle)x_j^2 y_j^2.
\end{align*}

Hence, the first step is identical to the unregularized update rule. For the second step,
\begin{align*}
    w_2^{(2)} &= (1-\eta\lambda) w_2^{(1)} +\eta \frac{1}{\sqrt{m}}\frac{1}{n}\sum_{j=1}^n a_i\phi'(\langle w^{(0)}_1 ,x_j^2\rangle)x_j^2 y_j^2 + \eta\lambda w_1 \\
    &= w_2^{(0)} + ((1-\eta\lambda)+1)\frac{\eta}{\sqrt{m}}\frac{1}{n}\sum_{j=1}^n a_i\phi'(\langle w^{(0)}_1 ,x_j^2\rangle)x_j^2 y_j^2.
\end{align*}
Similarly, 
\begin{align*}
    w_2^{(3)} &= (1-\eta\lambda) w_2^{(2)} +\eta \frac{1}{\sqrt{m}}\frac{1}{n}\sum_{j=1}^n a_i\phi'(\langle w^{(0)}_1 ,x_j^2\rangle)x_j^2 y_j^2 + \eta\lambda w_1 \\
    &= w_2^{(0)} + ((1-\eta\lambda)((1-\eta\lambda)+1)+1)\frac{\eta}{\sqrt{m}}\frac{1}{n}\sum_{j=1}^n a_i\phi'(\langle w^{(0)}_1 ,x_j^2\rangle)x_j^2 y_j^2.
\end{align*}
Therefore for $t\le T:$
\begin{align*}
    w_2^{(t)} = w_2^{(0)} + \alpha_t\frac{\eta}{\sqrt{m}}\frac{1}{n}\sum_{j=1}^n a_i\phi'(\langle w^{(0)}_1 ,x_j^2\rangle)x_j^2 y_j^2.
\end{align*}
The same steps can be repeated for every task to obtain:
\begin{align*}
    w_k^{(t)} = w_k^{(0)} + \alpha_t\frac{\eta}{\sqrt{m}}\frac{1}{n}\sum_{j=1}^n a_i\phi'(\langle w^{(0)}_1 ,x_j^2\rangle)x_j^2 y_j^2,
\end{align*}
which leads to the following expression for any $k\ge 2:$
\begin{align*}
    w_k:= w_1^{(0)} +\frac{\eta T}{\sqrt{m}}\frac{1}{n}\sum_{j=1}^n a_i\phi'(\langle w_1^{(0)},x_j^1\rangle)x_j^1 y_j^1 + \alpha_T\frac{\eta}{\sqrt{m}}\frac{1}{n}\sum_{\kappa=2}^{k}\sum_{j=1}^n a_i\phi'(\langle w^{(0)}_1 ,x_j^\kappa\rangle)x_j^\kappa y_j^\kappa
\end{align*}
where $$\alpha_1=1,\alpha_t = (1-\eta\lambda)\alpha_{t-1}+1 \text{~~\,for\,~~} t>1$$
We can find the following closed form expression to the equations above:  $\alpha_t = \frac{1-(1-\eta\lambda)^t}{\eta\lambda}$. This completes the proof. 
\end{proof}
\begin{figure}[t]
    \centering
    \includegraphics[width=0.4\linewidth]{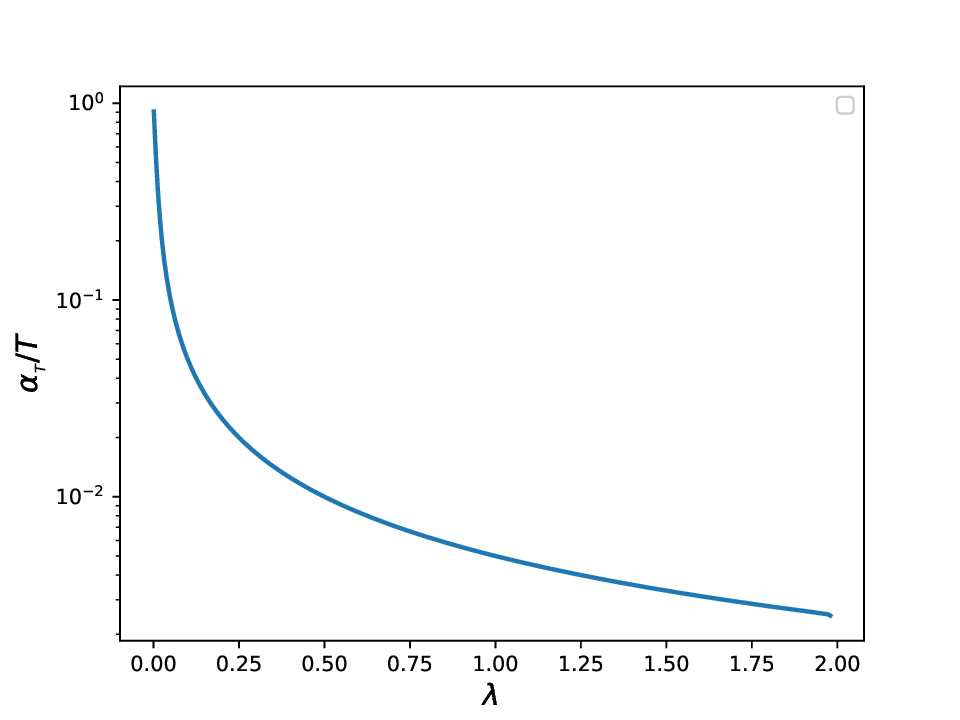}
    \includegraphics[width=0.4\linewidth]{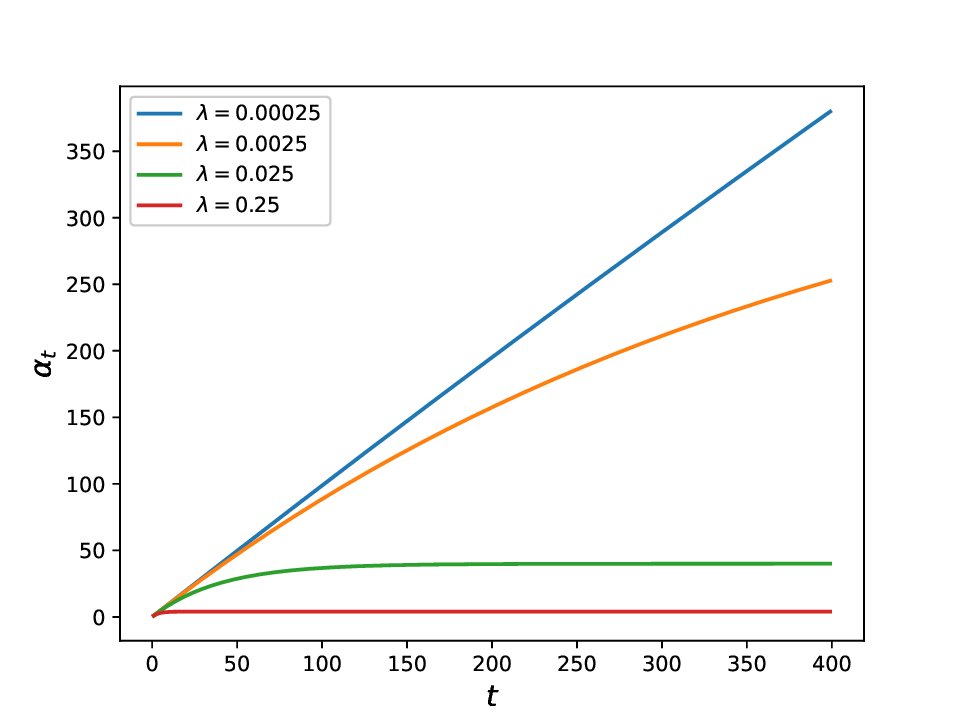}
    \caption{Effective step-size for regularized continual learning $\alpha_t$ in Prop. \ref{cor:reg} based on regularization parameter $\lambda$ (Left) and number of GD steps $t$ (Right). }
    \label{fig:reg}
\end{figure}

\begin{figure}
    \centering
    \begin{minipage}{0.35\textwidth}
    \centering
    \includegraphics[width=\linewidth]{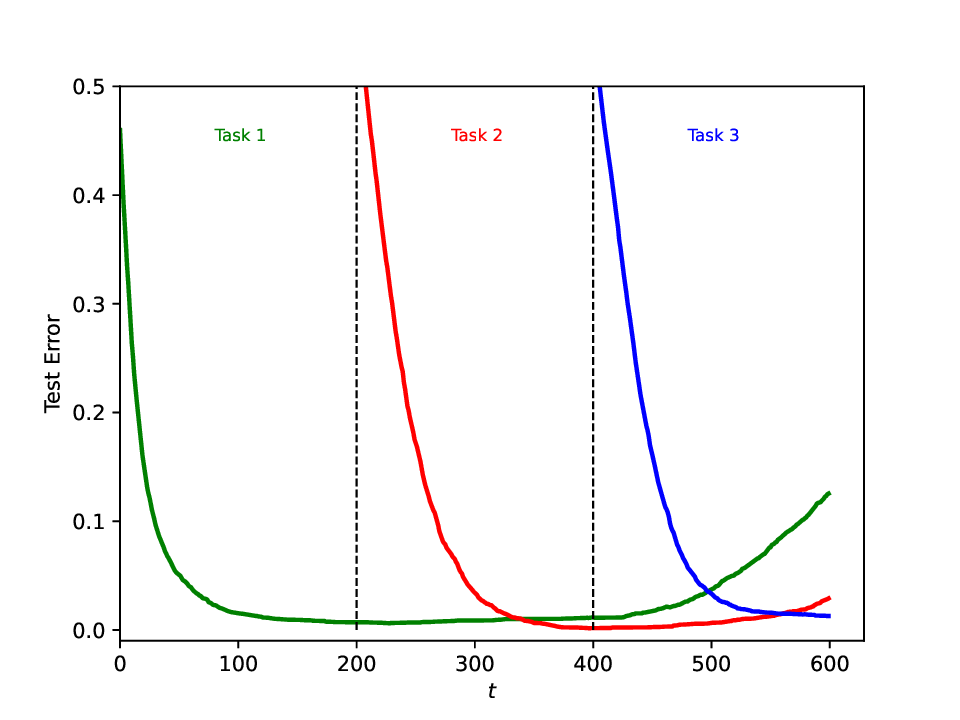}   
    {\tiny $n=5000$}
    \end{minipage}
\begin{minipage}{0.35\textwidth}
\centering
\includegraphics[width=\linewidth]{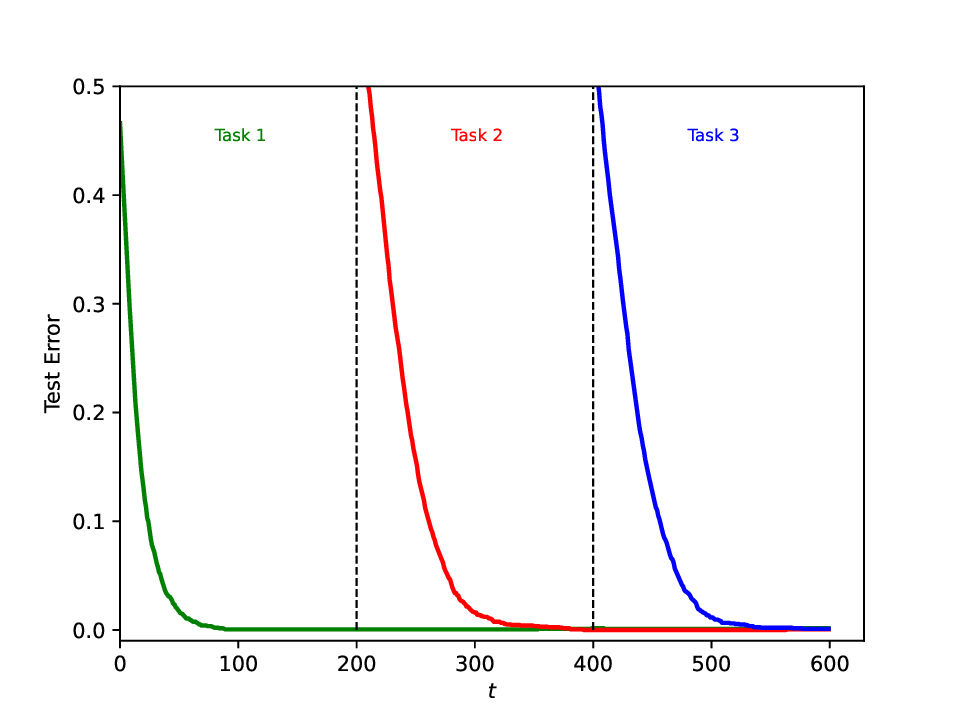}
        {\tiny $n=8000$}
   \end{minipage}
   \caption{Classification test error for each task vs iterations for the XOR cluster with $K=3$ tasks trained on a quadratic network with $n=5000$(left) and $n=8000$(right) training samples per task.}
   \label{fig:2}
       \vspace{-0.1in}
\end{figure}
\begin{figure}
    \centering
    \includegraphics[width=0.19\textwidth]{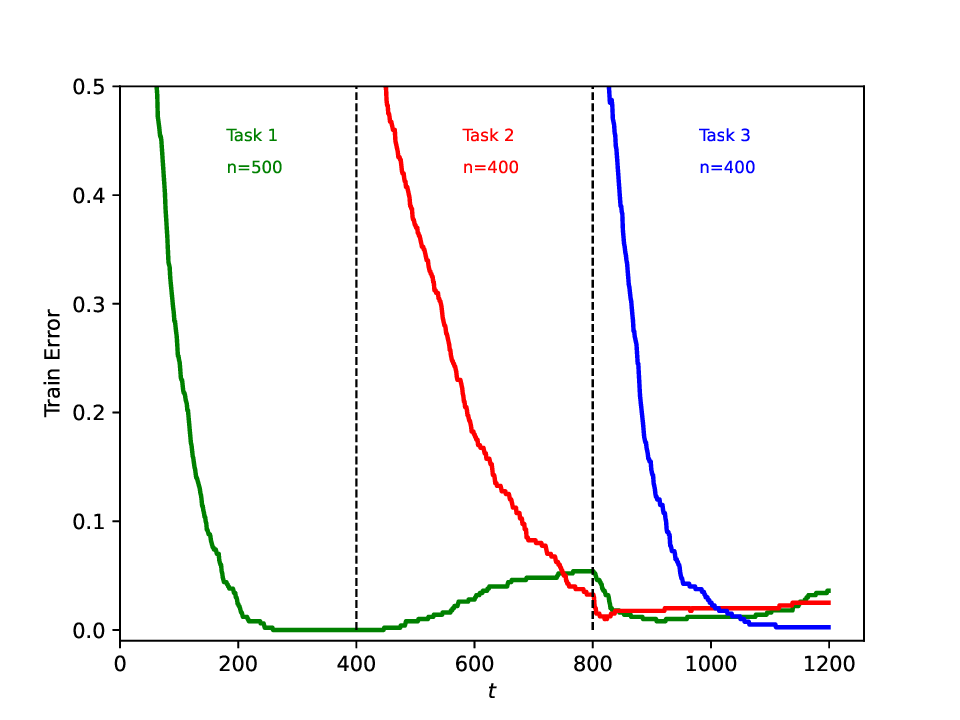}
    \includegraphics[width=0.19\textwidth]{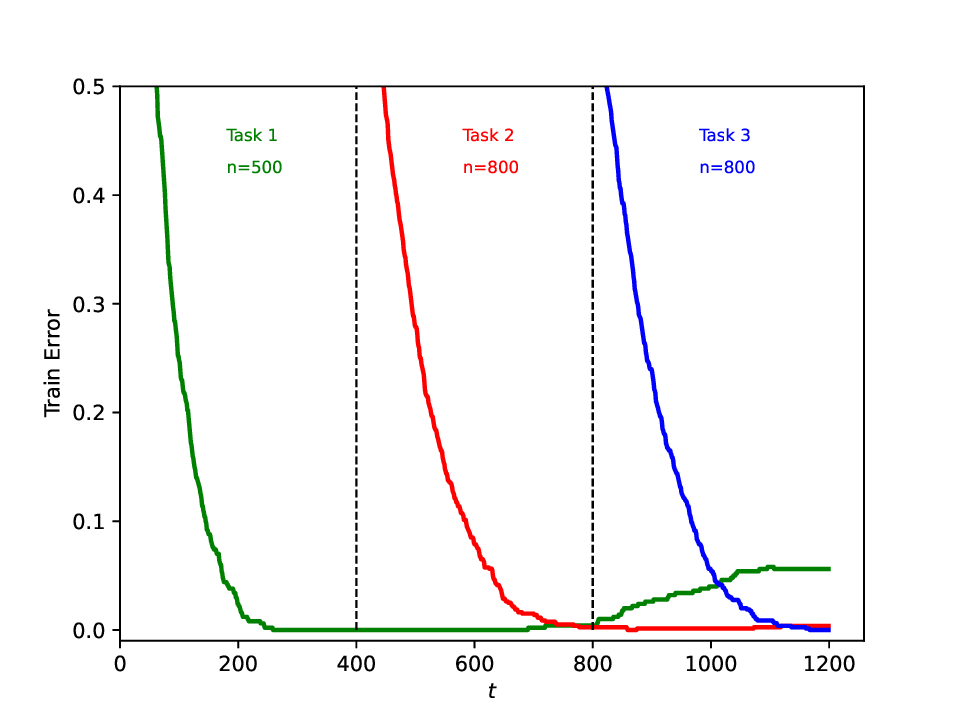}
    \includegraphics[width=0.19\textwidth]{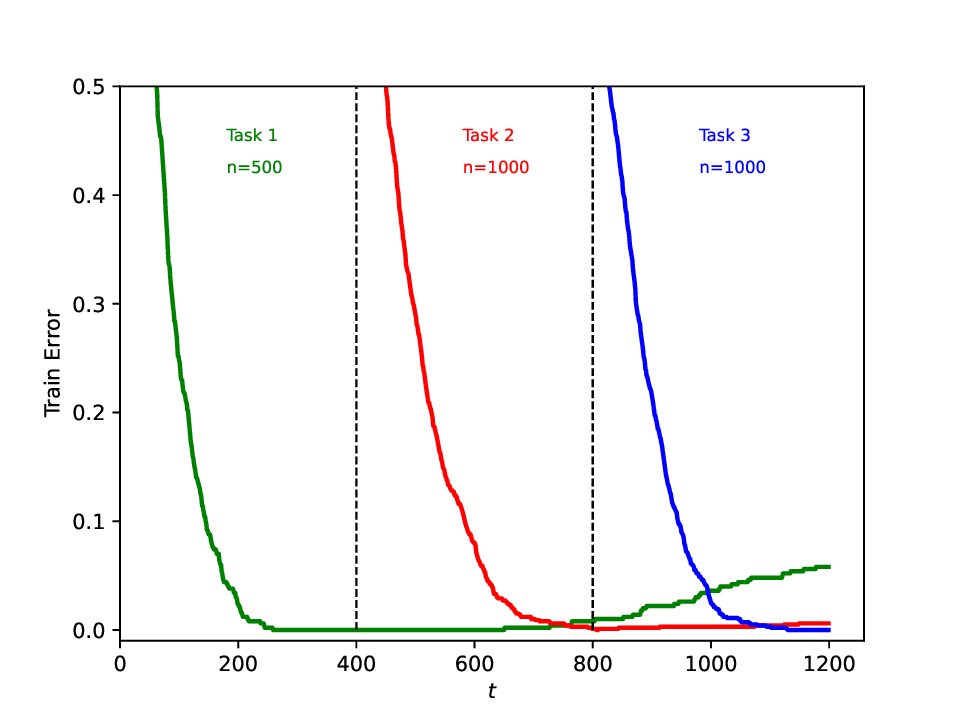}
    \includegraphics[width=0.19\textwidth]{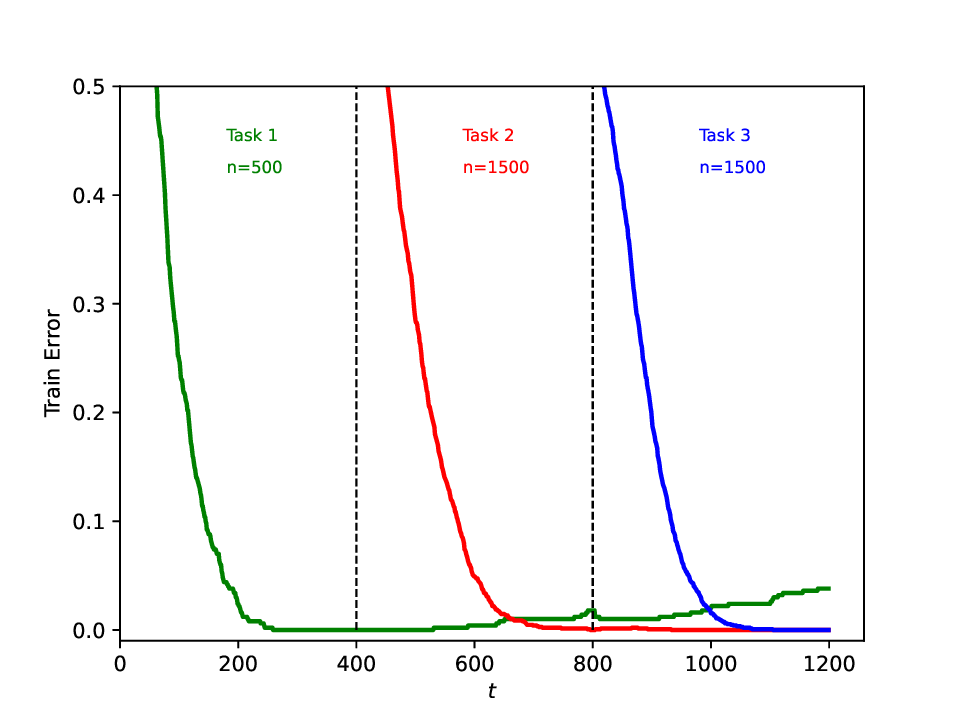}
    \includegraphics[width=0.19\textwidth]{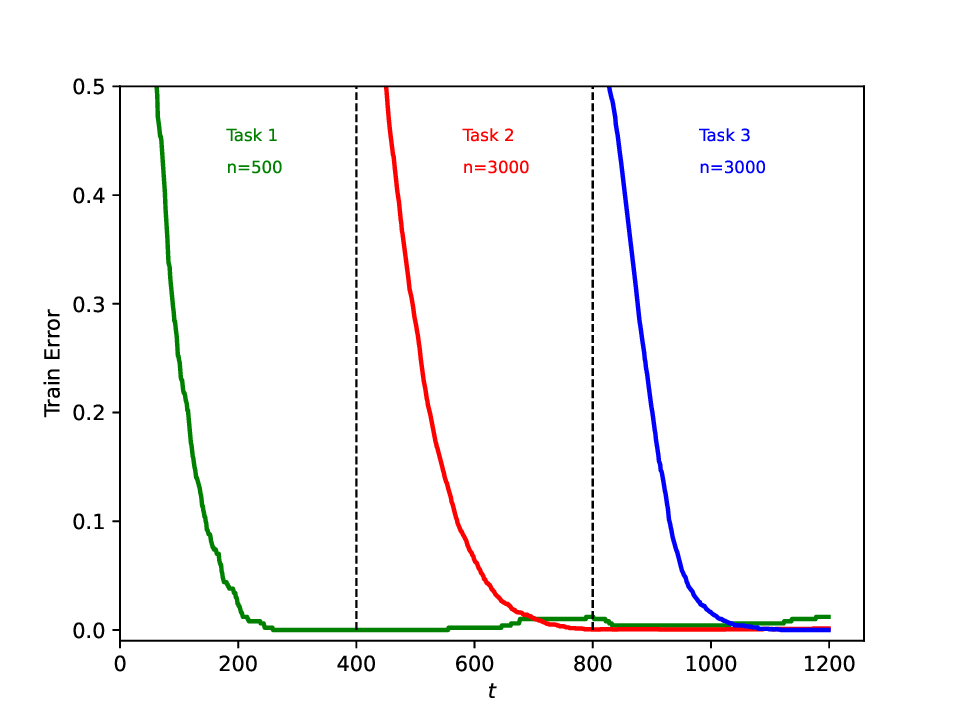}
    \caption{Repeating the experiment of Fig. \ref{fig:5} but with ReLU activation and logistic loss.}
    \label{fig:6}
    \vspace{-0.1in}
\end{figure}
\begin{figure}
    \centering
    
    \includegraphics[width=0.35\linewidth]{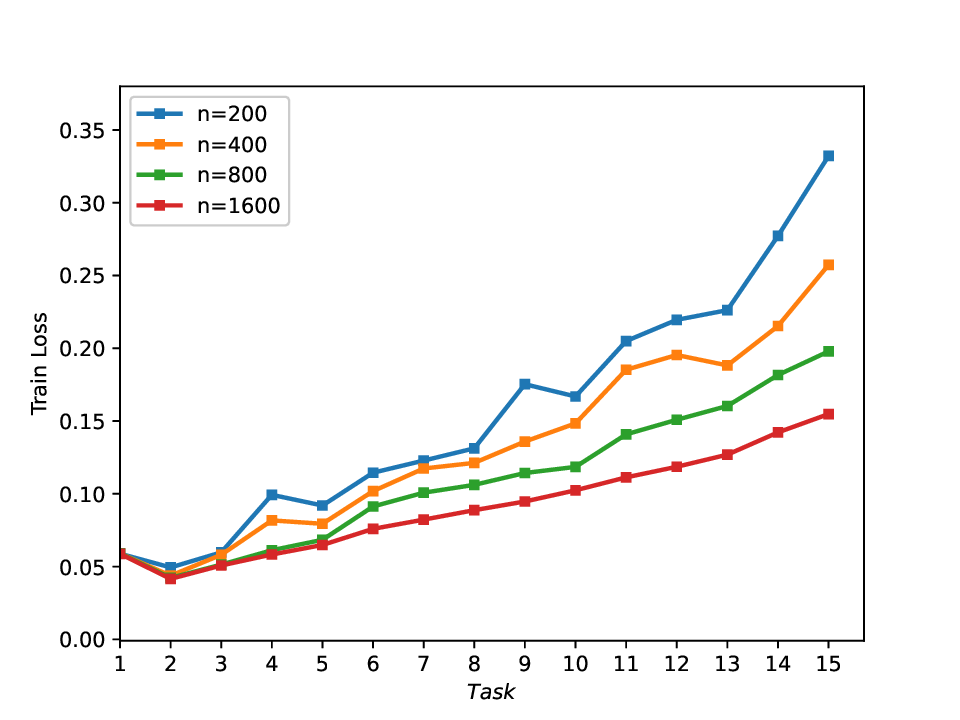}
    \hspace{0.5in}
    \includegraphics[width=0.35\linewidth]{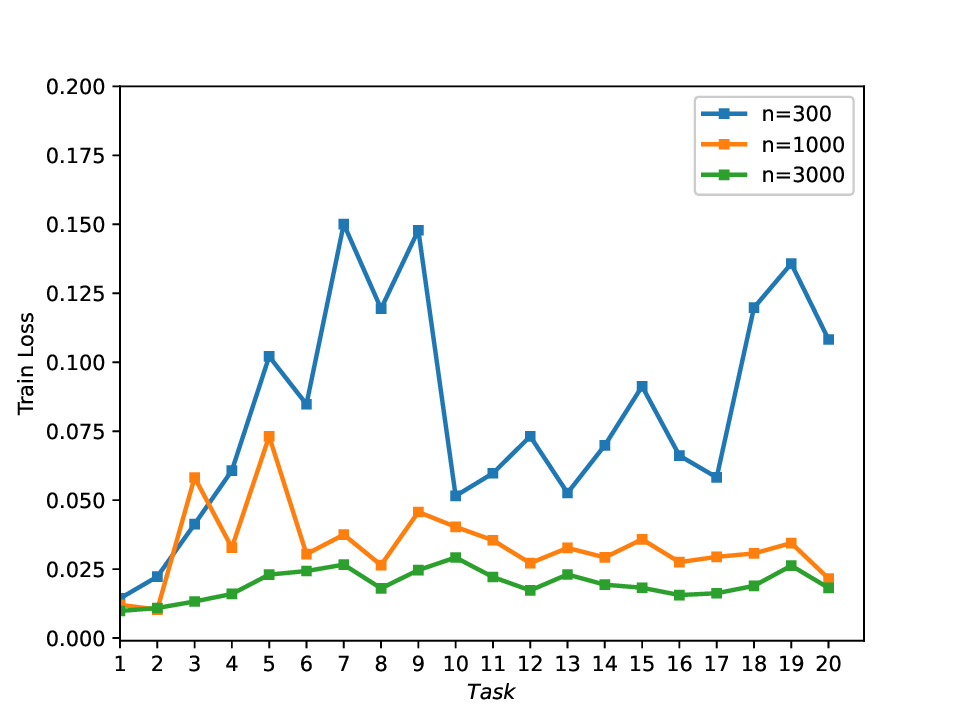}
    \caption{Train loss on task $1$ as a function of the task index (i.e., $\widehat{F}_1(w_k)$ vs.\ $k$) for $K=15$ and $K=20$ tasks with $n$ samples per task for the XOR cluster dataset. The left plot uses GELU activation with logistic loss, while the right plot uses quadratic activation with hinge loss.}
    \label{fig:7}
\end{figure}
\begin{figure}
    \centering
    
    \includegraphics[width=0.38\linewidth]{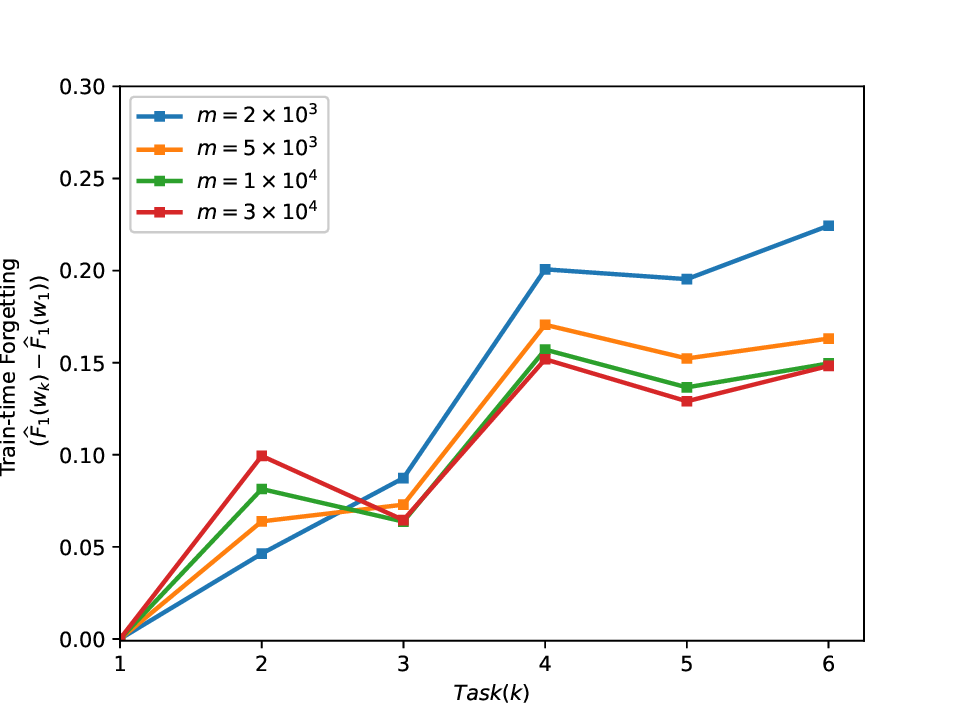}
    \caption{Train-time forgetting for task 1 vs task for $K=6$ total tasks of the XOR cluster dataset for different over-parameterization choices. We use GELU activations with logistic loss and train for $T=10^3$ iterations per task. The same trend observed in Fig.~\ref{fig:3} appears here as well. }
    \label{fig:9}
\end{figure}
\begin{figure}
    \centering
    \includegraphics[width=0.4\linewidth]{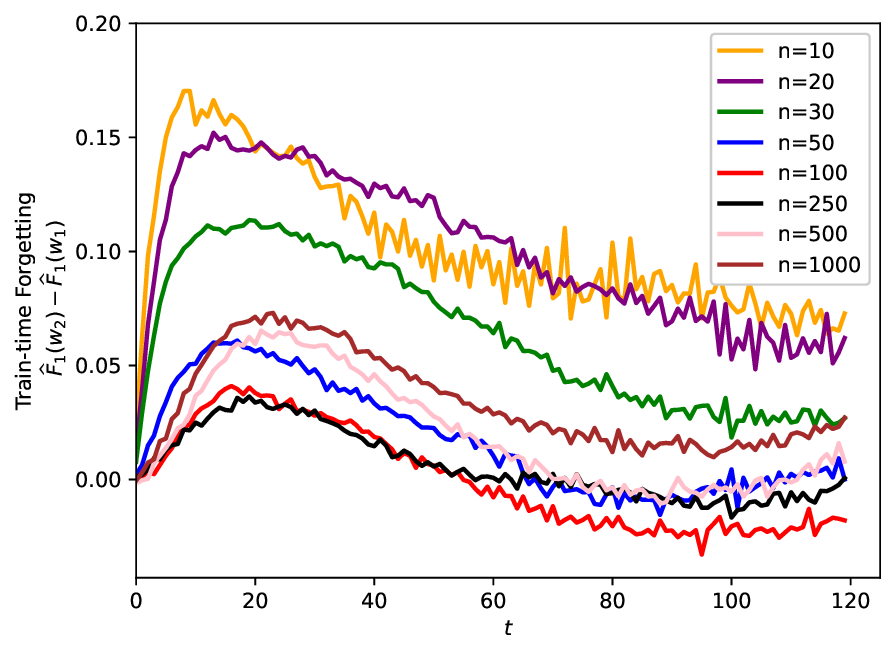}
    \hspace{0.5in}
    \includegraphics[width=0.4\linewidth]{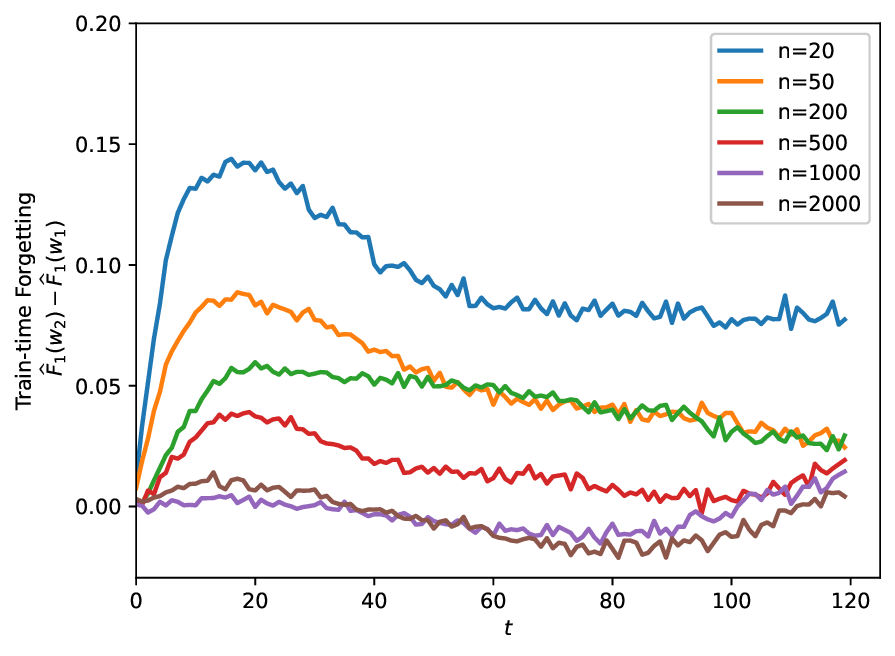}
\vspace{-0.1in}
    \caption{Train-time forgetting for task 1 based on $t$ for $K=2$ tasks with attention-based transformers with large neural net for the encoder and decoder parts(Left plot), and with small neural net(Right plot). Here we consider a tokenized multi-task Gaussian-mixture data where the goal is to find the binary label used for each context window. We fix $n=50$  for the first task and change $n$ for the second task. Note that our insights from previous theoretical and empirical results partially hold for this setting, especially for the transformer with smaller FFN layer. }
    \label{fig:LLMs}
\end{figure}

With an accurate approximation, we have
\begin{align*}
    \alpha_t \approx \frac{1-e^{-\eta\lambda t}}{\eta \lambda}.
\end{align*}
For small $t$, we have $\alpha_t\approx t$, whereas for large $t \approx T$, assuming $\lambda =c/T:$ we have $\alpha_t = \frac{T(1-e^{-\eta c})}{\eta c}$. Fig. \ref{fig:reg} illustrates $\alpha_T/T$ versus regularization parameter $\lambda$ and $\alpha_t$ based on $t$ for different regularization parameters. Note that larger values of $\lambda$ correspond to smaller values of $\alpha_t$ leading to weights moving shorter distances from their initialization points. As $\lambda\rightarrow 0$, we have $\alpha_T/T \rightarrow 1$, as the step-size for regularized problem converges to the step-size for unregularized one. 

\section{Additional Experiments and Implementation Details}\label{app:add_exp}
\paragraph{Experiments with transformers and GMM data.} We also conduct experiments on attention-based architecture in Fig. \ref{fig:LLMs}. We plot the train-time forgetting for task 1 for $K=2$ overall tasks for a transformer with feedforward neural networks in both the encoder and the decoder parts where we consider $m_{encoder}=60,m_{decoder}=30$ for the left plot and $m_{encoder}=m_{decoder}=10$ for the right plot. Results shown are averaged over 10 independent experiments. We remark that for the transformer with smaller size, we observe the similar behavior we observed for neural network experiments, i.e, increasing the sample-size for the second task can noticeably help with train-time forgetting of the first task. On the other hand, for the larger network, the behavior is more complex: increasing $n$ can help up to a certain threshold ($n\approx 250$), while above this threshold increasing $n$ hurts continual learning. While we hypothesize this behavior is due to the complex landscape of larger networks, a more thorough investigation is needed. 

\paragraph{Implementation Details for all experiments.} We include the actual values for different problem parameters used in the numerical experiments:

Fig. \ref{fig:1}: $n=2500$ (left), $n=5000$(right), for both plots we set $d=50, m=1000, \eta=2, T=200, \sigma=0.1/\sqrt{d}$ and use linear loss and quadratic activation.\\

Fig. \ref{fig:4}: $d=50, m=1000, \eta=2, T=200, \sigma=0.1/\sqrt{d}$.\\

Fig. \ref{fig:5}: GELU activation and logistic loss. $d=50, m=400, \eta=3, T=400, \sigma=0.1/\sqrt{d}$.\\

Fig. \ref{fig:8}: GELU activation, logisitc loss for both plots. We set $d=50, m=2000, \eta=30, T=2000, \sigma=0.2/\sqrt{d}$. Right: $n=2000,T=4000$.\\

Fig. \ref{fig:3}: We set $n=5000,d=75,\eta=5,T=200,\sigma=0.15/\sqrt{d}$ and vary $m=100,300,1000,3000,6000,10000$.\\

Fig. \ref{fig:MNIST}: Left and middle: $n=50$ samples for the first task, $n$ varying for the second task, GELU activation, Hinge loss, $d=784, m=500$. For the left plot $\eta=0.0003, T=50$ and for the right $\eta = 0.001, T=200$. The results are averages over 15 experiments. { Right: $T=2000, \eta=0.05,m=2000,K=4, $ ReLU activation and Logistic loss, Tasks are chosen from labels 1-4, 7-10 from the FMNIST dataset. Dataset is normalized to have $\ell_2$-norm at most 1. } \\

Fig. \ref{fig:9}: GELU activation, Logistic loss, $d=50, n=200, \eta=20, T=1000, \sigma=0.2/\sqrt{d}$\\

Fig. \ref{fig:2}: $n=5000$(left),$8000$(right),$d=75,m=1000,\eta=5,T=200,\sigma=0.15/\sqrt{d},$ linear loss, quadratic activation\\

Fig.\ref{fig:6}: Using the same setup as Fig. \ref{fig:5} but with ReLU activation and logistic loss. $d=50, m=1000, \eta=0.3, T=400, \sigma=0.1/\sqrt{d}$\\

Fig. \ref{fig:7}: GELU activation and logisitc loss, $\eta=30,m=400$ for the left plot, Quadratic activation and Hinge loss, $m=1000, \eta=4$ for the right plot. For both plots we set, $d=50, T=400, \sigma=0.1/\sqrt{d}$.\\

Fig. \ref{fig:LLMs}: We use hinge-loss, ReLU activation, and the transformer is one-layer with one head, context length = 10, the hidden-layer size of the feedforward neural is 60 and for the decoder is 30. In the right plot, both hidden-layer sizes are reduced to 10 $\sigma=0.1/\sqrt{d}, \mu^k=\mathbf{e}_k/\sqrt{d}$ for $k\in[2]$, $\eta =0.01$. 
\end{document}